\documentclass{article} %

\usepackage[margin=1in]{geometry}
\usepackage{times}
\usepackage{natbib}
\setcitestyle{authoryear,round,citesep={;},aysep={,},yysep={;}}
\usepackage{mystyle}
\usepackage[belowskip=0pt]{caption}
\usepackage[utf8]{inputenc} %
\usepackage[T1]{fontenc}    %
\usepackage{xurl}            %

\usepackage{hyperref}
\usepackage{booktabs}       %

\usepackage{amsfonts}       %
\usepackage{nicefrac}       %
\usepackage{xcolor}         %
\usepackage{comment}
\usepackage{subcaption}
\allowdisplaybreaks

\usepackage{doi}

\newcommand{\human}{\text{\normalfont\scshape h}}
\newcommand{\llm}{\text{\normalfont\scshape l}}

\newcommand{\df}{\mathrm{df}}

\newcommand{\largecdot}{\text{\raisebox{-0.5ex}{\scalebox{1.5}{$\cdot$}}}}

\newcommand{\row}{\operatorname{row}}
\newcommand{\col}{\operatorname{col}}
\newcommand{\logit}{\operatorname{logit}}

\newcommand{\methHuman}{\textnormal{Human}}
\newcommand{\methPooled}{\textnormal{Pooled}}
\newcommand{\methConsensus}{\textnormal{DIAL-Anc}}
\newcommand{\methAda}{\textnormal{DIAL-Ada}}
\newcommand{\methDIALmu}{\methAda}   %
\newcommand{\methDIALW}{\methAda\ensuremath{\,(W)}}
\newcommand{\methDIALnoPos}{\textnormal{DIAL-noPos}}
\newcommand{\methAtC}{\textnormal{AtC}}

\newcommand{\titletext}{
DIAL: Position-Debiased LLM Judges with\\ Adaptive Human Preference Calibration
}

\title{\titletext}

\author{
Zesheng Cai\textsuperscript{1} \quad Yingqi Fan\textsuperscript{1} \quad Sichang Chen\textsuperscript{2} \quad Jin-Hong Du\textsuperscript{1}\thanks{Corresponding author: \texttt{jinhongd@hku.hk}. Code and data are available at \url{https://github.com/JinHongDu-Lab/DIAL}.}\\[1ex]
\textsuperscript{1}The University of Hong Kong \qquad \textsuperscript{2}Sun Yat-sen University
}
\date{}

\hypersetup{
  pdftitle={DIAL: Position-Debiased LLM Judges with Adaptive Human Preference Calibration},
  pdfauthor={Zesheng Cai, Yingqi Fan, Sichang Chen, Jin-Hong Du}
}

\begin{document}

\maketitle

\begin{abstract}
Large language models (LLMs) as a judge enable scalable evaluation, but their judgments can be sensitive to response order and, even after removing such position effects, can still diverge systematically from human preferences.
We introduce DIAL, a unified framework that combines abundant LLM comparisons with limited human comparisons to separate judge-specific position effects, learn shared structure in position-debiased LLM preferences, and adaptively calibrate that structure toward the human preference target.
Theoretically, we study three aspects of DIAL: (i) identification of latent LLM preferences, position effects, and human calibration; (ii) adaptive estimation that balances LLM anchoring against limited human evidence; and (iii) fixed-weight uncertainty quantification for the calibrated human preference.
Empirically, we evaluate position debiasing and human alignment separately in controlled simulations and on three human-preference benchmarks, showing that DIAL remains robust to unbalanced response order, achieves strong human-aligned rankings with limited labels, and adapts toward human evidence when LLM information is imperfect.
Our real-data study collects over $410$K judgments from $21$ LLM judges in both display orders, providing a resource for future studies of LLM-judge bias, heterogeneity, and human alignment.
\end{abstract}

\section{Introduction}
\label{sec:introduction}

LLMs are increasingly used as scalable judges of model-generated responses, providing a practical complement to costly human evaluation \citep{zheng2023mtbench}.
However, their pairwise judgments can be sensitive to response display order, so observed preferences may partly reflect presentation instead of response quality \citep{wang2024fair,shi2025judging}.
Moreover, LLM and human evaluations can differ systematically \citep{chen2024humans,gao2025reevaluating}, so removing position effects alone does not make an LLM judge human-aligned.
Reliable human-targeted LLM evaluation must therefore address two distinct but coupled challenges: separating systematic position effects from underlying LLM preferences, and accounting for discrepancies between LLM and human preferences when human judgments are limited (\Cref{fig:overview}(a)).
\Cref{fig:overview}(b) illustrates both effects empirically across LLM judges.

Existing work has studied these challenges largely along separate lines.
Position-bias studies document order sensitivity \citep{wang2024fair,shi2025judging} and mitigate it via response swapping, verdict correction, or judge training \citep{wang2024fair,yang2025any,zhou2026toward,yang2026fairjudge}, while human-supervised evaluation uses limited human labels to learn evaluation criteria, calibrate LLM-derived features, or improve human-centered evaluation \citep{liu2024hdeval,kola2026saja,boyeau2025autoeval,xie2026atc}.
What remains less understood is how to connect stages in multi-judge evaluation: recover position-debiased LLM preferences, learn shared structure across judges, and calibrate it toward latent human preferences using limited human comparisons.

We introduce \textbf{DIAL} (\textbf{D}ebiasing and human-preference \textbf{I}nformed \textbf{A}lignment for \textbf{L}LM judges), a unified framework for position debiasing and human-preference calibration in multi-judge evaluation.
DIAL separates judge-specific position effects from latent LLM preferences, learns shared structure across the recovered position-debiased preferences, and calibrates that structure toward a latent human-preference target using limited human comparisons.
The resulting representation reduces unrestricted item-level human-preference estimation to calibration along learned low-dimensional preference directions.
Adaptive weighting controls reliance on LLM evidence, interpolating between human-only and LLM-anchored estimation; \Cref{fig:overview}(c) summarizes the DIAL framework.

\begin{figure}[!t]
    \centering
    \includegraphics[width=\linewidth]{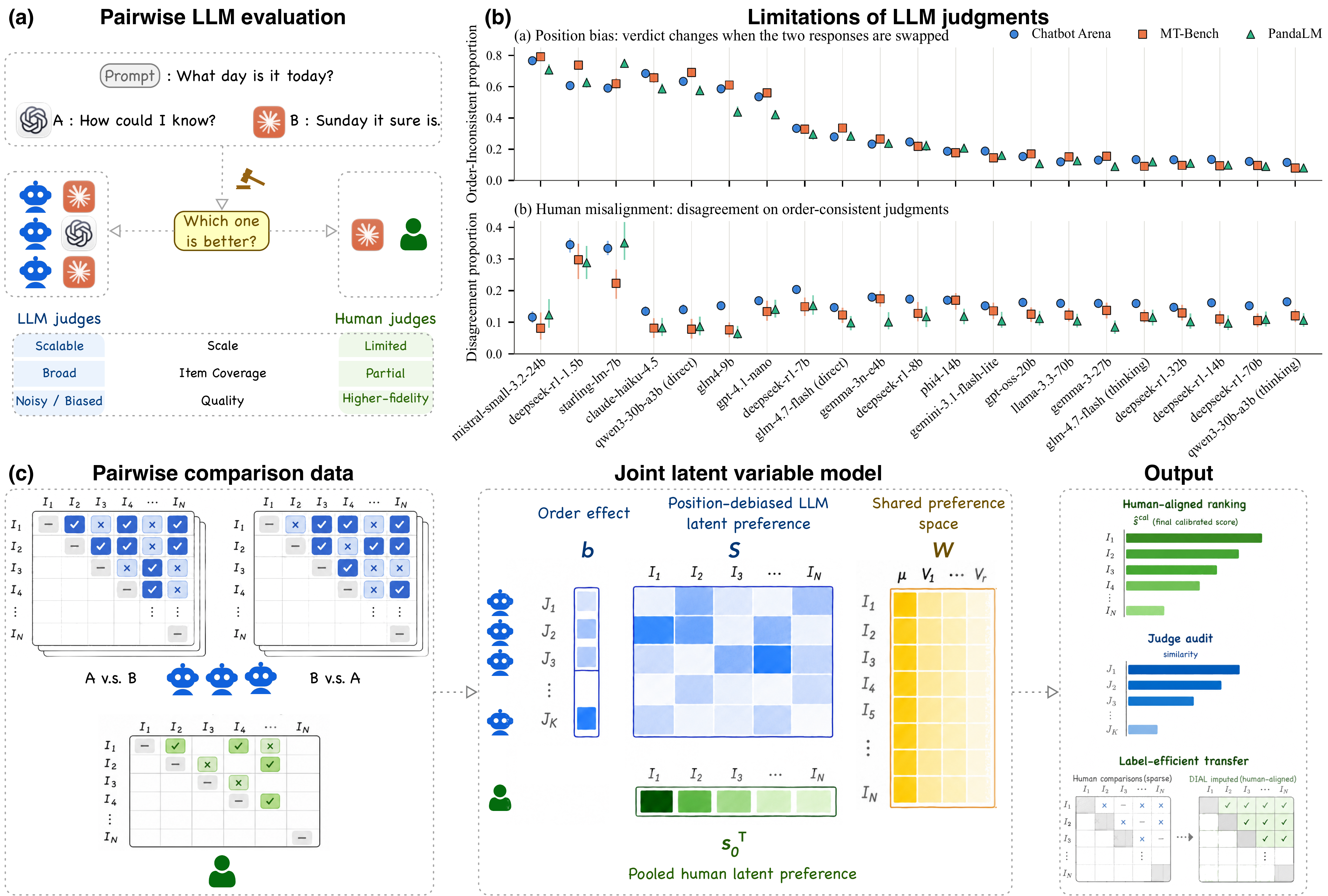}
\caption{Overview of DIAL framework.
(a) Pairwise evaluation combines abundant, scalable LLM judgments with limited, higher-fidelity human judgments.
(b) Empirical examples illustrate two distinct challenges: position bias and human-preference misalignment.
(c) DIAL jointly models LLM and human comparisons to separate judge-specific position effects from latent LLM preferences, learn shared preference structure across judges, and adaptively calibrate toward human preference.}
    \label{fig:overview}
\end{figure}

\paragraph{Contributions.}
Our contributions are threefold.
(i) We characterize identification of judge-specific position effects and latent LLM preferences via a graph-cycle criterion, and identify the human target through the learned LLM preference space (\Cref{prop:ident-LLM,prop:human_identification}).
(ii) We characterize the first-order approximation-estimation tradeoff of LLM calibration (\Cref{thm:population-risk-tradeoff}), derive fixed-weight inference and a local-misalignment shrinkage operator (\Cref{thm:weighted-asymptotic-normality} and \Cref{prop:local-misalignment}), and develop GACV for tuning the human-LLM weight using leave-one-human-out loss (\Cref{thm:gacv}).
(iii) Across synthetic and three real benchmarks, DIAL remains robust to display imbalance and improves human-label efficiency under limited human supervision, including regimes where unrestricted human-only BTL estimation is unstable or nonfinite.
Adaptive updating becomes especially valuable when the learned LLM representation is data-limited or imperfect.
Our study also contributes over $410$K judgments from $21$ LLM judges across three benchmarks in both display orders.

\section{Related Work}\label{sec:related-work}

\paragraph{Position Bias and Paired-Comparison Models.}
LLM judges exhibit systematic biases including position, verbosity, and self-enhancement biases \citep{zheng2023mtbench}: pairwise verdicts can change when the two responses are swapped \citep{wang2024fair}, and order sensitivity varies across judges, tasks, and comparison difficulty \citep{shi2025judging}.
Existing mitigation either corrects individual verdicts at evaluation time \citep{yang2025any} or modifies the judge through training \citep{zhou2026toward,yang2026fairjudge}.
The Bradley-Terry-Luce (BTL) model \citep{bradley1952rank,luce1959individual} can incorporate within-pair order effects \citep{davidson1977order,augustin2004order}, treating display order as a comparison-varying covariate, so that identification becomes a property of the augmented comparison design, as in ranking models with comparison-varying covariates \citep{dong2026dynamic} and recent work on identifiability of LLM-judge biases \citep{xu2026debias}.
When position bias is not of concern, user-specific accuracy \citep{jin2020rank}, judge-specific discrimination \citep{xu2026judgeaware} and consensus-disagreement structure \citep{yu2026heterogeneous} are studied for multiple~judges.

\paragraph{Human Preference Aggregation and Calibration.}
Classical latent-variable approaches model annotator noise and reliability \citep{dawid1979maximum,chen2013crowdbt}, while empirical studies document systematic differences between human and LLM judges \citep{chen2024humans,gao2025reevaluating} and substantial uncertainty within human evaluation itself \citep{elangovan2025beyond}.
Recent work uses limited human labels to learn evaluation criteria \citep{liu2024hdeval}, to calibrate features extracted from LLM evaluations \citep{kola2026saja}, or to combine abundant automatic labels with a smaller human sample when estimating aggregate evaluation functionals such as win rates, benchmark scores, or ranking metrics \citep{boyeau2025autoeval,chen2026efficient,lee2026correctly,divekar2026precise}.
DIAL instead targets latent human preferences and asks how a position-debiased multi-LLM-judge representation can improve estimation from limited human comparisons, separating order-effect estimation from human alignment and propagating uncertainty from both stages.
See \Cref{app:related-comparison} for more detailed comparisons with related methods.

\section{Position-debiased LLM preference representation}
\label{sec:llm-position}

We write $s_{\human}\in\RR^N$ for pooled-human scores and $s_k\in\RR^N$, $k=1,\ldots,K$, for the LLM judges' scores over the $N$ items, and collect the LLM scores and order effects in $S=[s_1,\ldots,s_K]^\top\in\RR^{K\times N}$ and $b=(b_1,\ldots,b_K)^\top\in\RR^K$.
Because BTL scores are invariant to common additive shifts, we fix centered representatives throughout.

\begin{condition}[Score normalization]
\label[condition]{cond:score-normalization}
    For $k\in\{\human,1,\ldots,K\}$, the latent scores satisfy $\one_N^\top s_k=0$.
\end{condition}

\subsection{Judge-specific order-effect BTL and position debiasing}

For each unordered pair $\{i,j\}$, fix the canonical orientation $i<j$ and let $A\in\{-1,1\}$ indicate whether item $i$ is displayed first ($A=1$) or second ($A=-1$).
For judge $k$, let $\Omega_k=\{(i,j,a): i<j,\ a\in\{-1,1\},\ n_{kij}^{(a)}>0\}$ with $m_k=|\Omega_k|$, let $\mathcal{G}_k=([N],\mathcal{E}_k)$ be the comparison graph in which $\{i,j\}\in\mathcal{E}_k$ whenever $(i,j,a)\in\Omega_k$ for at least one $a$, and let $Y_{kij}^{(a)}$ be the number of times judge $k$ prefers item $i$ among $n_{kij}^{(a)}$ comparisons under order $A=a$.
For $(i,j,a)\in\Omega_k$, assume
\begin{equation}
Y_{kij}^{(a)}
\sim
\mathrm{Binomial}\left(n_{kij}^{(a)},p_{kij}^{(a)}\right),
\qquad 
\logit\left(p_{kij}^{(a)}\right)
=
s_{ki}-s_{kj}+ab_k,
\label{eq:binomial-model-llm}
\end{equation}
where \(p_{kij}^{(a)}\) is the order-specific preference probability and $\sigma(x)=(1+e^{-x})^{-1}$ denotes the logistic function.
Conditional on the comparison design, the individual LLM comparison outcomes underlying these counts are mutually independent across judges, pairs, display orders, and repetitions.
Here, $b_k>0$ indicates preference for the first-displayed response and $b_k<0$ preference for the second; exponentiating the score and order-effect terms recovers the classical within-pair order-effect BTL parameterization \citep{davidson1977order,augustin2004order}, here applied judge-specifically to the response-order sensitivity documented for LLM evaluators \citep{wang2024fair,shi2025judging}.

Fix an arbitrary enumeration $\Omega_k=\{(i_t,j_t,a_t)\}_{t=1}^{m_k}$.
The item-difference design matrix $D_k\in\{-1,0,1\}^{m_k\times N}$ has rows $(D_k)_{t\largecdot}=(e_{i_t}-e_{j_t})^\top$, and the order-indicator vector $z_k\in\{-1,1\}^{m_k}$ has entries $(z_k)_t=a_t$.

\begin{condition}[LLM comparison design]
\label[condition]{cond:llm-design}
    For every $k=1,\ldots,K$, the augmented comparison design $[D_k,z_k]$ has rank $N$.
\end{condition}

\begin{proposition}[Identification of latent scores and order effect]
\label[proposition]{prop:ident-LLM}
    Under \eqref{eq:binomial-model-llm} and \Cref{cond:score-normalization}, for any judge $k$ the parameter $(s_k,b_k)$ is identifiable if and only if $\rank([D_k,z_k])=N$, equivalently if and only if $\mathcal{G}_k$ is connected and $z_k\notin\col(D_k)$.
    Consequently, under \Cref{cond:llm-design}, $(S,b)$ is identifiable.
\end{proposition}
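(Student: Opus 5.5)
The plan is to reduce identifiability to injectivity of a linear map on the constrained parameter space. The binomial likelihood in \eqref{eq:binomial-model-llm} depends on $(s_k,b_k)$ only through the vector of success probabilities $(p_{kij}^{(a)})_{(i,j,a)\in\Omega_k}$. Since $n_{kij}^{(a)}>0$ on $\Omega_k$, the binomial family is identifiable in its success probability. The logit is a bijection, so two parameters induce the same distribution if and only if they give the same vector $\eta=D_k s_k+z_k b_k=[D_k,z_k](s_k^\top,b_k)^\top\in\RR^{m_k}$. Hence $(s_k,b_k)$ is identifiable exactly when the map $(s,b)\mapsto [D_k,z_k](s^\top,b)^\top$ is injective on $\mathcal{S}=\{(s,b):\one_N^\top s=0\}$, which is a subspace of dimension $N$. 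By linearity this is equivalent to $\ker([D_k,z_k])\cap\mathcal{S}=\{0\}$.

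Next I will show this kernel condition is equivalent to $\rank([D_k,z_k])=N$. Every row of $D_k$ has the form $(e_i-e_j)^\top$, so $D_k\one_N=0$. Thus $(\one_N,0)\in\ker([D_k,z_k])$ always, and the kernel has dimension at least $1$. That vector is not in $\mathcal{S}$, and $\RR^{N+1}=\mathcal{S}\oplus\operatorname{span}\{(\one_N,0)\}$.

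If the rank is $N$, the kernel is exactly $\operatorname{span}\{(\one_N,0)\}$, so it meets $\mathcal{S}$ trivially. Conversely, suppose the rank is below $N$. Then the kernel has dimension at least $2$. Its intersection with the $N$-dimensional subspace $\mathcal{S}$ in $\RR^{N+1}$ then has dimension at least $2+N-(N+1)=1$, so it is nontrivial and identifiability fails.

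For the graph characterization, I will use the standard fact that $D_k$ is the (oriented) edge incidence matrix of $\mathcal{G}_k$ with possibly repeated rows. Its rank is therefore $N-c$, where $c$ is the number of connected components of $\mathcal{G}_k$; its kernel is spanned by the component indicator vectors.

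Appending the column $z_k$ raises the rank by at most one, and by exactly one iff $z_k\notin\col(D_k)$. So $\rank([D_k,z_k])=N-c+\mathbb{1}\{z_k\notin\col(D_k)\}$. This equals $N$ iff either $c=1$ and $z_k\notin\col(D_k)$, or $c=2$ and $z_k\notin\col(D_k)$.

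This is the main obstacle: the second case must be excluded for the stated equivalence to hold. Suppose $c\ge2$, and take a component $C$ with indicator $\one_C$. Then $(\one_C,0)\in\ker$, which together with $(\one_N,0)$ gives kernel dimension at least $2$, so the rank is at most $N-1$. The careful argument is to directly exhibit two independent kernel vectors whenever $c\ge 2$.

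With the kernel argument in hand, the rank-$N$ condition forces $c=1$, i.e.\ $\mathcal{G}_k$ connected. Given connectivity, $\rank(D_k)=N-1$, so rank $N$ holds iff $z_k\notin\col(D_k)$.

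Finally, the parameters of different judges enter disjoint likelihood factors, by the independence assumption. So $(S,b)$ is identifiable iff each $(s_k,b_k)$ is, and \Cref{cond:llm-design} gives identifiability of $(S,b)$.
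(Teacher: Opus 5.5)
Your proof is correct and follows the paper's argument: $(\one_N^\top,0)^\top$ always lies in the null space, and rank $N$ holds if and only if the kernel is exactly its span. Your dimension count $2+N-(N+1)\ge1$ is a clean version of the paper's ``center an additional null vector'' step, and the incidence-matrix rank $N-c$ finishes the argument. The one slip is the ``obstacle'', which is spurious: $\rank([D_k,z_k])$ equals $N-c+1$ when $z_k\notin\col(D_k)$ and $N-c$ otherwise. Rank $N$ therefore already forces $c=1$ (the case $c=2$ gives $N-1$, not $N$), so your extra kernel argument with $(\one_C,0)$ is valid but unnecessary.
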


This rank characterization specializes augmented paired-comparison identification with comparison-varying covariates \citep{dong2026dynamic} to the observed order indicator; unlike unrestricted LLM-judge bias terms that can be confounded with item effects \citep{xu2026debias}, $b_k$ is separable from $s_k$ exactly when $z_k\notin\col(D_k)$.
Because $\col(D_k)$ is the space of potential differences on the comparison multigraph, $z_k\notin\col(D_k)$ holds if and only if some cycle has nonzero signed order sum (\Cref{cor:cycle-criterion}), which is checkable in one pass; a pair shown under both orders is the length-two cycle, and the criterion also covers designs in which every pair appears in one order only.
\Cref{cond:llm-design} is an identification condition on the comparison support and does not guarantee a finite empirical BTL estimator, which may fail under separation \citep{ford1957solution,hunter2004mm}; asymptotically, the same rank condition applies to the limiting support $\{(i,j,a):\rho_{kij}^{(a)}>0\}$ with within-judge limiting sampling proportions $\rho_{kij}^{(a)}:=\lim n_{kij}^{(a)}/n_k$ and sample size $n_k:=\sum_{(i',j',a')\in\Omega_k}n_{ki'j'}^{(a')}$.

After separating the order effect, we define the position-debiased preference probability $p^{\mathrm{deb}}_{kij}:=\sigma(s_{ki}-s_{kj})$, so that judge $k$'s position-debiased ranking is read directly from $s_k$.
This differs from the order-averaged probability $p^{\mathrm{avg}}_{kij}:=\{p_{kij}^{(1)}+p_{kij}^{(-1)}\}/2$, whose log-odds are a nonlinear odd function of $s_{ki}-s_{kj}$ whenever $b_k\neq0$, so they are generally not representable by a single BTL score vector and BTL fitting to order-averaged data is generically misspecified (\Cref{app:debias-vs-average}).
We therefore carry the recovered judge-specific $s_k$, rather than probability-averaged judgments, into the cross-judge representation and subsequent human calibration.

\subsection{Structured representation across LLM judges}
\label{subsec:structured-llm}

Following \citet{yu2026heterogeneous}, we adopt its consensus-disagreement decomposition for the position-debiased LLM score matrix.

\begin{condition}[Structured LLM representation]
\label[condition]{cond:llm-structure}
    The score matrix $S$ admits the representation
    \begin{equation}
    S
    =
    \gamma\mu^\top+UV^\top,
    \label{eq:S-llm}
    \end{equation}
    where $\gamma\in\RR^K$, $\mu\in\RR^N$, $U\in\RR^{K\times r}$, and $V\in\RR^{N\times r}$ satisfy
    \begin{enumerate}[(i), ref=\roman*]
        \item \label{eq:structured-llm-item-constraints} \(\one_N^\top\mu=0\), \(V^\top\one_N=0\), \(\mu^\top V=0\), \(N^{-1}\|\mu\|_2^2=1\) and \(N^{-1}V^\top V=I_r\).

        \item \label{eq:structured-llm-judge-constraints} \(\one_K^\top\gamma>0\), \(\one_K^\top U=0^\top\) and $\rank(U)=r$.
    \end{enumerate}
\end{condition}

Here $\mu$ is the common LLM preference direction, $\gamma_k$ is judge $k$'s loading on it, and $UV^\top$ is rank-$r$ heterogeneous variation whose item-side directions, the columns of $V$, are those along which the judges systematically disagree.
These restrictions are compatible with \Cref{cond:score-normalization} and require $r\leq K-1$ and $r+1\leq N-1$.
Define $W:=[\mu,V]\in\RR^{N\times(r+1)}$, so that \eqref{eq:structured-llm-item-constraints} reads $W^\top\one_N=\zero_{r+1}$ and $N^{-1}W^\top W=I_{r+1}$: fixing the scale on the item side rather than the judge side orthogonalizes the basis of $\col(W)$, gives $\|Wc\|_2^2=N\|c\|_2^2$, and confines the item side to a compact set.
For human calibration in \Cref{sec:human-alignment}, let $M=\mu\in\RR^{N\times1}$ with $d=1$, or $M=W\in\RR^{N\times(r+1)}$ with $d=r+1$, corresponding to consensus-only and full-space calibration, respectively.

\Cref{prop:ident-LLM} identifies $(S,b)$, and the constraints of \Cref{cond:llm-structure} then give $\mu=\sqrt N\,S^\top\one_K/\|S^\top\one_K\|_2$, $\gamma=N^{-1}S\mu$, $UV^\top=S-\gamma\mu^\top$, and $\col(V)=\operatorname{row}(UV^\top)$, so $\mu$, $\gamma$, $UV^\top$, $\col(V)$, and $\col(W)$ are all identified.
The factors $U$ and $V$ themselves are not, because $UV^\top=(UQ)(VQ)^\top$ for every orthogonal $Q\in\RR^{r\times r}$; only the subspace $\col(V)$ is invariant.
Conditioning on $\col(W)$, the shared structure reduces judge $k$'s design requirement from $\rank([D_k,z_k])=N$ to $\rank([D_kW,z_k])=r+2$ (\Cref{prop:ident-lowrank}).
Since $\rank(W)=r+1$ and $\col(W)=\operatorname{span}(\mu)\oplus\col(V)$, both calibration choices are defined by the position-debiased LLM representation.

\section{Human preference alignment with limited supervision}
\label{sec:human-alignment}

\subsection{Human calibration and identification}
\label{subsec:human-model}

For pooled human judgments, recall that $s_{\human}\in\RR^N$ denotes the centered latent human preference score vector.
For each pair $i<j$, let $Y_{\human ij}$ be the number of human comparisons preferring item $i$ among $n_{\human ij}$ comparisons, define $n_{\human}:=\sum_{i<j}n_{\human ij}$ and $\Omega_{\human}:=\{(i,j):i<j,\ n_{\human ij}>0\}$, and let $\mathcal G_{\human}$ denote the corresponding comparison graph.
We assume
\begin{equation}
Y_{\human ij}
\sim
\mathrm{Binomial}\!\left(n_{\human ij},p_{\human ij}\right),
\qquad
\logit\!\left(p_{\human ij}\right)=s_{\human i}-s_{\human j},
\qquad
(i,j)\in\Omega_{\human}.
\label{eq:binomial-model-human}
\end{equation}
Recent human-supervised LLM evaluation learns evaluation criteria or calibrates LLM-derived features from limited human labels \citep{liu2024hdeval,kola2026saja}; here the target is instead the latent pairwise human score $s_{\human}$.
Treating human annotations as noisy observations of this target, for a calibration basis $M$ defined in \Cref{subsec:structured-llm}, we impose the working restriction
\begin{equation}
s_{\human}=Mc_{\human},
\qquad
c_{\human}\in\RR^d.
\label{eq:human-subspace-model}
\end{equation}
This reduces human estimation from $N-1$ free score coordinates to $d$ calibration coordinates.
The restriction is an alignment assumption; $Mc_{\human}$ is basis-invariant, although for $M=W$ its coordinate $c_{\human}$ depends on the normalized basis for $\col(V)$.

Enumerate $\Omega_{\human}=\{(i_t,j_t)\}_{t=1}^{m_{\human}}$, $m_{\human}=|\Omega_{\human}|$, and let $D_{\human}\in\{-1,0,1\}^{m_{\human}\times N}$ have rows $(e_{i_t}-e_{j_t})^\top$.

\begin{condition}[Human comparison design]
\label[condition]{cond:human-design}
    $\rank(D_{\human}M)=d$.
\end{condition}

\begin{proposition}[Identification under limited human comparisons]
\label[proposition]{prop:human_identification}
    Under model \eqref{eq:binomial-model-llm}, \eqref{eq:binomial-model-human}, and \eqref{eq:human-subspace-model}, \Cref{cond:score-normalization,cond:llm-design,cond:llm-structure,cond:human-design}, $c_{\human}$ is identifiable relative to the chosen basis and $s_{\human}=Mc_{\human}$ is identifiable and basis-invariant.
    The rank condition is equivalent to $\ker(D_{\human})\cap\col(M)=\{0\}$; connectivity of $\mathcal G_{\human}$ is sufficient but not necessary.
\end{proposition}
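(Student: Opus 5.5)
The proof runs in two stages: first show that the basis $M$ is itself identified from the LLM data, then identify $c_{\human}$ from the human likelihood given that $M$.

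\textbf{Identifying $M$.} By \Cref{prop:ident-LLM} under \Cref{cond:score-normalization,cond:llm-design}, the pair $(S,b)$ is identified from the LLM comparison distribution. Under \Cref{cond:llm-structure}, the identities recorded after that condition give $\mu=\sqrt N\,S^\top\one_K/\|S^\top\one_K\|_2$ and $UV^\top=S-\gamma\mu^\top$. Note that $S^\top\one_K=\mu\,\one_K^\top\gamma\neq0$ because $\one_K^\top U=0$ and $\one_K^\top\gamma>0$. These identities also give $\col(V)=\row(UV^\top)$. Hence $\mu$ and $\col(W)$ are identified functions of the LLM distribution. For $M=\mu$ this fixes $M$ exactly. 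For $M=W$ it fixes $M$ only up to $W\mapsto W\diag(1,Q)$ with $Q$ orthogonal, which is the sense of ``relative to the chosen basis''.

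\textbf{Identifying $c_{\human}$ for a fixed basis.} Model \eqref{eq:binomial-model-human} identifies $p_{\human ij}$ for every $(i,j)\in\Omega_{\human}$, so it identifies the vector $D_{\human}s_{\human}\in\RR^{m_{\human}}$ of logits. Suppose $c,c'$ produce the same distribution. Then $D_{\human}M(c-c')=0$, and \Cref{cond:human-design} (full column rank $d$) forces $c=c'$. Now suppose the basis is changed to $M'=MR$ with $R$ invertible; for $M=W$, $R=\diag(1,Q)$. Then $c'_{\human}=R^{-1}c_{\human}$ and $M'c'_{\human}=Mc_{\human}$. So $s_{\human}$ does not depend on the chosen basis. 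Since $\col(M)$ is identified, $s_{\human}$ is the unique element $x\in\col(M)$ with $D_{\human}x$ equal to the identified logits; this uniqueness is exactly the rank condition.

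\textbf{The stated equivalences.} The rank condition holds iff $D_{\human}Mc=0$ implies $c=0$. Because $M$ has full column rank ($N^{-1}M^\top M=I_d$), the map $c\mapsto Mc$ is an isomorphism onto $\col(M)$, so this is equivalent to $\ker(D_{\human})\cap\col(M)=\{0\}$.
\begin{itemize}
\item \emph{Sufficiency of connectivity.} If $\mathcal G_{\human}$ is connected, then $\ker(D_{\human})=\operatorname{span}(\one_N)$. Since $M^\top\one_N=0$, we have $\one_N\perp\col(M)$, so the intersection is trivial.
\item \emph{Non-necessity of connectivity.} Take $d=1$, $M=\mu$, and a human design consisting of a single edge $\{i,j\}$ with $\mu_i\neq\mu_j$ and $N\ge3$. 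The graph is disconnected, yet $D_{\human}\mu=\mu_i-\mu_j\neq0$, so the rank is $1$.
\end{itemize}

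The main obstacle is being precise about what ``identifiable'' means for $c_{\human}$ when $M=W$. It is identifiable only once a representative basis of $\col(V)$ is fixed. The argument must make explicit that this choice is a deterministic function of the identified $\col(V)$, for instance via a fixed orthonormalization rule, so that every other step reduces to linear algebra.
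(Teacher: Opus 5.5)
Your proof is correct and takes essentially the same route as the paper. Both arguments identify $(S,b)$ via \Cref{prop:ident-LLM} and hence $\col(M)$, then conclude $c=c'$ from $D_{\human}M(c-c')=0$ and the full-column-rank condition, obtain the kernel equivalence from the isomorphism $c\mapsto Mc$, and use $\ker(D_{\human})=\operatorname{span}(\one_N)\perp\col(M)$ under connectivity. Your single-edge example with $M=\mu$ and $N\ge3$ makes explicit the non-necessity of connectivity, which the paper's proof only asserts.
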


Unlike methods combining automatic and human labels to estimate aggregate evaluation functionals \citep{boyeau2025autoeval,chen2026efficient,lee2026correctly,divekar2026precise}, \Cref{prop:human_identification} targets the latent item-level preference through the learned calibration space $\col(M)$.
Condition~\ref{cond:human-design} asks the human comparisons to distinguish only its $d$ directions, so $m_{\human}\ge d$ is necessary but connectivity or coverage of every item is not; \Cref{thm:incomplete-human-coverage} extends the risk and inference results to incomplete population coverage.

\paragraph{Likelihoods.}
The normalized human loss for a centered score $s$ is
\begin{equation}
\ell_{\human}(s)
:=
\frac{1}{n_{\human}}
\sum_{(i,j)\in\Omega_{\human}}
\left[
n_{\human ij}\log\{1+\exp(s_i-s_j)\}
-
Y_{\human ij}(s_i-s_j)
\right].
\label{eq:human-full-loss}
\end{equation}
Equivalently $\ell_{\human}(s)=n_{\human}^{-1}\sum_{t=1}^{n_{\human}}h_t(s)$, the average over individual comparisons, with $h_t(s)=\log\{1+\exp(s_{I_t}-s_{J_t})\}-Z_t(s_{I_t}-s_{J_t})$ for the pair $(I_t,J_t)$ and outcome $Z_t\in\{0,1\}$.
Let $n_{\llm}:=\sum_{k=1}^Kn_k$ be the total LLM sample size and define $\eta_{kij}^{(a)}(S,b):=S_{ki}-S_{kj}+ab_k$.
The normalized LLM negative log-likelihood is
\begin{equation}
\ell_{\llm}(S,b)
=
\frac{1}{n_{\llm}}
\sum_{k=1}^K
\sum_{(i,j,a)\in\Omega_k}
\left[
n_{kij}^{(a)}
\log\!\left\{
1+\exp\!\left(\eta_{kij}^{(a)}(S,b)\right)
\right\}
-
Y_{kij}^{(a)}
\eta_{kij}^{(a)}(S,b)
\right].
\label{eq:llm-negloglik}
\end{equation}

Calibration evaluates the same loss at $s=Mc$.
When a finite minimizer exists, define the unrestricted human-only estimator by
\begin{equation}
\widehat s_0\in\argmin\nolimits_{\one_N^\top s=0}\ell_{\human}(s).
\label{eq:human-only-estimator}
\end{equation}

\subsection{Population value of LLM calibration}
\label{subsec:population-calibration}

To characterize the tradeoff that the weighted estimator of \Cref{subsec:weighted-estimator} interpolates, we temporarily relax \eqref{eq:human-subspace-model} and allow $s_{\human}$ to lie outside $\col(M)$.

For the population analysis, we treat the $n_{\human}$ human comparisons as independent draws in which pair $(i,j)$ is sampled with probability $\rho_{ij}$, with $\sum_{i<j}\rho_{ij}=1$, and the outcome follows \eqref{eq:binomial-model-human} conditional on the sampled pair.
Define $\Omega_{\human}^{\infty}:=\{(i,j):i<j,\ \rho_{ij}>0\}$ and the corresponding comparison graph $\cG_{\human}^{\infty}$.
The human comparison risk is $R_{\human}(s):=\EE\{h_t(s)\}$ under this sampling scheme.
When $\cG_{\human}^{\infty}$ is connected, $s_{\human}$ minimizes $R_{\human}$ over centered score vectors.

Define the approximation error of the calibration space by
\begin{equation}
\Delta_M
:=
\inf_{c\in\RR^d}
\left\{
R_{\human}(Mc)-R_{\human}(s_{\human})
\right\}.
\label{eq:population-alignment-gaps}
\end{equation}
For fixed sampling design $\rho$, $\Delta_M$ depends on $M$ only through $\col(M)$ and is nonnegative, vanishing if and only if $s_{\human}\in\col(M)$ when $\cG_{\human}^\infty$ is connected; $\Delta_M$ is design-dependent through $\rho$.

To isolate this tradeoff from estimation of the LLM representation, treat $M$ as known at its population value and define the oracle anchored estimator $\widetilde s_\infty:=M\widetilde c_\infty$, where $\widetilde c_\infty\in\argmin_{c\in\RR^d}\ell_{\human}(Mc)$.
For expected-risk statements, we use the measurable extension of \Cref{lem:fixed-space-existence}, setting a fixed-space estimator to $\zero_N$ on its exponentially rare fallback event.
The $(N-1)/(2n_{\human})$ and $d/(2n_{\human})$ terms below are the classical first-order likelihood-risk complexity terms underlying AIC \citep{akaike1974new}.

\begin{theorem}[Human-risk tradeoff from LLM calibration]
\label[theorem]{thm:population-risk-tradeoff}
    Under the sampling scheme above, let $N$ and $M$ be fixed, let $\cG_{\human}^{\infty}$ be connected, and assume the human probabilities are bounded away from $0$ and $1$ on $\Omega_{\human}^{\infty}$.
    If $s_{\human}\in\col(M)$, the first-order expected excess risks of $\widehat s_0$ and $\widetilde s_\infty$ are $(N-1)/(2n_{\human})$ and $d/(2n_{\human})$, respectively.
    More generally, let $s_{\human}=s_{\human}^{(n_{\human})}$ vary with $n_{\human}$ while keeping $\{\rho_{ij}\}$ fixed, and suppose its induced probabilities
    $p_{\human ij}^{(n_{\human})}:=\sigma(s_{\human i}^{(n_{\human})}-s_{\human j}^{(n_{\human})})$
    lie in $[\varepsilon,1-\varepsilon]$ on $\Omega_{\human}^{\infty}$ for some fixed $\varepsilon>0$, with
    $n_{\human}\Delta_M\to\delta_M<\infty$.
    Then
    \begin{equation}
        n_{\human}\EE\!\left\{
        R_{\human}(\widehat s_0)-R_{\human}(s_{\human})
        \right\}\to\frac{N-1}{2},
        \qquad
        n_{\human}\EE\!\left\{
        R_{\human}(\widetilde s_\infty)-R_{\human}(s_{\human})
        \right\}\to\delta_M+\frac{d}{2}.
        \label{eq:population-risk-anchored}
    \end{equation}
    Thus, anchoring has a smaller first-order risk exactly when $\delta_M<(N-1-d)/2$.
\end{theorem}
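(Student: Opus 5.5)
The plan is a standard second-order (AIC-type) expansion of the human risk around the relevant population minimizer, applied to each estimator separately. Both $\widehat s_0$ and $\widetilde s_\infty$ are M-estimators of the same smooth convex loss, over two parameter spaces: the centered space $\{s:\one_N^\top s=0\}$ of dimension $N-1$, and $\col(M)$ of dimension $d$.

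Write $H:=\nabla^2R_{\human}(s_{\human})=\sum_{i<j}\rho_{ij}p_{\human ij}(1-p_{\human ij})(e_i-e_j)(e_i-e_j)^\top$. Connectivity of $\cG_{\human}^\infty$ and the bounds $p\in[\varepsilon,1-\varepsilon]$ make $H$ positive definite on $\one_N^\perp$, uniformly in $n_{\human}$.

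Fix a parameter space $L$, equal to $\one_N^\perp$ or $\col(M)$, with dimension $q$. Let $s_L^*:=\argmin_{s\in L}R_{\human}(s)$ be the risk projection. I would proceed in four steps.

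\textbf{Step 1 (existence and consistency).} The empirical minimizer over $L$ exists except on an event of exponentially small probability (\Cref{lem:fixed-space-existence} and its measurable fallback to $\zero_N$). It is consistent for $s_L^*$ by convexity and uniform convergence on compacts. The fallback contributes $o(n_{\human}^{-1})$ to the expected risk, because $R_{\human}(\zero_N)$ is bounded.

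\textbf{Step 2 (asymptotic linearity).} Taylor expansion of the score gives $\widehat s_L-s_L^*=-(P_LH_LP_L)^{+}\nabla\ell_{\human}(s_L^*)+o_p(n_{\human}^{-1/2})$, where $H_L:=\nabla^2R_{\human}(s_L^*)$ and $P_L$ is the projection onto $L$.

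\textbf{Step 3 (risk decomposition).} Expand the excess risk as
\begin{equation*}
R_{\human}(\widehat s_L)-R_{\human}(s_{\human})=\{R_{\human}(s_L^*)-R_{\human}(s_{\human})\}+\tfrac12(\widehat s_L-s_L^*)^\top H_L(\widehat s_L-s_L^*)+o_p(n_{\human}^{-1}).
\end{equation*}
The linear term vanishes because $\nabla R_{\human}(s_L^*)\perp L$ and $\widehat s_L-s_L^*\in L$. The first bracket equals $\Delta_M$ when $L=\col(M)$ and $0$ when $L=\one_N^\perp$. For the latter, note that $s_{\human}$ is the minimizer over centered vectors.

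\textbf{Step 4 (the trace).} One comparison has score covariance $\Sigma_L=\EE[\nabla h_t(s_L^*)\nabla h_t(s_L^*)^\top]$. Then $n_{\human}$ times the quadratic term converges in distribution to $\tfrac12 Z^\top Z$-type forms with mean $\tfrac12\operatorname{tr}\{(P_LH_LP_L)^{+}\Sigma_L\}$. When the model is correct on $L$, the information identity $\Sigma=H$ restricted to $L$ gives trace $q$, and hence $(N-1)/2$ and $d/2$.

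In the local-misalignment regime, $n_{\human}\Delta_M\to\delta_M<\infty$ means $\|s_{\col(M)}^*-s_{\human}\|=O(n_{\human}^{-1/2})$, because strong convexity of $R_{\human}$ on $\one_N^\perp$ is uniform. Two consequences follow. First, $H_L=H+o(1)$. Second, $\Sigma_L=H+o(1)$: the score covariance at $s_L^*$ differs from the Fisher information at $s_{\human}$ by the squared mean score $\EE\nabla h_t(s_L^*)$, which is $O(n_{\human}^{-1/2})$ and so negligible at the trace level. The trace therefore still tends to $d$, and the limit is $\delta_M+d/2$. Along the triangular array, the parameters are confined to a compact set, so the Lindeberg CLT applies with bounded summands.

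\textbf{Main obstacle.} Passing from convergence in distribution to convergence of $n_{\human}\EE\{\cdot\}$ requires uniform integrability of $n_{\human}\{R_{\human}(\widehat s_L)-R_{\human}(s_L^*)\}$. I would obtain it in three pieces:
\begin{itemize}
\item On the high-probability event that $\widehat s_L$ lies in a fixed neighborhood of $s_L^*$, uniform strong convexity and smoothness of $\ell_{\human}$ give $\|\widehat s_L-s_L^*\|\lesssim\|P_L\nabla\ell_{\human}(s_L^*)\|$.
\item The gradient is an average of bounded i.i.d. vectors, so Bernstein or Hoeffding bounds give $\EE\|\sqrt{n_{\human}}\,P_L\nabla\ell_{\human}\|^{4}=O(1)$. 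This bounds second moments of $n_{\human}\|\widehat s_L-s_L^*\|^2$.
\item Off that event, whose probability is exponentially small, $R_{\human}(\widehat s_L)$ grows at most linearly in $\|\widehat s_L\|$. A convexity argument on the segment to $s_L^*$ controls $\|\widehat s_L\|$, or the fallback convention is used if the estimator is nonfinite. Either way this contribution is $o(n_{\human}^{-1})$.
\end{itemize}

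Finally, comparing the two limits, $\delta_M+d/2<(N-1)/2$ exactly when $\delta_M<(N-1-d)/2$, which gives the stated criterion.
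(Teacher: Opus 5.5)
Your proposal is correct and follows essentially the same route as the paper's proof of \Cref{thm:fixed-space-tradeoff}. Both arguments split the expected excess risk into $\Delta_M$ plus a fixed-space estimation term. That term is $\tfrac{1}{2n_{\human}}\operatorname{tr}(H^{-1}J)$, obtained from asymptotic normality together with uniform integrability, which uses exponential tails and the zero fallback off the good event. The information identity then gives the trace $q$, and in the local regime $J-H\to0$, so the trace still tends to $d$.

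The differences are cosmetic. You work in score coordinates with projections rather than in a basis $B$, and you get $s^*_{\col(M)}\to s_{\human}$ from uniform strong convexity instead of the KL identity. One small correction: $\Sigma_L-H=\EE[(\pi-p)^2(e_I-e_J)(e_I-e_J)^\top]$ with $\pi=\sigma\{(e_I-e_J)^\top s_L^*\}$, not the outer product of the mean score. It is still $O(n_{\human}^{-1})$, so your conclusion stands.
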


For $M=\mu$, $d=1$ gives consensus-only calibration; for $M=W$, $d=r+1$ recovers full-space calibration.
A fixed $\Delta_M>0$ eventually dominates as $n_{\human}$ grows, motivating adaptive weighting.
The above analysis treats $M$ as fixed; the analysis also extends to misspecification of \eqref{eq:human-subspace-model} and staged-oracle equivalence; see \Cref{thm:fixed-space-tradeoff} and \Cref{cor:staged-oracle-equivalence}.

\section{DIAL: Estimation and adaptive calibration}
\label{sec:estimation}

To utilize both the human and LLM comparisons, we combine the two likelihoods \eqref{eq:human-full-loss} and \eqref{eq:llm-negloglik}.
For fixed rank $r\leq\min\{K-1,N-2\}$, let
$\Theta_r$ denote the parameter space
$\theta=(\gamma,\mu,U,V,b)$ satisfying \Cref{cond:llm-structure}.
For $\theta\in\Theta_r$, write $S_\theta:=\gamma\mu^\top+UV^\top$ and $W_\theta:=[\mu,V]$, and set $M_\theta:=\mu$ for $M=\mu$ and $M_\theta:=W_\theta$ for $M=W$.

\subsection{Weighted likelihood estimator}
\label{subsec:weighted-estimator}

Using the human loss in \eqref{eq:human-full-loss}, define for $\lambda>0$
\begin{equation}
Q_\lambda(s,S,b)
:=
\ell_{\human}(s)+\lambda\ell_{\llm}(S,b).
\label{eq:weighted-loss}
\end{equation}
The factorized estimator is
\begin{flalign}
&(\methAda{})\qquad
(\widehat c_\lambda,\widehat\theta_\lambda)
\in
\argmin_{c\in\RR^d,\ \theta\in\Theta_r}
Q_\lambda(M_\theta c,S_\theta,b),
\qquad
\widehat s_\lambda
:=
M_{\widehat\theta_\lambda}\widehat c_\lambda.&
\label{eq:weighted-joint-estimator}
\end{flalign}
Because the two likelihoods are normalized by $n_{\human}$ and $n_{\llm}$, $\lambda$ controls their relative weight, with the ordinary joint likelihood corresponding to $\lambda=n_{\llm}/n_{\human}$.
Under exact calibration, that choice is efficient; choosing $\lambda$ away from $n_{\llm}/n_{\human}$ trades exact-model efficiency for robustness when the calibration restriction is only approximate.
We treat the score $\widehat s_\lambda$ as the human-preference estimate.

We use the unrestricted human BTL estimator $\widehat s_0$ in \eqref{eq:human-only-estimator} as the $\lambda=0$ endpoint.
For $\lambda=\infty$, define the staged LLM-anchored endpoint by
\begin{flalign}
&(\methConsensus{})\quad
\widehat\theta_\infty
\in
\argmin_{\theta\in\Theta_r}\ell_{\llm}(S_\theta,b),
\qquad
\widehat c_\infty
\in
\argmin_{c\in\RR^d}
\ell_{\human}\!\left(M_{\widehat\theta_\infty}c\right),
\quad
\widehat s_\infty
:=
M_{\widehat\theta_\infty}\widehat c_\infty.&
\label{eq:staged-calibrated-estimator}
\end{flalign}
Thus, $\lambda=0$ gives human-only estimation, $\lambda=\infty$ gives LLM-first calibration, and finite $\lambda$ jointly estimates the shared representation using both data sources.
Relative to human-supervised calibration of LLM-derived features \citep{kola2026saja}, finite $\lambda$ additionally lets the human comparisons update the LLM representation itself, so $\widehat s_\lambda$ is not restricted to a fixed estimated space.
Under approximate calibration, this can bias the LLM parameters.
For the fixed calibration choice $M\in\{\mu,W\}$, write $\mathcal C_\mu(S):=\operatorname{span}(S^\top\one_K)$ and $\mathcal C_W(S):=\operatorname{row}(S)$.
For inference and tuning, we use $Q_\lambda$ on the regular score space:
\begin{equation}
    \mathcal M_r
    :=
    \left\{
    (s,S,b):
    S\one_N=\zero_K,\
    \rank(S)=r+1,\
    S^\top\one_K\neq\zero_N,\
    s\in\mathcal C_M(S),\
    b\in\RR^K
    \right\},
    \label{eq:reduced-parameter-space}
\end{equation}
Regular factorized and score-space fits have the same objective and calibrated score (\Cref{lem:factor-reduced-equivalence}).

\subsection{Inference at a fixed calibration weight}
\label{subsec:asymptotic-normality}

Fix $\lambda\in(0,\infty)$, $N,K,r$, and a calibration choice $M\in\{\mu,W\}$.
Suppose the human comparisons are independent draws under the sampling scheme of \Cref{subsec:population-calibration}, with connected population graph $\cG_{\human}^\infty$, independently of the LLM sample.
Let $n_{\human},n_{\llm}\to\infty$ with $n_{\human}/n_{\llm}\to\tau\in[0,\infty)$, and suppose $n_k/n_{\llm}\to\kappa_k>0$ and $n_{kij}^{(a)}/n_k\to\rho_{kij}^{(a)}$, where each limiting within-judge support satisfies \Cref{cond:llm-design}.
Write $R_{\llm}$ for the limiting expected LLM loss as in \eqref{eq:R-llm} and use smooth local coordinates $\zeta$ on $\mathcal M_r$, with true coordinate $\zeta_0$.
Define
\[
\mathcal I_{\human}
:=
\nabla_\zeta^2R_{\human}\{s(\zeta)\}\big|_{\zeta=\zeta_0},
\qquad
\mathcal I_{\llm}
:=
\nabla_\zeta^2R_{\llm}\{S(\zeta),b(\zeta)\}\big|_{\zeta=\zeta_0},
\]
together with $\mathcal H_\lambda:=\mathcal I_{\human}+\lambda\mathcal I_{\llm}$, $\mathcal J_{\lambda,\tau}:=\mathcal I_{\human}+\lambda^2\tau\mathcal I_{\llm}$, and Jacobian $\dot s_0:=\frac{\partial s(\zeta_0)}{\partial\zeta^\top}\in\RR^{N\times\dim\zeta}$.
The covariance below has the standard sandwich form from likelihood and M-estimation theory \citep{white1982maximum}; its two-sample form propagates both human and LLM sampling variation through $\mathcal J_{\lambda,\tau}$.

\begin{theorem}[Fixed-weight inference for the calibrated preference]
\label[theorem]{thm:weighted-asymptotic-normality}
    Under model \eqref{eq:binomial-model-llm}, \eqref{eq:binomial-model-human}, and \eqref{eq:human-subspace-model}, \Cref{cond:score-normalization,cond:llm-design,cond:llm-structure}, and the sampling regime above,
    \begin{equation}
    \sqrt{n_{\human}}
    \left(
    \widehat s_\lambda-s_{\human}
    \right)
    \dto
    \mathcal N
    \left(
    0,\Sigma_{\lambda,s,\tau}
    \right),
    \qquad
    \Sigma_{\lambda,s,\tau}
    :=
    \dot s_0
    \mathcal H_\lambda^{-1}
    \mathcal J_{\lambda,\tau}
    \mathcal H_\lambda^{-1}
    \dot s_0^\top.
    \label{eq:calibrated-asymptotic-normality}
    \end{equation}
    The covariance is invariant to the smooth local coordinate used for $\mathcal M_r$.
    The empirical sandwich estimator $\widehat\Sigma_{\lambda,s}$ defined in \eqref{eq:weighted-empirical-sandwich} satisfies $\widehat\Sigma_{\lambda,s}\pto\Sigma_{\lambda,s,\tau}$, so $n_{\human}^{-1}\widehat\Sigma_{\lambda,s}$ consistently estimates the first-order covariance of $\widehat s_\lambda$.
    If $n_{\human}/n_{\llm}\to0$, the asymptotic linearization and sandwich consistency are uniform over $\lambda\in[\lambda_0,\infty)$ for every $\lambda_0>0$; \Cref{cor:uniform-lambda} also gives the $\lambda\to\infty$ staged limit.
\end{theorem}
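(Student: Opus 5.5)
We would treat $\widehat s_\lambda$ as an M-estimator on the smooth manifold $\mathcal M_r$ and run the standard consistency--linearization--sandwich argument in local coordinates $\zeta$. By \Cref{lem:factor-reduced-equivalence}, the factorized fit in \eqref{eq:weighted-joint-estimator} and the score-space fit have the same objective and the same calibrated score. It therefore suffices to study $\widehat\zeta_\lambda\in\argmin Q_\lambda\{s(\zeta),S(\zeta),b(\zeta)\}$.

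\textbf{Consistency.} The human loss converges uniformly on compacts to $R_{\human}$ at rate $n_{\human}^{-1/2}$. The LLM loss converges to $R_{\llm}$ at rate $n_{\llm}^{-1/2}$, using $n_k/n_{\llm}\to\kappa_k$ and $n_{kij}^{(a)}/n_k\to\rho_{kij}^{(a)}$. Under \eqref{eq:human-subspace-model}, the true point $\zeta_0$ minimizes both $R_{\human}\circ s$ and $R_{\llm}$, so it minimizes the population objective $R_{\human}+\lambda R_{\llm}$.
\begin{itemize}
\item $\zeta_0$ is the unique minimizer. By \Cref{prop:ident-LLM}, applied to the limiting supports, $R_{\llm}$ identifies $(S,b)$. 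Given $S$, the space $\mathcal C_M(S)$ is fixed. Connectivity of $\cG_{\human}^\infty$ then identifies $s$ within that space, as in \Cref{prop:human_identification}.
\item Nondegeneracy: $\mathcal H_\lambda$ is positive definite. Its kernel would be a tangent direction annihilated by both Fisher forms, which the same identification arguments exclude at first order.
\item Existence of a finite minimizer follows with probability tending to one via \Cref{lem:fixed-space-existence}.
\item The compact item-side normalization $N^{-1}W^\top W=I$ prevents escape to infinity. Convexity of the BTL losses in $(s,S,b)$ then gives consistency of $\widehat\zeta_\lambda$ by the argmin theorem.
\end{itemize}

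\textbf{Linearization.} We Taylor expand the score equation $\nabla_\zeta \ell_{\human}+\lambda\nabla_\zeta\ell_{\llm}=0$ around $\zeta_0$. The empirical Hessian converges to $\mathcal H_\lambda$, so
\[
\sqrt{n_{\human}}(\widehat\zeta_\lambda-\zeta_0)=-\mathcal H_\lambda^{-1}\sqrt{n_{\human}}\{\nabla\ell_{\human}(\zeta_0)+\lambda\nabla\ell_{\llm}(\zeta_0)\}+o_p(1).
\]
The two gradients are independent mean-zero averages.
\begin{itemize}
\item $\sqrt{n_{\human}}\nabla\ell_{\human}\dto\mathcal N(0,\mathcal I_{\human})$ by the i.i.d.\ CLT and the information identity for correctly specified BTL.
\item $\sqrt{n_{\llm}}\nabla\ell_{\llm}\dto\mathcal N(0,\mathcal I_{\llm})$ by the Lindeberg CLT for independent binomial cells. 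Multiplying by $\sqrt{n_{\human}/n_{\llm}}\to\sqrt\tau$ gives variance $\lambda^2\tau\mathcal I_{\llm}$.
\end{itemize}
Summing yields $\mathcal J_{\lambda,\tau}$, and the delta method with Jacobian $\dot s_0$ gives \eqref{eq:calibrated-asymptotic-normality}. Coordinate invariance follows because a reparametrization $\zeta=\phi(\xi)$ multiplies $\mathcal H$ and $\mathcal J$ by the Jacobian $A$ as $A^\top(\cdot)A$ and $\dot s$ by $A$, so the factors of $A$ cancel. The Hessian transformation has no extra term because the gradient of the population objective vanishes at $\zeta_0$. Sandwich consistency follows from consistency of $\widehat\zeta_\lambda$, continuity of the plug-in Hessians, and laws of large numbers for the empirical score outer products, with $n_{\human}/n_{\llm}$ replacing $\tau$.

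\textbf{Uniform statement for $\tau=0$.} We would rescale the objective to $\lambda^{-1}Q_\lambda=\lambda^{-1}\ell_{\human}+\ell_{\llm}$, with $\lambda^{-1}\in(0,\lambda_0^{-1}]$. This is continuous up to the endpoint $\lambda^{-1}=0$. The LLM part then pins down $(S,b)$ at rate $n_{\llm}^{-1/2}=o(n_{\human}^{-1/2})$ uniformly in $\lambda$. The human coordinate within $\mathcal C_M(S)$ is driven by $\ell_{\human}$ alone, so the remainder terms are bounded uniformly on $[\lambda_0,\infty]$.

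\textbf{Main obstacle.} I expect the hardest step to be this uniform remainder control, together with invertibility of $\mathcal H_\lambda$ as $\lambda\to\infty$. There $\mathcal H_\lambda$ degenerates along the directions not seen by $\mathcal I_{\llm}$, namely the human coordinate $c$. I would handle it with a block decomposition $\zeta=(\zeta_{\llm},c)$:
\begin{itemize}
\item show that $\mathcal I_{\llm}$ is positive definite on the $\zeta_{\llm}$ block;
\item show that $\mathcal I_{\human}$ is positive definite on $c$;
\item invert by Schur complements with bounds uniform in $\lambda\ge\lambda_0$.
\end{itemize}
This recovers the staged limit of \Cref{cor:uniform-lambda}.
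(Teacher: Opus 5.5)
Your fixed-$\lambda$ argument matches the paper's proof step for step: independent human and LLM score CLTs (Lindeberg--Feller for the LLM array), Taylor expansion of the first-order condition with Hessian limit $\mathcal H_\lambda$, the delta method, covariant transformation for chart invariance, and the plug-in sandwich. Your nondegeneracy sketch is the content of \Cref{lem:h-lambda-pd}, and your product-chart Schur-complement plan is the route of \Cref{cor:uniform-lambda}. Two steps need repair.

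\textbf{Consistency.} As written this step does not go through.
$\mathcal M_r$ is neither convex nor closed, so no convex argmin theorem applies on it. \Cref{lem:fixed-space-existence} concerns a fixed score space, not one that moves with $S$. The normalization $N^{-1}W^\top W=I_{r+1}$ is beside the point: $\gamma,U,b,c$ remain unbounded, and the real failure modes are the boundary of $\mathcal M_r$ (rank drop, $S^\top\one_K\to\zero_N$) and separation. The paper's \Cref{lem:weighted-consistency} handles these by coercivity once every active cell shows both outcomes, localization of $(S,b)$ via \Cref{cond:llm-design}, and uniform injectivity of $D_{\human}^\infty$ on $\mathcal C_M(S)$ near the truth.

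Your convexity idea can be rescued, and it then gives a genuinely shorter route. Let $F_n:=\ell_{\human}(s)+\lambda\ell_{\llm}(S,b)$, which is convex on the ambient centered space. Its limit $F:=R_{\human}+\lambda R_{\llm}$ is uniquely minimized at $x_0:=(s_{\human},S,b)\in\mathcal M_r$, by connectivity plus \Cref{cond:llm-design}. Set $\epsilon:=\min_{\partial B_\delta(x_0)}F-F(x_0)>0$. Uniform convergence on $\partial B_\delta(x_0)$, plus convexity along the segment from $x_0$ to any $x\notin B_\delta(x_0)$, gives $F_n(x)\ge F_n(x_0)+\epsilon/2$ with probability tending to one. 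Hence every minimizer over $\mathcal M_r$ lies in $B_\delta(x_0)$. A minimizer exists because $\mathcal M_r\cap\overline{B_\delta(x_0)}$ is compact for small $\delta$: rank is lower semicontinuous, and $s\in\mathcal C_M(S)$ is a closed rank condition there. The price is that this needs strict convexity of $R_{\human}$ on the whole centered space. So unlike the paper's argument, it does not extend to \Cref{thm:incomplete-human-coverage}(iii), where only $\rank(D_{\human}^\infty M)=d$ is available.

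\textbf{Uniformity.} You claim the LLM part pins down $(S,b)$ at rate $n_{\llm}^{-1/2}=o(n_{\human}^{-1/2})$ uniformly in $\lambda$, with $c$ driven by $\ell_{\human}$ alone. This is false at every finite $\lambda$.

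Work in the chart $\zeta=(\xi,c)$ (your $(\zeta_{\llm},c)$) and write $\mathcal I_{\human}=\begin{pmatrix}A&B\\B^\top&C\end{pmatrix}$. Set $\Gamma_\lambda:=A-BC^{-1}B^\top+\lambda\mathcal I_\xi$. The $\xi$-row of $\mathcal H_\lambda^{-1}$ is $[\Gamma_\lambda^{-1},\,-\Gamma_\lambda^{-1}BC^{-1}]$. So $\sqrt{n_{\human}}(\widehat\xi_\lambda-\xi_0)$ has a human-driven Gaussian component with covariance $\Gamma_\lambda^{-1}(A-BC^{-1}B^\top)\Gamma_\lambda^{-1}$. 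This is $O(\lambda^{-2})$ but nonzero whenever $\rank(\mathcal I_{\human})>d$. If your heuristic held, every finite $\lambda$ would give the staged covariance $MH_M^{-1}M^\top$. That contradicts the theorem's $\lambda$-dependent $\Sigma_{\lambda,s,0}$, which reaches $MH_M^{-1}M^\top$ only as $\lambda\to\infty$.

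Invertibility is not the obstacle either: $\mathcal H_\lambda\succeq\mathcal H_{\lambda_0}\succ0$ for $\lambda\ge\lambda_0$. It is $\lambda^{-1}\mathcal H_\lambda$, not $\mathcal H_\lambda$, that degenerates, which is also why continuity of your rescaled objective at $\lambda^{-1}=0$ gives no control of the linearization. What is actually needed is the following.
\begin{enumerate}
\item[(a)] Uniform consistency, from $\lambda_0\{\ell_{\llm}(\widehat\xi_\lambda)-\ell_{\llm}(\xi_0)\}\le\ell_{\human}(s_{\human})-\inf\ell_{\human}=\op(1)$ for all $\lambda\ge\lambda_0$.
\item[(b)] The bound $\sup_{\lambda\ge\lambda_0}\|\lambda\mathcal H_\lambda^{-1}\operatorname{diag}(E,0)\|\le c_0\|E\|$. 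This absorbs both the $\lambda$-amplified LLM Hessian error $\lambda\operatorname{diag}(E_\xi,0)$ in the expansion and the $\lambda$-amplified LLM score $\lambda\sqrt{n_{\human}/n_{\llm}}\,Z_{\llm}$.
\end{enumerate}
Your Schur-complement step delivers (b), since $\|\lambda\Gamma_\lambda^{-1}\|\le\|\mathcal I_\xi^{-1}\|$. Keep that step and drop the rate heuristic.
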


For $\tau>0$, both human and LLM sampling variation contribute through $\mathcal J_{\lambda,\tau}$; when $\tau=0$, LLM sampling variation vanishes at the human $\sqrt{n_{\human}}$ scale while its information remains through $\mathcal H_\lambda$.
Since $\one_N^\top s(\zeta)\equiv0$, the limit law is supported on the centered subspace and $\Sigma_{\lambda,s,\tau}\one_N=\zero_N$.
For any centered contrast $v$ with $v^\top\Sigma_{\lambda,s,\tau}v>0$, including pairwise contrasts $v=e_i-e_j$ when nondegenerate, we studentize by $(v^\top\widehat\Sigma_{\lambda,s}v/n_{\human})^{1/2}$.
These intervals concern exact calibration at a fixed $\lambda$; \Cref{thm:incomplete-human-coverage} relaxes human-graph connectivity, and \Cref{prop:local-misalignment} and \Cref{cor:finite-weight-risk} give the local-misalignment limit.

\subsection{Adaptive choice of the LLM weight}
\label{subsec:adaptive-lambda}

Let $\overline\Lambda\subset[0,\infty]$ be a fixed finite candidate set containing $0$ and $\infty$, and write $\Lambda:=\overline\Lambda\setminus\{0,\infty\}$.
Following the GACV principle of approximating leave-one-out likelihood loss without refitting every deletion \citep{xiang1996generalized} and related GCV-based tuning without sample splitting \citep{du2023subsample}, we select $\lambda$ by approximating leave-one-human-comparison-out predictive loss while retaining all LLM comparisons in every candidate fit.

For each candidate fit, let $\widehat\zeta_\lambda$ be its local-coordinate estimate and $q_\lambda(\zeta)$ its full-sample criterion, and define $\widehat H_\lambda:=\nabla_\zeta^2q_\lambda(\widehat\zeta_\lambda)$ and $\widehat J_\lambda:=n_{\human}^{-1}\sum_t(g_{t,\lambda}-\bar g_\lambda)(g_{t,\lambda}-\bar g_\lambda)^\top$, where $g_{t,\lambda}:=\nabla_\zeta h_t\{s(\zeta)\}|_{\zeta=\widehat\zeta_\lambda}$ and $\bar g_\lambda:=n_{\human}^{-1}\sum_t g_{t,\lambda}$.
Here, $\zeta$ denotes local coordinates on $\mathcal M_r$ for finite $\lambda$, on the centered score space at $\lambda=0$, and on $c$ with $M_{\widehat\theta_\infty}$ fixed at $\lambda=\infty$.
Define
\begin{equation}
\operatorname{GACV}(\lambda)
:=
\ell_{\human}(\widehat s_\lambda)
+
\frac{1}{n_{\human}-1}
\operatorname{tr}
\left(
\widehat H_\lambda^{-1}\widehat J_\lambda
\right),
\qquad
\widehat\lambda
\in
\argmin_{\lambda\in\overline\Lambda}
\operatorname{GACV}(\lambda).
\label{eq:gacv}
\end{equation}
For comparison, write $\operatorname{CV}_{\mathrm{loo}}(\lambda):=n_{\human}^{-1}\sum_{t=1}^{n_{\human}}h_t(\widehat s_\lambda^{(-t)})$, where $\widehat s_\lambda^{(-t)}$ is the leave-one-out (LOO) fit obtained after deleting human comparison $t$ while retaining all LLM comparisons.

\begin{theorem}[GACV tuning]
\label[theorem]{thm:gacv}
    Suppose that $N$, $K$, and $r$ are fixed, the calibration choice $M\in\{\mu,W\}$ is fixed, the human comparisons are independent draws under the sampling scheme of \Cref{thm:weighted-asymptotic-normality}, independently of the LLM sample, and $\overline\Lambda$ is fixed and finite.
    Under \Cref{ass:weighted-local-regularity} (verified under the correctly specified regime by \Cref{lem:gacv-regularity}), as $n_{\human}\to\infty$, conditionally on the LLM comparisons, $\operatorname{GACV}(\lambda)$ is invariant to the chosen smooth local coordinates and
    \begin{equation}
    \max_{\lambda\in\overline\Lambda}
    \left|
    \operatorname{GACV}(\lambda)
    -
    \operatorname{CV}_{\mathrm{loo}}(\lambda)
    \right|
    =
    \Op(n_{\human}^{-2}).
    \label{eq:gacv-loo-equivalence}
    \end{equation}
\end{theorem}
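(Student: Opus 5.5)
We prove \Cref{thm:gacv} one candidate $\lambda\in\overline\Lambda$ at a time, conditionally on the LLM comparisons. Because $\overline\Lambda$ is finite, the maximum of finitely many $\Op(n_{\human}^{-2})$ terms is again $\Op(n_{\human}^{-2})$.

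\emph{Coordinate invariance.} Take a reparametrization $\zeta=\phi(\xi)$ with Jacobian $A$ at $\widehat\xi_\lambda$. The per-comparison gradients transform as $g_{t,\lambda}\mapsto A^\top g_{t,\lambda}$, so $\widehat J_\lambda\mapsto A^\top\widehat J_\lambda A$. The Hessian transforms as $\widehat H_\lambda\mapsto A^\top\widehat H_\lambda A$ plus a term proportional to $\nabla_\zeta q_\lambda(\widehat\zeta_\lambda)$, which vanishes at the stationary point. Hence $\operatorname{tr}(\widehat H_\lambda^{-1}\widehat J_\lambda)$ is unchanged. The first term $\ell_{\human}(\widehat s_\lambda)$ depends only on the fitted score, and \Cref{lem:factor-reduced-equivalence} guarantees that the fitted score does not depend on the coordinates.

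\emph{Leave-one-out expansion.} For fixed $\lambda$, write the full criterion as $q_\lambda(\zeta)=n_{\human}^{-1}\sum_t h_t\{s(\zeta)\}+\lambda\ell_{\llm}(\zeta)$ (or the analogous endpoint criteria at $\lambda\in\{0,\infty\}$). The LOO criterion is
\[
q^{(-t)}_\lambda=\tfrac{n_{\human}}{n_{\human}-1}\bigl(q_\lambda-n_{\human}^{-1}h_t\bigr)+\bigl(1-\tfrac{n_{\human}}{n_{\human}-1}\bigr)\lambda\ell_{\llm},
\]
where the second term keeps the LLM weight fixed. \Cref{ass:weighted-local-regularity} provides the following: a nonsingular limit of $\widehat H_\lambda$; consistency; a unique local minimizer in a neighborhood; third derivatives of $h_t$ bounded uniformly on that neighborhood, which follows from bounded probabilities and fixed $N$; and $\max_t\|g_{t,\lambda}\|=\Op(1)$.

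A one-step Newton expansion of $\nabla q^{(-t)}_\lambda=0$ around $\widehat\zeta_\lambda$ gives
\[
\widehat\zeta^{(-t)}_\lambda-\widehat\zeta_\lambda=\tfrac{1}{n_{\human}-1}\widehat H_\lambda^{-1}(g_{t,\lambda}-\bar g_\lambda)+r_t,\qquad \max_t\|r_t\|=\Op(n_{\human}^{-2}).
\]
Here $\bar g_\lambda$ appears because, at the stationary point, the LLM gradient equals $-\bar g_\lambda$ (times the weight). This reweighting is what produces the centering in $\widehat J_\lambda$. The endpoint $\lambda=0$ is the plain M-estimator case with $\bar g=0$. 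At $\lambda=\infty$, the basis $M_{\widehat\theta_\infty}$ is fixed and only $c$ moves, so again $\bar g=0$.

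Taylor-expanding $h_t$ at $\widehat\zeta_\lambda$ then gives
\[
h_t(\widehat s^{(-t)}_\lambda)=h_t(\widehat s_\lambda)-g_{t,\lambda}^\top(\widehat\zeta^{(-t)}_\lambda-\widehat\zeta_\lambda)+\Op(n_{\human}^{-2}).
\]
The sign convention needs care here. The correction should be positive, since deleting comparison $t$ moves the fit away from it and increases its loss, so the gradient-step sign has to be checked carefully. Averaging over $t$ and using $\sum_t(g_{t,\lambda}-\bar g_\lambda)^\top\widehat H_\lambda^{-1}\bar g_\lambda=0$ yields
\[
\operatorname{CV}_{\mathrm{loo}}(\lambda)=\ell_{\human}(\widehat s_\lambda)+\tfrac{1}{n_{\human}-1}\operatorname{tr}(\widehat H_\lambda^{-1}\widehat J_\lambda)+\Op(n_{\human}^{-2}).
\]
This is exactly the GACV definition in \eqref{eq:gacv}.

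\emph{Main obstacle.} The hardest step is making the remainder bound \emph{uniform over $t$}. This requires the following. First, every LOO fit must exist and lie in the same local neighborhood; I would show this with a uniform implicit-function argument, using that each deletion perturbs the gradient by $O(n_{\human}^{-1})$ while the Hessian stays uniformly nonsingular. Second, the second-order term in the Newton step must be controlled: it is $O(\|\Delta\zeta\|^2)=O(n_{\human}^{-2})$ per fit, and the Hessian change $n_{\human}^{-1}\nabla^2h_t$ contributes only $O(n_{\human}^{-2})$. Third, on the exponentially rare events where a fit fails to exist, the measurable fallback of \Cref{lem:fixed-space-existence} can be used, since these events do not affect an $\Op$ statement.
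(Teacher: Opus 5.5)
Your route is the paper's: the first-order condition $\overline g_\lambda+\nabla a_\lambda(\widehat\zeta_\lambda)=0$ produces the centering, the deleted score at the full fit is $-(g_{t,\lambda}-\overline g_\lambda)/(n_{\human}-1)$, and a one-step Newton expansion with $\Op(n_{\human}^{-2})$ remainder follows. The rest also matches: a first-order Taylor expansion of $h_t$, averaging with $\sum_t(g_{t,\lambda}-\overline g_\lambda)=0$, and trace invariance because the Hessian's second-derivative term vanishes at a stationary point. Two points still need fixing.

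\emph{The sign.} Your Taylor display is wrong. Since $g_{t,\lambda}=\nabla h_t(\widehat\zeta_\lambda)$, the expansion is $h_t(\widehat\zeta_\lambda^{(-t)})=h_t(\widehat\zeta_\lambda)+g_{t,\lambda}^\top(\widehat\zeta_\lambda^{(-t)}-\widehat\zeta_\lambda)+\Op(n_{\human}^{-2})$, with a plus sign. Your Newton step is already correctly signed, so substituting it gives $+(n_{\human}-1)^{-1}g_{t,\lambda}^\top\widehat H_\lambda^{-1}(g_{t,\lambda}-\overline g_\lambda)$ and hence the positive trace penalty. Nothing further needs checking. As written, however, your two displays combine to $\ell_{\human}(\widehat s_\lambda)-(n_{\human}-1)^{-1}\operatorname{tr}(\widehat H_\lambda^{-1}\widehat J_\lambda)$, which contradicts the conclusion you state.

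\emph{Localizing the leave-one-out fit.} The uniform implicit-function argument you propose only gives, for each $t$, a stationary point of the deleted criterion within $O(n_{\human}^{-1})$ of $\widehat\zeta_\lambda$. It does not show that this point is the leave-one-out fit, i.e. the global minimizer of the deleted criterion. For finite $\lambda$ the optimization is over the nonconvex rank-constrained set $\mathcal M_r$, so this identification is not automatic.

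The missing ingredient is the separation clause of \Cref{ass:weighted-local-regularity}. Deleting one comparison changes the empirical cell frequencies by $O(n_{\human}^{-1})$. By the finite-support uniform law of large numbers, the deleted criteria are therefore uniformly close to $q_\lambda^\star$ on the relevant sets. Separation then forces every leave-one-out global minimizer into $\mathcal U_\lambda$, simultaneously in $t$. Uniform nonsingularity of the Hessian on $\mathcal U_\lambda$ makes your local stationary point the unique minimizer there. This is how the paper closes the step.

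Two smaller remarks. First, the uniform derivative bounds you need come from the finite comparison support, the globally bounded logistic derivatives, and bounded chart derivatives on a relatively compact neighborhood; bounded probabilities play no role. Second, the fallback of \Cref{lem:fixed-space-existence} concerns fixed score spaces and does not apply to finite-$\lambda$ fits, but it is unnecessary anyway: an $\Op$ statement only requires working on events of probability tending to one.
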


Here, GACV tunes DIAL-Ada \eqref{eq:weighted-joint-estimator} without a separate human validation sample or a rate restriction between $n_{\human}$ and $n_{\llm}$; \Cref{cor:gacv-loo-optimality} gives its near-LOO selection guarantee.
The same criterion can select rank and weight jointly, while \Cref{prop:calibration-test} tests the calibration restriction and compares DIAL-Anc versus unrestricted human models by AIC.

\section{Experiments}
\label{sec:experiments}

We evaluate the two aspects of the proposed DIAL pipeline separately.
Synthetic experiments first analyze recovery of order effects and position-debiased LLM preferences, and human-preference calibration under limited supervision and LLM misspecification.
Real-data experiments study the same mechanisms against held-out human preferences.

\paragraph{Compared methods.}
We compare (i) \methHuman{}, the unrestricted centered human BTL fit ($\lambda=0$);
(ii) \methPooled{}, a pooled LLM BTL fit that ignores judge identity and display order and is rescaled using the human comparisons;
(iii) \methDIALmu{} and \methConsensus{}, the adaptive estimator \eqref{eq:weighted-joint-estimator} tuned by GACV and anchored estimator \eqref{eq:staged-calibrated-estimator} within the consensus space;
(iv) \methDIALW{}, the adaptive estimator \eqref{eq:weighted-joint-estimator} within $W$-space;
(v) \methDIALnoPos{}, the order-agnostic ablation of \methDIALmu{};
and (vi) \methAtC{}, a stage-matched AtC baseline aggregating the same human comparisons by BTL and isotonic-calibrating the position-debiased LLM consensus.

\subsection{Synthetic experiments}
\label{subsec:synthetic}

We use $N=10$, $K=4$, and $r=1$, with structured LLM scores from \eqref{eq:binomial-model-llm} and consensus-aligned target $s_{\human}=\mu$.
The canonical item is displayed first with probability $0.75$.
In \Cref{fig:simu-main}, the top row varies $n_{\human}$ at $n_{\llm}=20{,}000$ under exact specification, while the bottom row varies $n_{\llm}$ at $n_{\human}=800$ under judge-pair overdispersion that misspecifies the LLM likelihood.
\Cref{app:simulation} gives the full design and additional larger-scale, misalignment, and heterogeneous-position robustness experiments (\Cref{fig:simu-items20,fig:simu-misaligned,fig:simu-pos-heterogeneity}).

\begin{figure}[!t]
    \centering
    \includegraphics[width=0.95\linewidth]{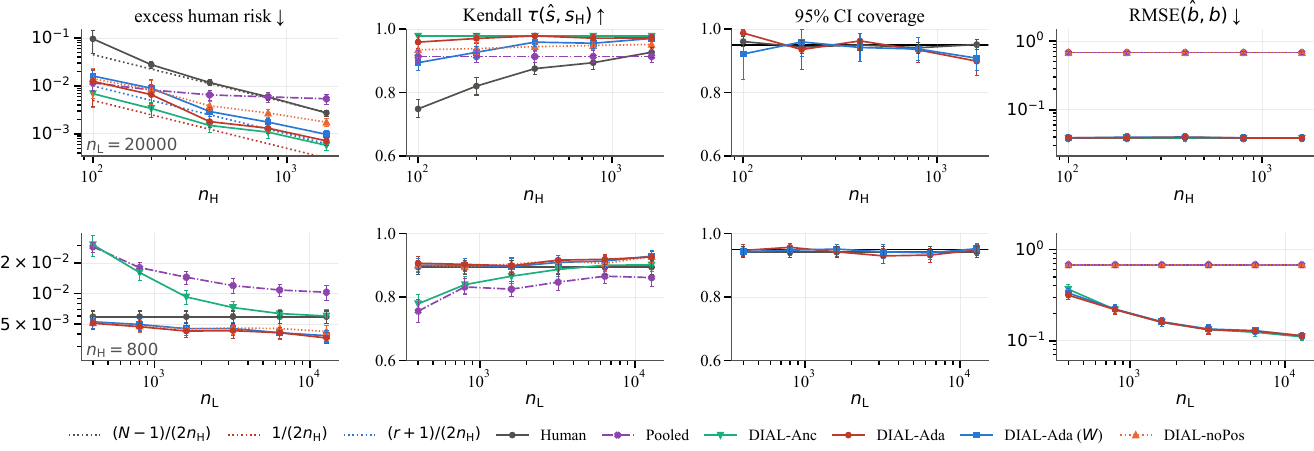}
    \caption{Synthetic evaluation with $N=10$, $K=4$, $r=1$, $s_{\human}=\mu$, and canonical-first display probability $0.75$; curves show means $\pm1.96$ Monte Carlo standard errors over $50$ replications.
    Columns report excess human risk, Kendall's $\tau$, $95\%$ interval coverage, and $\operatorname{RMSE}(\widehat b,b)$.
    Top: $n_{\human}$ varies at $n_{\llm}=20{,}000$ under correct specification; dotted lines are the first-order risk benchmarks for \methHuman{}, calibration within $\operatorname{span}(\mu)$, and calibration within $\operatorname{col}(W)$.
    Bottom: $n_{\llm}$ varies at $n_{\human}=800$ under judge-pair overdispersion that misspecifies the LLM likelihood while leaving the human calibration model correctly specified.}
    \label{fig:simu-main}
\end{figure}

\paragraph{Limited human supervision.}
Under exact specification, \methHuman{} and \methConsensus{} closely follow their corresponding first-order risk benchmarks, with anchoring substantially improving both risk and ranking when human supervision is scarce.
\methDIALmu{} remains close to the anchored endpoint, while \methDIALW{} pays additional variance for estimating an unnecessary disagreement direction.
The order-agnostic \methPooled{} and \methDIALnoPos{} deteriorate under the unbalanced display.

\paragraph{Adaptive weighting and position recovery.}
Under pair-level LLM misspecification, the anchored estimator plateaus, whereas \methDIALmu{} adapts toward the human data and remains competitive with or better than both endpoints, closely tracking the oracle candidate.
Its ranking accuracy is correspondingly more stable as the LLM budget varies.
For order-aware fits, order-effect estimation improves steadily with additional LLM comparisons, while order-agnostic methods retain substantial error; post-selection cell-clustered intervals remain empirically near nominal coverage.

\subsection{Real-data experiments}
\label{subsec:real-data}

We evaluate $21$ LLM judges ($18$ open-weight via Ollama and $3$ commercial API models) on Chatbot Arena \citep{chiang2024chatbot}, MT-Bench \citep{zheng2023mtbench}, and PandaLM \citep{wang2024pandalm}; after preprocessing, the benchmarks contain $20$, $6$, and $5$ candidate models, respectively, with each LLM comparison queried in both display orders.
We use disjoint human calibration and test sets and $r=1$ throughout; position robustness evaluations report excess held-out human log loss, while calibration experiments report Kendall's $\tau$ against the held-out human ranking (\Cref{app:sec:real,app:hyperparameters}).

\begin{figure}[!t]
    \centering
    \includegraphics[width=0.95\linewidth]{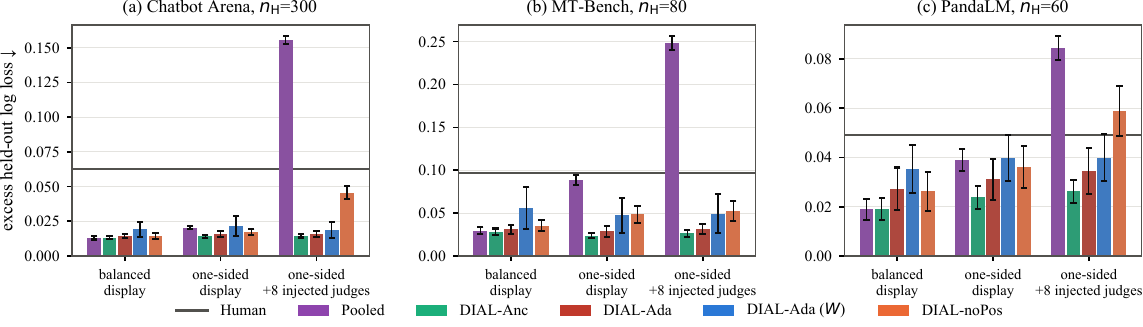}
    \caption{Real-data position robustness evaluation: excess held-out human log loss (zero is the test-pool BTL fit), averaged over up to $50$ splits with $\pm1.96$ Monte Carlo standard errors.
    Panels show Chatbot Arena, MT-Bench, and PandaLM using the same five order-sensitive judges: \texttt{Mistral-Small 3.2}, \texttt{Claude Haiku 4.5}, \texttt{Qwen3-30B} without reasoning, \texttt{GLM4-9B}, and \texttt{DeepSeek-R1-1.5B}, under balanced, one-sided ($5\%$ swapped), and one-sided plus $8$ injected pure-position judges.}
    \label{fig:real-main}
\end{figure}

\paragraph{Order-aware DIAL remains stable under position perturbations.}
Under one-sided display, \methPooled{} degrades markedly, and adding pure-position judges further degrades both \methPooled{} and \methDIALnoPos{}, whereas the order-aware DIAL fits remain stable across datasets (\Cref{fig:real-main}).
Judge-level order-effect diagnostics are reported in \Cref{fig:real-diagnostics}(a).

\begin{figure}[!t]
    \centering
    \includegraphics[width=0.95\linewidth]{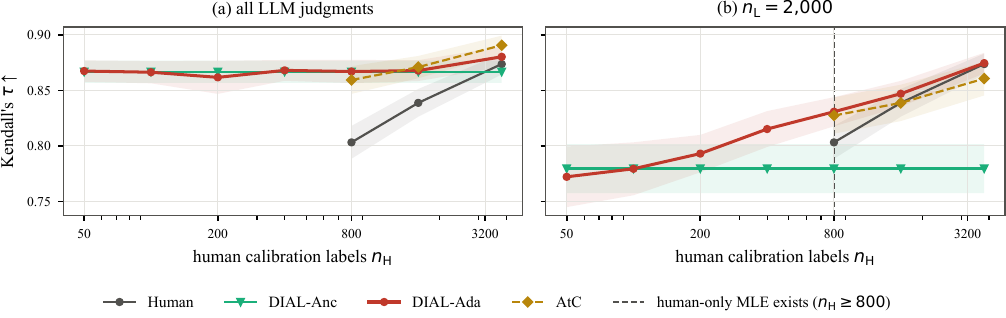}
    \caption{Human-label efficiency and adaptive calibration on Chatbot Arena: Kendall's $\tau$ with the held-out human ranking as $n_{\human}$ varies, using (a) all available judgments of the $21$ LLM judges and (b) $n_{\llm}=2{,}000$.
    Curves show means over up to $50$ splits with $\pm1.96$ Monte Carlo standard errors; \methHuman{} and \methAtC{} are shown only where the pooled-human BTL fit is finite in all splits, and the dashed vertical line marks the smallest such budget ($n_{\human}=800$).}
    \label{fig:real-calibration}
\end{figure}

\paragraph{Human efficiency and adaptive weighting.}
With all 21 LLM judges, \methConsensus{} and \methDIALmu{} achieve strong held-out ranking agreement even at small human-label budgets, where the unrestricted \methHuman{} fit is not finite in every split (\Cref{fig:real-calibration}(a)).
Where its rank-aggregation stage is finite, \methAtC{} gains $0.017$--$0.056$ in $\tau$ over \methHuman{} but stays within $0.013$ of \methDIALmu{}, measurably exceeding it only at the largest budget ($n_\human=3200$).
With fewer LLM judgments, \methConsensus{} remains nearly flat as human supervision grows, whereas \methDIALmu{} improves toward \methHuman{} by updating the imperfect LLM consensus (\Cref{fig:real-calibration}(b)).
Appendix results show that \methDIALmu{} remains more stable than fixed anchoring under anti-consensus corruption (\Cref{fig:real-curves}(c,f)), diagnose subgroup-specific failures of the global calibration space (\Cref{fig:real-diagnostics}(b)), and find similar human-label gains across smaller judge panels, with adaptive updating becoming more valuable as LLM data become scarce (\Cref{fig:panel-human-budget,fig:panel-llm-budget}).

\section{Discussion}
\label{sec:discussion}

DIAL treats position debiasing and human-preference alignment as distinct but complementary: removing display-order effects recovers cleaner LLM preferences, while limited human comparisons determine how those preferences should inform the human target.
When the learned LLM preference structure is informative, it can substantially reduce human-label requirements; when it is imperfect, adaptive weighting shifts reliance toward the human data, as reflected in both theory and experiments.
Key extensions include heterogeneous human preferences, richer position and tie models, post-selection inference, and cost-aware human comparison design.

\bibliographystyle{iclr2027_conference}
\bibliography{references}

\clearpage

\appendix
\counterwithin{theorem}{section}
\counterwithin{equation}{section}

\renewcommand{\thesection}{\Alph{section}}
\setcounter{table}{0}
\renewcommand{\thetable}{\thesection\arabic{table}}
\setcounter{figure}{0} 
\renewcommand\thefigure{\thesection\arabic{figure}}
\setcounter{algorithm}{0}
\renewcommand{\thealgorithm}{\thesection.\arabic{algorithm}}
\renewcommand{\theHalgorithm}{\thesection.\arabic{algorithm}}

\begin{center}
\Large
{\bf Appendix}
\end{center}

This serves as an appendix to the main paper.
Below, we provide an outline for the appendix along with a summary of the notation used in the main paper and the appendix.

\paragraph{Organization.}
The content of the appendix is organized as follows.

\begin{table}[!ht]
\centering
\small
\begin{tabularx}{\textwidth}{@{}l l P@{}}
    \toprule
    \multicolumn{2}{c}{\textbf{Appendix}} & \textbf{Content} \\
    \midrule

    \Cref{app:related-comparison}
    & 
    & Detailed comparisons to related work. \\
    \midrule \addlinespace[0.5ex]

    \multirow{3}{*}{\Cref{app:proofs}}
    & \ref{app:proof:prop:ident-LLM}
    & Identification of position-debiased LLM scores and order effects (\Cref{prop:ident-LLM}). \\
    & \ref{app:debias-vs-average}
    & Debiased versus order-averaged probabilities. \\
    & \ref{app:stage-one}
    & Estimation of the position-debiased representation and judge diagnostics. \\
    \midrule \addlinespace[0.5ex]

    \multirow{4}{*}{\Cref{app:population-calibration-proofs}}
    & \ref{app:proof-prop:human_identification}
    & Identification and fixed-space risk preliminaries (\Cref{prop:human_identification}). \\
    & \ref{app:proof-thm:population-risk-tradeoff}
    & Proof of \Cref{thm:population-risk-tradeoff}. \\
    & \ref{app:incomplete-human-coverage}
    & Calibration under incomplete human comparison coverage. \\
    & \ref{app:human-calibration-additional}
    & Additional results for human calibration (\Cref{cor:staged-oracle-equivalence}). \\
    \midrule \addlinespace[0.5ex]

    \multirow{4}{*}{\Cref{app:gacv}}
    & \ref{app:gacv-reduced}
    & Reduced score-space representation and regularity. \\
    & \ref{app:weighted-asymptotic-normality}
    & Proof of \Cref{thm:weighted-asymptotic-normality}. \\
    & \ref{app:gacv-proof}
    & Proof of \Cref{thm:gacv}. \\
    & \ref{app:gacv-additional}
    & Additional results for adaptive calibration (\Cref{cor:gacv-loo-optimality} and \Cref{prop:calibration-test}). \\
    \midrule \addlinespace[0.5ex]

    \multirow{3}{*}{\Cref{app:experiments}}
    & \ref{app:dial-algorithm}
    & DIAL estimation procedure and adaptive GACV selection. \\
    & \ref{app:hac}
    & Variance estimation for the DIAL intervals (cluster-robust HAC sandwich). \\
    & \ref{app:hyperparameters}
    & Hyperparameter selection (rank $r$, weight $\lambda$). \\
    \midrule \addlinespace[0.5ex]

    \multirow{2}{*}{\Cref{app:simulation}}
    & \ref{app:simulation-design}
    & Simulation design, estimation, and metrics. \\
    & \ref{app:simulation-additional}
    & Larger-scale, misaligned-target, and heterogeneous-position simulations. \\
    \midrule \addlinespace[0.5ex]

    \multirow{4}{*}{\Cref{app:sec:real}}
    & \ref{app:subsec:real-evaluation}
    & Datasets and collection. \\
    & \ref{app:subsec:real-protocol}
    & Protocol and estimators. \\
    & \ref{app:subsec:real-position}
    & Position robustness evaluations and diagnostics. \\
    & \ref{app:subsec:real-efficiency}
    & Human-label efficiency and adaptive calibration. \\

    \bottomrule
\end{tabularx}
\end{table}
\addcontentsline{toc}{part}{\appendixname}

\paragraph{Notation.}
For $d\geq1$, let $[d]:=\{1,\ldots,d\}$, let $e_j$ denote the $j$th standard basis vector in $\RR^d$, and let $\zero_d$ and $\one_d$ denote the zero and all-ones vectors; dimension subscripts are omitted when clear.
For a set $\cS$, $|\cS|$ denotes its cardinality and $\ind_{\{\cdot\}}$ the indicator function.
For a matrix $A$, $\col(A)$, $\row(A)$, and $\rank(A)$ denote its column space, row space, and rank, while $\|A\|_{\mathrm F}$ and $\|A\|_{\mathrm{op}}$ denote its Frobenius and operator norms.
We write $\sigma(x):=(1+e^{-x})^{-1}$ and $\logit(p):=\log\{p/(1-p)\}$ for the logistic function and its inverse.
We use $o,\cO$ and $\op,\Op$ for deterministic and stochastic order notation, and $\pto$ and $\dto$ for convergence in probability and distribution.
Finally, $\operatorname{KL}(p\,\|\,q)$ denotes the Kullback-Leibler divergence from $p$ to $q$.

\clearpage

\section{Detailed Comparison with Closely Related Methods}
\label{app:related-comparison}

This section separates DIAL's contribution from established components: order-effect paired-comparison modeling and consensus-disagreement structure are prior ingredients, while DIAL links the resulting position-debiased multi-judge representation to limited-supervision human calibration and adaptive weighting.

\paragraph{Position bias and its identification.}
The closest work on position bias is \citet{wang2024fair}, which shows that pairwise LLM judgments can change substantially under response-order permutations and develops calibration-based mitigation.
Our position model instead treats display order as a comparison-varying covariate in a paired-comparison likelihood and separates the latent score contrast \(s_{ki}-s_{kj}\) from the nuisance order effect \(ab_k\).
The resulting position-debiased preference is therefore defined by \(s_k\), not by averaging judgments across display orders.
As shown in \Cref{app:debias-vs-average}, probability-scale order averaging retains dependence on \(b_k\) and generally differs from the position-debiased probability, though the two induce the same pairwise ordering under our model.
Balanced Position Calibration \citep{wang2024fair} instead queries both display orders and aggregates the resulting evaluations.
Under balanced querying, \methDIALnoPos{} provides our closest order-agnostic comparison using both display orders, although its BTL aggregation differs from BPC's score averaging; BPC itself requires the swapped judgment and is unavailable in the one-sided stress designs.

The identifiability results of \citet{xu2026debias} are relevant because they characterize when bias terms can or cannot be separated from unrestricted item effects.
Their negative results apply when the bias contribution is confounded with item quality, and they develop additional comparison designs that supply the missing identifying information.
DIAL instead treats the order indicator as a comparison-varying design covariate and identifies the order effect when its design column is linearly independent of the item-difference design, as formalized by \Cref{cond:llm-design}.

\paragraph{Structured preferences across LLM judges.}
Judge heterogeneity has also been modeled independently of position bias.
\citet{jin2020rank} model user-specific accuracy levels, while \citet{xu2026judgeaware} introduce judge-specific discrimination parameters around a common latent ranking, capturing heterogeneity in how strongly judges respond to the shared ranking signal.
Most directly, \citet{yu2026heterogeneous} proposes the consensus-plus-heterogeneity decomposition \(S=\gamma\mu^\top+UV^\top\), separating a consensus preference direction, judge-specific sensitivity to that direction, and structured residual disagreement.
DIAL adopts this structured representation for the \emph{position-debiased} LLM score matrix.
After estimating and removing order effects, DIAL calibrates within either \(\operatorname{span}(\mu)\) or the full item-side space \(\col(W)=\operatorname{span}(\mu)\oplus\col(V)\).
Accordingly, HJA targets interpretable decomposition and uncertainty quantification of heterogeneous judge preferences, whereas DIAL carries the learned representation into a separate human-alignment problem with population calibrated target $Mc_M^\star$, where $c_M^\star\in\argmin_cR_{\human}(Mc)$.

\paragraph{Human-supervised calibration and statistical evaluation.}
Several methods use human judgments to improve LLM-based evaluation:
HD-Eval \citep{liu2024hdeval} learns human-aligned evaluation criteria and aggregation rules, while SAJA \citep{kola2026saja} uses a small human-labeled sample to calibrate features extracted from LLM evaluations toward human-aligned scores.
In contrast, DIAL formulates the human target as a latent score and asks how limited pairwise human comparisons identify and estimate that score through a preference representation learned from LLM judges.

\paragraph{Aggregated evaluation functional estimation.}
A complementary statistical literature uses abundant automatic labels together with a smaller human reference sample to estimate evaluation quantities more efficiently.
\citet{boyeau2025autoeval}, \citet{chen2026efficient}, and \citet{lee2026correctly} study statistically principled estimation and uncertainty quantification for targets such as aggregate performance measures, benchmark scores, pairwise win rates, or evaluation accuracy, while \citet{divekar2026precise} considers ranking metrics such as Precision@\(K\).
These methods primarily improve the estimation of prespecified evaluation functionals.
DIAL instead estimates the latent item-level preference \(s_{\human}\), or under approximation its calibrated counterpart \(Mc_M^\star\), from which pairwise probabilities and rankings are induced.
As shown in \Cref{thm:population-risk-tradeoff}, restricting human estimation to the learned LLM preference space trades approximation error against the estimation error.

Finally, AtC \citep{xie2026atc} first aggregates human comparisons into a consensus ordering and then isotonic-calibrates predictive scores to respect that ordering.
It is the closest stage-matched alternative for our human-calibration stage, but takes a predictive score vector and a human-derived ordering as inputs, instead of jointly estimating position effects and multi-judge preference structure.
In \Cref{subsec:real-data}, we therefore instantiate \methAtC{} using pooled BTL rank aggregation and the same LLM-only position-debiased consensus score as DIAL.
DIAL instead learns the position-debiased LLM preference space \(\col(W)\) and calibrates a chosen \(\col(M)\subseteq\col(W)\) from limited human comparisons.
The weighted estimator in \Cref{sec:estimation} further allows the amount of LLM anchoring to vary with human-LLM agreement, interpolating between unrestricted human-only estimation and staged LLM-anchored calibration.

\clearpage
\section{Supporting Results for Position Debiasing}
\label{app:proofs}
\subsection[Proof of Proposition \ref{prop:ident-LLM}]{Proof of \Cref{prop:ident-LLM}}\label{app:proof:prop:ident-LLM}
\begin{proof}[Proof of \Cref{prop:ident-LLM}]
    Fix $k$ and let $\delta:=s_k-\widetilde s_k$ and $c:=b_k-\widetilde b_k$ for two centered parameter values inducing the same comparison probabilities.
    Since the logistic link is one-to-one,
    \[
    [D_k,z_k]
    \begin{pmatrix}
    \delta\\
    c
    \end{pmatrix}
    =0,
    \qquad
    \one^\top\delta=0.
    \]
    Because $D_k\one_N=0$, $(\one_N^\top,0)^\top$ always belongs to $\ker([D_k,z_k])$.
    Hence centered identification holds if and only if this is the entire null space, equivalently $\rank([D_k,z_k])=N$; if the rank is smaller, centering the first block of any additional null vector gives a nonzero centered perturbation with the same comparison probabilities.
    Finally, $\rank([D_k,z_k])=N$ if and only if $\rank(D_k)=N-1$ and $z_k\notin\col(D_k)$.
    For an incidence matrix, $\rank(D_k)=N-1$ is equivalent to connectivity of $\mathcal G_k$, which proves the equivalence.
    If a pair is observed under both orders, its two rows in $D_k$ coincide while the corresponding entries of $z_k$ are opposite, so $z_k\notin\col(D_k)$.
    Applying the argument judge-wise proves identification of $(S,b)$ under \Cref{cond:llm-design}.
\end{proof}

\begin{corollary}[Cycle criterion for the order effect]
\label[corollary]{cor:cycle-criterion}
    Suppose $\mathcal G_k$ is connected.
    Then $b_k$ is identifiable if and only if some cycle of the comparison multigraph $\{(i_t,j_t)\}_{t=1}^{m_k}$, traversed with signs, has nonzero signed sum of the order labels $a_t$.
\end{corollary}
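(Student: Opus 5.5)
By \Cref{prop:ident-LLM}, when $\mathcal G_k$ is connected, identifiability of $(s_k,b_k)$, and in particular of $b_k$, is equivalent to $z_k\notin\col(D_k)$. So the plan is to characterize $\col(D_k)$ as a cycle space orthogonal complement and read off the criterion. I would view $D_k$ as the (oriented) incidence matrix of the comparison multigraph with $m_k$ edges. Edge $t$ is oriented from $i_t$ to $j_t$, and parallel edges from the two display orders are kept separate. A vector $x\in\RR^{m_k}$ lies in $\col(D_k)$ exactly when it is a potential difference, $x_t=\phi_{i_t}-\phi_{j_t}$ for some $\phi\in\RR^N$.

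The key step is the standard fact $\col(D_k)=\ker(D_k^\top)^\perp$, together with the identification of $\ker(D_k^\top)$ with the cycle space. For a cycle $C$ traversed in a fixed direction, define the signed indicator $\xi_C\in\{-1,0,1\}^{m_k}$ by $(\xi_C)_t=+1$ if edge $t$ is traversed from $i_t$ to $j_t$, $-1$ if it is traversed backward, and $0$ if it is not on $C$. Then $D_k^\top\xi_C=0$, since each vertex on the cycle has one entering and one leaving edge. Conversely, the signed cycle vectors span $\ker(D_k^\top)$; this is the flow/cycle space of the multigraph, of dimension $m_k-N+1$ for connected $\mathcal G_k$. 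To make this self-contained, I would fix a spanning tree and use its fundamental cycles: there are $m_k-(N-1)$ of them, they are linearly independent because each contains a distinct non-tree edge, and the dimension count $\dim\ker(D_k^\top)=m_k-\rank(D_k)=m_k-N+1$ then shows they span. A length-two cycle, formed by the same pair under both orders, is allowed because parallel edges are distinct columns of $D_k^\top$.

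With this in hand, $z_k\notin\col(D_k)$ holds iff $z_k$ is not orthogonal to $\ker(D_k^\top)$, that is, iff $\xi_C^\top z_k\neq0$ for some cycle $C$. (If $z_k$ were orthogonal to every cycle vector, it would be orthogonal to their span, which is all of $\ker(D_k^\top)$.) Since $\xi_C^\top z_k=\sum_{t\in C}\pm a_t$ is exactly the signed order sum along $C$, the corollary follows. One can also argue constructively: if all signed cycle sums vanish, define $\phi$ by summing signed labels along tree paths from a root; consistency on non-tree edges is precisely the vanishing of the fundamental cycle sums, which gives $z_k=D_k\phi$.

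The main obstacle is the spanning claim for the cycle space in the multigraph setting, including length-two cycles and a careful sign convention. The spanning-tree dimension count handles this cleanly, so I would rely on it rather than on a general graph-theory citation.
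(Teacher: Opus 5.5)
Your proposal is correct and follows the paper's route. Both reduce via \Cref{prop:ident-LLM} to $z_k\notin\col(D_k)$, then use that an edge function is a potential difference exactly when all its signed cycle sums vanish; the paper cites this as standard, while you prove it through $\col(D_k)=\ker(D_k^\top)^\perp$ and the fundamental-cycle basis of $\ker(D_k^\top)$. One small point, which you share with the paper: \Cref{prop:ident-LLM} concerns the pair $(s_k,b_k)$, so for $b_k$ alone the ``only if'' direction needs the one-line remark that $z_k=D_kx$ makes the recentred $(s_k-x,b_k+1)$ observationally equivalent to $(s_k,b_k)$.
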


\begin{proof}[Proof of \Cref{cor:cycle-criterion}]
    By \Cref{prop:ident-LLM} it suffices to show that $z_k\in\col(D_k)$ if and only if every signed cycle sum vanishes.
    An element of $\col(D_k)$ has the form $(x_{i_t}-x_{j_t})_{t=1}^{m_k}$ for some $x\in\RR^N$, that is, an edge function that is the difference of a potential, and a function on the oriented edges of a graph is a potential difference if and only if its signed sum around every cycle is zero.
    A pair observed under both orders contributes a cycle of length two with signed sum $2$, which recovers the sufficient condition stated after \Cref{prop:ident-LLM}.
\end{proof}

\begin{proposition}[Identification under the low-rank structure]
\label[proposition]{prop:ident-lowrank}
    Suppose \Cref{cond:llm-structure} holds.
    If $\col(W)$ is known, then writing $s_k=W\alpha_k$ the parameter $(\alpha_k,b_k)$ of judge $k$ is identifiable if and only if $\rank([D_kW,z_k])=r+2$.
    If $\col(W)$ is unknown, a necessary condition for identifiability of $(S,b)$ is the parameter count $\sum_{k=1}^Km_k\geq(r+1)(K+N-r-2)+K$, and it is sufficient that some subset $\mathcal K_0$ of judges satisfies $\rank([D_k,z_k])=N$ for every $k\in\mathcal K_0$ together with $\rank(S_{\mathcal K_0\largecdot})=r+1$, in which case each remaining judge needs only $\rank([D_kW,z_k])=r+2$.
\end{proposition}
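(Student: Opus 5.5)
The plan is to argue in three parts, reusing the linear-algebra argument from the proof of \Cref{prop:ident-LLM}.

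\emph{Known $\col(W)$.} Suppose $(\alpha_k,b_k)$ and $(\widetilde\alpha_k,\widetilde b_k)$ induce the same comparison probabilities. Because the logistic link is injective, the differences $\delta=\alpha_k-\widetilde\alpha_k$ and $c=b_k-\widetilde b_k$ satisfy $[D_kW,z_k](\delta^\top,c)^\top=0$. Here $W$ has full column rank $r+1$ and $W^\top\one_N=\zero$, so $s_k=W\alpha_k$ is automatically centered. Unlike the unrestricted case, no nuisance direction $\one_N$ remains, because $\one_N\notin\col(W)$. Identification therefore holds exactly when $[D_kW,z_k]$ has trivial kernel, i.e.\ full column rank $r+2$. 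Conversely, if the kernel contains a nonzero vector $(\delta,c)$, adding it to the parameters changes them without changing any probability. One should check that such a perturbation is nonzero as a parameter: if $\delta=0$ then $c\,z_k=0$, which forces $c=0$ since $z_k\neq0$.

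\emph{Necessity of the count.} The map $(\gamma,\mu,U,V,b)\mapsto\{\logit p_{kij}^{(a)}\}$ sends $\Theta_r$ into $\RR^{\sum_k m_k}$. Its image is determined by $(S,b)$, where $S$ ranges over the rank-$(r+1)$ matrices with $S\one_N=0$. This set of $S$ is a smooth manifold of dimension $(r+1)(K+(N-1)-(r+1))=(r+1)(K+N-r-2)$, and $b$ contributes $K$ more. The map $(S,b)\mapsto\eta$ is linear on the ambient space and hence smooth. If the target dimension $\sum_km_k$ were smaller than the manifold dimension, the differential would have a nontrivial kernel at every point. By the constant-rank theorem, or simply because a smooth map cannot be injective on an open set of a higher-dimensional manifold (invariance of domain/Sard), injectivity fails. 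This gives the necessary condition.

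\emph{Sufficiency.} For $k\in\mathcal K_0$, \Cref{prop:ident-LLM} applied judge-wise identifies $(s_k,b_k)$, so $S_{\mathcal K_0\largecdot}$ is identified. Under \Cref{cond:llm-structure}, $\row(S)=\col(W)$ because $S=[\gamma,U]W^\top$. This requires $[\gamma,U]$ to have rank $r+1$, which follows from $\rank(S)=r+1$. Since $\row(S_{\mathcal K_0\largecdot})\subseteq\row(S)$ and both have dimension $r+1$, they coincide, so $\col(W)$ is identified. The remaining judges then fall under the known-$\col(W)$ part with $\rank([D_kW,z_k])=r+2$. This identifies all of $(S,b)$, and hence, as noted after \Cref{cond:llm-structure}, $\mu,\gamma,UV^\top$ as well.

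The main obstacle is the necessity step. It has to be phrased carefully: a dimension count yields non-injectivity only locally, on a manifold of the right dimension. I would invoke the fact that a $C^1$ map from an $m$-manifold to $\RR^p$ with $p<m$ is nowhere locally injective. I would also note that the constraint $\one_K^\top\gamma>0$ is open, so it does not reduce dimension.
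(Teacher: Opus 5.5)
Your proposal is correct and follows the paper's proof essentially step for step. The three steps are the same: the full-column-rank/kernel argument for known $\col(W)$ (no centering direction arises because $\col(W)$ is orthogonal to the all-ones vector), the parameter count $(r+1)(K+N-r-2)+K$ for necessity, and sufficiency via $\row(S_{\mathcal K_0\largecdot})=\col(W)$ followed by the first part for the remaining judges. Your invariance-of-domain/constant-rank argument only makes explicit what the paper asserts in one line, and in the sufficiency step the containment $\row(S_{\mathcal K_0\largecdot})\subseteq\col(W)$ with equal dimensions already suffices, so the detour through $\operatorname{rank}([\gamma,U])$ is unnecessary.
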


\begin{proof}[Proof of \Cref{prop:ident-lowrank}]
    With $\col(W)$ known the linear predictor is $D_kW\alpha_k+z_kb_k$, which determines $(\alpha_k,b_k)$ if and only if $[D_kW,z_k]$ has full column rank $r+2$; no centering constraint is required because $\one_N^\top W=\zero_{r+1}$.
    For unknown $\col(W)$, removing the calibrated score from \eqref{eq:dim-Mr} leaves $\dim\{(S,b)\}=(r+1)(K+N-r-2)+K$, and identification requires at least as many comparison cells.
    For sufficiency, \Cref{prop:ident-LLM} identifies $s_k$ for every $k\in\mathcal K_0$, the rank condition gives $\col(W)=\operatorname{row}(S_{\mathcal K_0\largecdot})$, and the first part applies to each remaining judge.
\end{proof}

A judge with too few comparisons to identify its own score vector can still be identified through the shared space, so the low-rankness in \Cref{cond:llm-structure} contributes to both identification and efficiency.

\subsection{Debiased versus Order-Averaged Probabilities}
\label{app:debias-vs-average}

Throughout this section, fix a judge $k$ and suppress it from the notation: write $s_i$, $b$, $\mathcal G$, and $p^{(a)}_{ij}$ for $s_{ki}$, $b_k$, $\mathcal G_k$, and $p^{(a)}_{kij}$, and let $\Delta_{ij}:=s_i-s_j$.
We distinguish two mathematically related but conceptually different quantities.

\begin{enumerate}[(i)]
    \item \textbf{Position-debiased latent preference}, which defines the comparison probability by removing the order term from the fitted linear predictor and retaining only the latent score difference:
    \begin{equation*}
        p^{\mathrm{deb}}_{ij}
        :=
        \sigma(\Delta_{ij})
        =
        \sigma\!\left(s_i-s_j\right).
    \end{equation*}
    $s$ represents the latent preference component separated from the nuisance order effect, so that the position-debiased ranking of the judge is obtained directly from $s$,
    \(i \succ j\) iff \(s_i>s_j\) iff \(p^{\mathrm{deb}}_{ij}>\frac12\).

    \item \textbf{Order-averaged protocol probability}, which averages the two order-specific probabilities:
    \begin{equation*}
        p^{\mathrm{avg}}_{ij}
        :=
        \frac12\left\{
            p^{(1)}_{ij}+p^{(-1)}_{ij}
        \right\}
        =
        \frac12\left\{
            \sigma(\Delta_{ij}+b)
            +
            \sigma(\Delta_{ij}-b)
        \right\}.
    \end{equation*}
    Unlike $p^{\mathrm{deb}}_{ij}$, it continues to depend on the magnitude of the judge's order effect.
\end{enumerate}

DIAL uses \eqref{eq:binomial-model-llm}, equivalently $p^{\mathrm{deb}}_{ij}$, as the position-debiased latent preference of the judge, instead of $p^{\mathrm{avg}}_{ij}$.
Proposition~\ref{prop:debiasing-relation} characterizes the connection between the two quantities.

\begin{proposition}[Debiased versus order-averaged probabilities]
\label[proposition]{prop:debiasing-relation}
    For a fixed pair $i<j$, the following hold:
    \begin{enumerate}[(i), ref=\roman*]
        \item \label{eq:deb-logodds} The position-debiased log-odds equal the average of the two order-specific log-odds:
        \[
            \logit\!\left(p^{\mathrm{deb}}_{ij}\right)
            =
            \Delta_{ij}
            =
            \frac12\left\{
                \logit(p^{(1)}_{ij})
                +
                \logit(p^{(-1)}_{ij})
            \right\},
        \quad
            b
            =
            \frac12\left\{
                \logit(p^{(1)}_{ij})
                -
                \logit(p^{(-1)}_{ij})
            \right\}.
        \]
        Thus $p^{\mathrm{deb}}_{ij}$ averages the two fitted comparisons on the log-odds scale, whereas $p^{\mathrm{avg}}_{ij}$ averages them on the probability scale.

        \item \label{eq:deb-attenuation}
        For finite $\Delta_{ij}$, if $b\neq0$ and $\Delta_{ij}\neq0$,
        \[
            \left|
            p^{\mathrm{avg}}_{ij}-\frac12
            \right|
            <
            \left|
            p^{\mathrm{deb}}_{ij}-\frac12
            \right|.
        \]
        Thus, probability-scale order averaging attenuates preferences toward $1/2$ whenever a nonzero order effect is present.

        \item \label{eq:deb-monotone} For every finite $b$, $p^{\mathrm{avg}}_{ij}$ is a strictly increasing function of $\Delta_{ij}$ and satisfies
        \(
            p^{\mathrm{avg}}_{ij}=\frac12
            \ \Longleftrightarrow\ 
            \Delta_{ij}=0.
        \)
        Consequently,
        \[
            p^{\mathrm{avg}}_{ij}>\frac12
            \quad\Longleftrightarrow\quad
            p^{\mathrm{deb}}_{ij}>\frac12
            \quad\Longleftrightarrow\quad
            s_i>s_j.
        \]
    \end{enumerate}
\end{proposition}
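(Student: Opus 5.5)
The three parts are elementary consequences of the model \eqref{eq:binomial-model-llm} with $\logit(p^{(a)}_{ij})=\Delta_{ij}+ab$, and I will handle them in order.

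\emph{Part (\ref{eq:deb-logodds}).} Substitute $a=\pm1$ to get $\logit(p^{(1)}_{ij})=\Delta_{ij}+b$ and $\logit(p^{(-1)}_{ij})=\Delta_{ij}-b$. Half the sum of these gives $\Delta_{ij}=\logit(p^{\mathrm{deb}}_{ij})$, and half the difference gives $b$. The sentence contrasting the log-odds scale with the probability scale then just restates the definition of $p^{\mathrm{avg}}_{ij}$.

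\emph{Part (\ref{eq:deb-attenuation}).} Write $f(x):=\sigma(x)-\tfrac12$. This function is odd, strictly increasing, and strictly concave on $(0,\infty)$. By oddness, $f(\Delta-b)$ equals $-f(b-\Delta)$, and $f$ is unchanged if $b$ is replaced by $-b$. So I may assume $\Delta>0$ and $b>0$, the case $\Delta<0$ following by symmetry. The goal is
\[
0<\tfrac12\{f(\Delta+b)+f(\Delta-b)\}<f(\Delta).
\]
\begin{itemize}
\item \emph{Positivity.} Since $\Delta+b>|\Delta-b|$, monotonicity and oddness give $f(\Delta+b)>f(|\Delta-b|)\ge -f(\Delta-b)$.
\item \emph{Upper bound.} Equivalently, $f(\Delta+b)-f(\Delta)<f(\Delta)-f(\Delta-b)$, i.e.\ the increment of $f$ over $[\Delta,\Delta+b]$ is smaller than over $[\Delta-b,\Delta]$. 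This holds because $f'(x)=\sigma(x)\{1-\sigma(x)\}$ is even and strictly decreasing in $|x|$. For every $u\in(0,b)$ we have $|\Delta+u|=\Delta+u>|\Delta-u|$, hence $f'(\Delta+u)<f'(\Delta-u)$. Integrating over $u\in(0,b)$ gives the strict inequality.
\end{itemize}
Using $f'$ this way avoids splitting into the cases $\Delta\ge b$ and $\Delta<b$, so I will present it that way. Without it, this would be the only step needing care: when $\Delta<b$, concavity alone fails because $\Delta-b$ lies on the convex side.

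\emph{Part (\ref{eq:deb-monotone}).} The map $\Delta\mapsto \tfrac12\{\sigma(\Delta+b)+\sigma(\Delta-b)\}$ has derivative $\tfrac12\{\sigma'(\Delta+b)+\sigma'(\Delta-b)\}>0$, so it is strictly increasing. At $\Delta=0$ it equals $\tfrac12\{\sigma(b)+\sigma(-b)\}=\tfrac12$, using $\sigma(-x)=1-\sigma(x)$. Strict monotonicity then gives $p^{\mathrm{avg}}_{ij}>\tfrac12$ iff $\Delta_{ij}>0$. Likewise $p^{\mathrm{deb}}_{ij}=\sigma(\Delta_{ij})>\tfrac12$ iff $\Delta_{ij}>0$ iff $s_i>s_j$, which yields the displayed chain of equivalences.
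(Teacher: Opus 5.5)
Your proof is correct. Parts (i) and (iii) match the paper's argument essentially step for step.

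Part (ii) takes a genuinely different route. The paper writes $\sigma(x)=\{1+\tanh(x/2)\}/2$ and sets $u=\tanh(\Delta_{ij}/2)$, $t=\tanh(b/2)$. The $\tanh$ addition formula then gives the exact identity $p^{\mathrm{avg}}_{ij}-\tfrac12=(p^{\mathrm{deb}}_{ij}-\tfrac12)\,(1-t^2)/(1-u^2t^2)$. The claim reduces to checking $0<(1-t^2)/(1-u^2t^2)<1$, which holds because $0<t^2<1$ and $u^2<1$.

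You instead reduce by symmetry to $\Delta,b>0$. You then show positivity separately and compare the increments of $f=\sigma-\tfrac12$ over $[\Delta-b,\Delta]$ and $[\Delta,\Delta+b]$, using that $f'$ is even and strictly decreasing in $|x|$. One wording point: in the reduction, it is the average $\tfrac12\{f(\Delta+b)+f(\Delta-b)\}$ that is invariant under $b\mapsto -b$, not $f$ itself; the intended meaning is clear.

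What each approach buys:
\begin{itemize}
\item Your argument is more general. It uses only that the link is a symmetric CDF whose density strictly decreases in $|x|$, so it covers, for example, a probit link. It also avoids splitting into the cases $\Delta\ge b$ and $\Delta<b$.
\item The paper's identity is specific to the logistic link but gives more. It produces the attenuation factor in closed form, and since that factor is positive, the sign agreement in part (iii) also follows directly.
\item Most importantly for the paper, the identity shows that the factor depends on the pair through $u$. The paper uses this right afterward to argue that order-averaged log-odds need not inherit the rank-$(r+1)$ structure of $S$, and that fitting a BTL model to order-averaged probabilities is generically misspecified.
\end{itemize}
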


\begin{proof}[Proof of \Cref{prop:debiasing-relation}]
    Let \(u=\tanh(\Delta_{ij}/2)\) and \(t=\tanh(b/2)\).
    For \eqref{eq:deb-logodds}, the claim follows immediately from
    $\logit(p^{(1)}_{ij})=\Delta_{ij}+b$ and
    $\logit(p^{(-1)}_{ij})=\Delta_{ij}-b$.
    For \eqref{eq:deb-attenuation}, using
    $\sigma(x)=\{1+\tanh(x/2)\}/2$ and the addition formula for
    $\tanh$ gives
    \begin{align}
        p^{\mathrm{avg}}_{ij}-\frac12
        =
        \frac{u}{2}
        \frac{1-t^2}{1-u^2t^2}
        =
        \left(p^{\mathrm{deb}}_{ij}-\frac12\right)
        \frac{1-t^2}{1-u^2t^2}.\label{eq:p-avg-p-deb}
    \end{align}
    For finite $\Delta_{ij}$ and nonzero $b$, we have
    $0<t^2<1$ and $u^2<1$, so
    \[
        0<
        \frac{1-t^2}{1-u^2t^2}
        <1.
    \]
    Finally, for \eqref{eq:deb-monotone},
    \(
        \frac{\partial p^{\mathrm{avg}}_{ij}}{\partial\Delta}
        =
        \frac12\left\{
            \sigma'(\Delta+b)+\sigma'(\Delta-b)
        \right\}>0
    \)
    and symmetry gives $p^{\mathrm{avg}}_{ij}=1/2$ at $\Delta_{ij}=0$.
\end{proof}

As shown in \Cref{prop:debiasing-relation}, although $p^{\mathrm{avg}}_{ij}$ and $p^{\mathrm{deb}}_{ij}$ induce the same pairwise ordering under the model, they generally differ in probability magnitude and calibration.
The attenuation factor $(1-t^2)/(1-u^2t^2)$ in \eqref{eq:p-avg-p-deb} depends on the pair through $u=\tanh(\Delta_{ij}/2)$, so the matrix of order-averaged log-odds need not inherit the rank-$(r+1)$ structure that \Cref{cond:llm-structure} imposes on $S$.
By contrast $p^{\mathrm{deb}}_{ij}$ has a clean reading: its odds $\exp({\Delta_{ij}})$ are the geometric mean of the two order-specific odds $\exp({\Delta_{ij}+b})$ and $\exp({\Delta_{ij}-b})$.
Although order averaging preserves pairwise ordering, its log-odds are generally not exact score differences, so fitting a BTL model to the averaged probabilities is generically misspecified.

\begin{proposition}[Order averaging is not BTL]
\label[proposition]{prop:avg-nonadditive}
    If $b\neq0$ and $\mathcal G$ contains a triangle, then for $s$ outside a Lebesgue-null set there is no $\widetilde s\in\RR^N$ with $\logit(p^{\mathrm{avg}}_{ij})=\widetilde s_i-\widetilde s_j$ on every compared pair.
\end{proposition}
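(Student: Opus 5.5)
Existence of such an $\widetilde s$ forces the order-averaged log-odds to sum to zero around every cycle, so it suffices to show that around a single triangle this sum is not identically zero in $s$ and hence vanishes only on a null set. Define
\[
f_b(\Delta):=\logit\!\left[\tfrac12\{\sigma(\Delta+b)+\sigma(\Delta-b)\}\right],
\]
so that $\logit(p^{\mathrm{avg}}_{ij})=f_b(\Delta_{ij})$. Let the triangle have vertices $i,j,l$. If $\widetilde s$ existed, then
\[
f_b(\Delta_{ij})+f_b(\Delta_{jl})+f_b(\Delta_{li})=(\widetilde s_i-\widetilde s_j)+(\widetilde s_j-\widetilde s_l)+(\widetilde s_l-\widetilde s_i)=0 .
\]
Here $f_b$ is odd, by the symmetry noted in the proof of \Cref{prop:debiasing-relation}, so the orientation of each edge does not matter. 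The pair-orientation convention $i<j$ is harmless for the same reason. Since $\Delta_{ij}+\Delta_{jl}+\Delta_{li}=0$, set $x=\Delta_{ij}$ and $y=\Delta_{jl}$. The condition then becomes
\[
g(x,y):=f_b(x)+f_b(y)-f_b(x+y)=0 .
\]
Under the map $s\mapsto (x,y)$, a linear surjection onto $\RR^2$, the preimage of a Lebesgue-null set in $\RR^2$ is null in $\RR^N$. So it suffices to show that $Z=\{g=0\}$ is null in $\RR^2$.

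Next, $f_b$ is real-analytic on $\RR$, because the averaged probability lies in $(0,1)$ and $\logit$ is analytic there. Hence $g$ is real-analytic on $\RR^2$. The zero set of a real-analytic function that is not identically zero has Lebesgue measure zero. It therefore remains to show $g\not\equiv0$.

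This last step is the main obstacle. Suppose $g\equiv0$. Then $f_b$ is additive, and being continuous it is linear: $f_b(\Delta)=c\Delta$. I would rule this out by asymptotics. As $\Delta\to\infty$,
\[
1-p^{\mathrm{avg}}=\tfrac12\{\sigma(-\Delta-b)+\sigma(-\Delta+b)\}\sim \tfrac12 e^{-\Delta}(e^{-b}+e^{b}),
\]
so $f_b(\Delta)=\Delta-\log\cosh b+o(1)$. A linear $f_b$ would therefore have to satisfy $c=1$ and $\log\cosh b=0$, which forces $b=0$ and contradicts $b\neq0$. A local alternative is a Taylor expansion at zero: $f_b'(0)=1-t^2$ with $t=\tanh(b/2)$, which is less than $1$, and this is incompatible with the slope $1$ obtained at infinity. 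Either argument gives $g\not\equiv0$, and the proposition follows.
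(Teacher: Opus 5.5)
Your proof is correct and follows the paper's argument. You reduce the existence of $\widetilde s$ to the triangle identity $f_b(x)+f_b(y)=f_b(x+y)$, use real-analyticity to get a null zero set, and exclude linearity of $f_b$ by contrasting $f_b'(0)=1-\tanh^2(b/2)<1$ with unit slope at infinity (your $-\log\cosh b$ offset is an equally valid finish), while also spelling out the pullback of the null set from $\RR^2$ to $\RR^N$ via the linear surjection $s\mapsto(x,y)$, which the paper leaves implicit.
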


\begin{proof}[Proof of \Cref{prop:avg-nonadditive}]
    Let
    \[
    g_b(\Delta)
    :=
    \logit
    \!\left[
    \tfrac12\{\sigma(\Delta+b)+\sigma(\Delta-b)\}
    \right].
    \]
    By \Cref{prop:debiasing-relation}\eqref{eq:deb-monotone} and symmetry, $g_b$ is odd, strictly increasing, and real analytic.
    For $b\neq0$,
    \[
    g_b'(0)=4\sigma(b)\{1-\sigma(b)\}<1,
    \qquad
    \lim_{\Delta\to\infty}g_b'(\Delta)=1,
    \]
    so $g_b$ is nonlinear.
    By \Cref{cor:cycle-criterion}, the edge values $g_b(\Delta_{ij})$ are score differences only if their signed sum vanishes on every cycle.
    On a triangle, this requires
    \[
    F(x,y):=g_b(x+y)-g_b(x)-g_b(y)=0.
    \]
    Since $F$ is real analytic and is not identically zero because $g_b$ is nonlinear, its zero set has Lebesgue measure zero.
\end{proof}

Proposition~\ref{prop:avg-nonadditive} is an exact representability result and does not imply that the best BTL approximation must have large error.
Indeed, for small $b$,
\[
g_b(\Delta)
=
\Delta-\frac{b^2}{2}\tanh(\Delta/2)+O(b^4),
\]
while locally in $\Delta$, $g_b(\Delta)=\operatorname{sech}^2(b/2)\Delta+\cO(\Delta^3)$.
Thus, balanced order averaging can behave approximately as a rescaling of the latent score over the range of contrasts encountered in a dataset, even though no single score vector reproduces the averaged probabilities exactly.

\subsection{Estimation of the Position-Debiased Representation}
\label{app:stage-one}

The LLM-only endpoint $\widehat\theta_\infty$ of \eqref{eq:staged-calibrated-estimator} is a maximum-likelihood estimator in the structured model of \Cref{cond:llm-structure}.
Its limit law supplies both the stability used by \Cref{cor:staged-oracle-equivalence} and the judge diagnostics for the calibrated preference.
In particular, \Cref{prop:stage-one-normality} yields Wald statistics for $H_0:b_k=0$, testing whether judge $k$ shows no order effect, and for $H_0:b_k=b_{k'}$, testing whether two judges share one, together with intervals for the consensus loadings $\gamma_k$; all are computed from the empirical inverse information at $\widehat\xi_\infty$ at the $\sqrt{n_{\llm}}$ scale.
Both hypotheses are basis-invariant: $b$ is identified by \Cref{prop:ident-LLM}, and $\gamma=N^{-1}S\mu$ is identified by the normalization in \Cref{cond:llm-structure}.
The argument parallels the fixed-rank likelihood analysis of \citet[Theorem~3]{yu2026heterogeneous}, with the judge-wise comparison design augmented by the order indicator.

\begin{proposition}[Stage-one estimation]
\label[proposition]{prop:stage-one-normality}
    Suppose that $N$, $K$, and $r$ are fixed, that the LLM comparison model \eqref{eq:binomial-model-llm} and \Cref{cond:llm-structure} hold, that $n_k/n_{\llm}\to\kappa_k>0$ and $n_{kij}^{(a)}/n_k\to\rho_{kij}^{(a)}$ with each limiting within-judge support satisfying \Cref{cond:llm-design}, and that every limiting cell probability lies in $(0,1)$.
    Then, with probability tending to one, $\widehat\theta_\infty$ is a regular minimizer of $\ell_{\llm}$, and in any smooth coordinate $\xi$ for the regular structured model
    \[
    \sqrt{n_{\llm}}\left(\widehat\xi_\infty-\xi_0\right)
    \dto
    \cN\left(0,\mathcal I_\xi^{-1}\right),
    \qquad
    \mathcal I_\xi:=\nabla_\xi^2R_{\llm}\{S(\xi),b(\xi)\}\big|_{\xi=\xi_0},
    \]
    with $\mathcal I_\xi\succ0$ by \Cref{cond:llm-design}.
    Consequently, $\widehat b_\infty$, $\widehat\gamma_\infty$, and $\widehat S_\infty$ are $\sqrt{n_{\llm}}$-consistent and asymptotically normal, and the estimated item-side space satisfies
    \begin{equation}
    \left\|P_{\col(\widehat W_\infty)}-P_{\col(W)}\right\|_{\mathrm F}
    =
    \Op(n_{\llm}^{-1/2}),
    \label{eq:subspace-rate}
    \end{equation}
    where $P_{\col(\cdot)}$ denotes orthogonal projection onto the indicated space.
\end{proposition}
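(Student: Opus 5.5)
The plan is to treat $\widehat\theta_\infty$ as an M-estimator on a smooth manifold: the regular structured model. Let
\[
\mathcal S_r:=\{(S,b):S\one_N=\zero_K,\ \rank(S)=r+1,\ S^\top\one_K\neq\zero_N\}.
\]
This is the image of $\Theta_r$ under $\theta\mapsto(S_\theta,b)$. First I will record that it is a smooth embedded submanifold of dimension $(r+1)(K+N-r-2)+K$. This follows from the fixed-rank matrix manifold intersected with the linear constraint $S\one_N=\zero_K$, an open condition, and a free $b$. The parameter count agrees with the one in \Cref{prop:ident-lowrank}. Working in local coordinates $\xi$ on $\mathcal S_r$ removes the $O(r)$ rotation ambiguity of $(U,V)$. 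Since $\mu,\gamma,UV^\top$ are smooth functions of $S$ on $\mathcal S_r$ (via the formulas after \Cref{cond:llm-structure}), results for $\widehat\xi_\infty$ transfer to $\widehat b_\infty,\widehat\gamma_\infty,\widehat S_\infty$ by the delta method.

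Second, consistency. The limiting risk
\[
R_{\llm}(S,b)=\sum_k\kappa_k\sum_{(i,j,a)}\rho_{kij}^{(a)}\operatorname{KL}\text{-type terms}
\]
is uniquely minimized over the closure of $\mathcal S_r$ at $(S_0,b_0)$. This uses \Cref{prop:ident-LLM} applied to each limiting support, together with strict convexity in each judge's linear predictor. Uniform convergence $\ell_{\llm}-\ell_{\llm}(S_0,b_0)\to R_{\llm}-R_{\llm}(S_0,b_0)$ holds on compacts by the LLN for binomial proportions and $n_{kij}^{(a)}/n_{\llm}\to\kappa_k\rho_{kij}^{(a)}$.

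\textbf{The main obstacle is non-compactness and boundary degeneration.} Two things can go wrong: minimizers could escape to infinity, or they could drift to a lower-rank boundary where $\mathcal S_r$ is not closed. I would handle escape by convexity in the unconstrained $(S,b)$. On the full space of centered $(S,b)$ the loss is convex, and it is coercive in probability under \Cref{cond:llm-design} because all limiting cell probabilities are in $(0,1)$. Hence sublevel sets of $\ell_{\llm}$ are eventually contained in a fixed compact ball. For the rank issue, the truth has $\rank(S_0)=r+1$ and $S_0^\top\one_K\neq0$, both open conditions. Consistency within the closed set $\{\rank\le r+1\}$ therefore forces $\widehat S_\infty$ into a neighbourhood of $S_0$ where these hold. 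So with probability tending to one, $\widehat\theta_\infty$ is a regular interior minimizer.

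Third, asymptotic normality. Expand the score in $\xi$ around $\xi_0$. The score satisfies $\sqrt{n_{\llm}}\nabla_\xi\ell_{\llm}(\xi_0)\dto\cN(0,\mathcal I_\xi)$ by the multinomial CLT, using independence and the Bernoulli information identity, since the model is correctly specified. The Hessian converges uniformly to $\mathcal I_\xi$. I will show $\mathcal I_\xi=\dot\eta^\top\operatorname{diag}(\kappa_k\rho\,p(1-p))\dot\eta$ is positive definite. Its tangent vectors lie in centered $(\delta S,\delta b)$ directions, and the linear predictor map $(\delta S,\delta b)\mapsto([D_k,z_k](\delta s_k,\delta b_k))_k$ is injective on those directions by \Cref{cond:llm-design}. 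The chart is an immersion, so $\mathcal I_\xi\succ0$. Hence
\[
\sqrt{n_{\llm}}(\widehat\xi_\infty-\xi_0)=-\mathcal I_\xi^{-1}\sqrt{n_{\llm}}\nabla\ell_{\llm}(\xi_0)+\op(1)\dto\cN(0,\mathcal I_\xi^{-1}).
\]

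Finally, the subspace rate \eqref{eq:subspace-rate}. The map $S\mapsto P_{\col(W)}=P_{\row(S)}$ is smooth, hence locally Lipschitz, on rank-$(r+1)$ matrices; this follows from $P=S^\top(SS^\top)^{+}S$ with the rank locally constant. Combined with $\|\widehat S_\infty-S_0\|_{\mathrm F}=\Op(n_{\llm}^{-1/2})$, this gives the rate.
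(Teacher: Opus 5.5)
Your proposal is correct and follows essentially the same route as the paper. Identification via \Cref{prop:ident-LLM} on the limiting supports gives a unique population minimizer and $\mathcal I_\xi\succ0$. Coercivity of the empirical loss, since every active cell eventually shows both outcomes, confines minimizers to a neighbourhood of the truth where the rank-$(r+1)$ and $S^\top\one_K\neq\zero_N$ conditions are open. The argument then closes with the score CLT, the information identity, a Taylor expansion of the likelihood equation, and the delta method applied to the smooth maps $S\mapsto(\mu,\gamma)$ and $S\mapsto P_{\row(S)}$. Your explicit use of convexity of $\ell_{\llm}$ in the unconstrained centered $(S,b)$, to rule out both escape to infinity and rank drop, simply spells out the sublevel-compactness step that the paper delegates to the proof of \Cref{lem:weighted-consistency}.
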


\begin{proof}[Proof of \Cref{prop:stage-one-normality}]
    By \Cref{prop:ident-LLM} and \Cref{cond:llm-design}, equality of the limiting LLM linear predictors implies equality of the centered parameter $(S,b)$.
    Hence, the limiting risk $R_{\llm}$ is uniquely minimized at the truth, and in any smooth local coordinate $\xi$ on the regular structured model, its Fisher information $\mathcal I_\xi$ is positive definite.
    The LLM-side sublevel-compactness argument in the proof of \Cref{lem:weighted-consistency}, with the human component omitted, gives with probability tending to one a finite regular minimizer satisfying $\widehat\xi_\infty\pto\xi_0$.
    The Lindeberg-Feller score CLT, information identity, and Taylor expansion of the likelihood equation then give
    \[
    \sqrt{n_{\llm}}\left(\widehat\xi_\infty-\xi_0\right)
    \dto
    \cN\left(0,\mathcal I_\xi^{-1}\right).
    \]
    The asymptotic normality of $\widehat b_\infty$, $\widehat\gamma_\infty$, and $\widehat S_\infty$ follows by the delta method.
    Finally, \Cref{cond:llm-structure} gives $\col(W)=\row(S)$, and the map $S\mapsto P_{\row(S)}$ is smooth on the rank-$(r+1)$ stratum, so another application of the delta method yields \eqref{eq:subspace-rate}.
\end{proof}

\clearpage
\section{Supporting Theory for Human Preference Calibration}
\label{app:population-calibration-proofs}

This section provides proofs and supporting results for human-preference calibration.

\subsection{Identification and Fixed-Space Risk Preliminaries}
\label{app:proof-prop:human_identification}
\label{app:population-alignment}

\begin{proof}[Proof of \Cref{prop:human_identification}]
    Under \Cref{cond:score-normalization,cond:llm-design}, \Cref{prop:ident-LLM} identifies $(S,b)$, and \Cref{cond:llm-structure} identifies the calibration space $\col(M)$ with $\rank(M)=d$ and $\one_N^\top M=0$.
    If $c_{\human}$ and $\widetilde c_{\human}$ induce the same human probabilities, then
    \[
    D_{\human}M(c_{\human}-\widetilde c_{\human})=0,
    \]
    so \Cref{cond:human-design} gives $c_{\human}=\widetilde c_{\human}$.
    Conversely, failure of \Cref{cond:human-design} gives a nonzero coordinate perturbation with the same linear predictor.
    Since $M$ has full column rank,
    \[
    \rank(D_{\human}M)=d
    \quad\Longleftrightarrow\quad
    \ker(D_{\human})\cap\col(M)=\{0\}.
    \]
    If $\cG_{\human}$ is connected, $\ker(D_{\human})=\operatorname{span}(\one_N)$, so the condition follows from $\one_N^\top M=0$.
    Finally, $Mc_{\human}$ is unchanged by the admissible basis transformation when $M=W$, while $M=\mu$ has no factor-basis ambiguity.
\end{proof}

For any centered score vector $s\in\RR^N$, define the population human negative log-likelihood
\begin{equation}
R_{\human}(s)
:=
\sum_{(i,j)\in\Omega_{\human}^{\infty}}
\rho_{ij}
\left[
\log\{1+\exp(s_i-s_j)\}
-
p_{\human ij}(s_i-s_j)
\right].
\label{eq:population-human-risk}
\end{equation}

For comparison with the full LLM preference space, define
\begin{equation}
\Delta_\mu
:=
\inf_{\alpha\in\RR}
\left\{
R_{\human}(\alpha\mu)-R_{\human}(s_{\human})
\right\},
\label{eq:population-alignment-gap-mu}
\end{equation}
and let $\Delta_W$ be \eqref{eq:population-alignment-gaps} at $M=W$; write $\delta_\mu,\delta_W$ for their local limits.
For every centered $s\in\RR^N$,
\begin{equation}
R_{\human}(s)-R_{\human}(s_{\human})
=
\sum_{(i,j)\in\Omega_{\human}^{\infty}}
\rho_{ij}
\operatorname{KL}
\left\{
\operatorname{Bernoulli}(p_{\human ij})
\;\middle\|\;
\operatorname{Bernoulli}\!\left(\sigma(s_i-s_j)\right)
\right\}.
\label{eq:population-risk-kl}
\end{equation}
Hence, if $\cG_{\human}^{\infty}$ is connected,
$0\le\Delta_W\le\Delta_\mu$,
$\Delta_W=0$ if and only if $s_{\human}\in\col(W)$, and
$\Delta_\mu=0$ if and only if $s_{\human}\in\operatorname{span}(\mu)$.

\paragraph{A risk expansion on a fixed score space.}
Let $B\in\RR^{N\times q}$ have full column rank and satisfy $\one_N^\top B=0$.
The corresponding centered score space is $\col(B)$.
We say that the population comparison support identifies $\col(B)$ if
\begin{equation}
(e_i-e_j)^\top Bu=0
\ \text{ for all }(i,j)\in\Omega_{\human}^\infty
\quad\Longrightarrow\quad
u=0.
\label{eq:fixed-space-identification}
\end{equation}
For $c\in\RR^q$, define $R_B(c):=R_{\human}(Bc)$.
Under \eqref{eq:fixed-space-identification}, the finite-support logistic risk is coercive and strictly convex, so it has a unique minimizer $c_B^\star:=\argmin_{c\in\RR^q}R_B(c)$.
Let a generic human comparison sample $(I,J)\in\Omega_{\human}^{\infty}$ with probability $\rho_{ij}$ and then sample $Y\sim\operatorname{Bernoulli}(p_{\human IJ})$ conditional on $(I,J)$.
Define
\[
X_B:=B^\top(e_I-e_J),
\qquad
\pi_B:=\sigma(X_B^\top c_B^\star),
\qquad
\psi_B:=X_B(\pi_B-Y).
\]
The population Hessian and score-variance matrices are
\begin{equation}
H_B:=\EE\!\left[\pi_B(1-\pi_B)X_BX_B^\top\right],
\qquad
J_B:=\EE\!\left[\psi_B\psi_B^\top\right].
\label{eq:generic-HJ}
\end{equation}
The matrices $H_B$ and $J_B$ are the usual Hessian and score-variance quantities for maximum-likelihood estimation under possible model misspecification \citep{white1982maximum}.
Moreover, \eqref{eq:fixed-space-identification} implies $H_B\succ0$.

\paragraph{The comparison Laplacian.}
A single matrix governs all of the population human quantities used below.
Define the Fisher-weighted comparison Laplacian
\begin{equation}
L
:=
\sum_{(i,j)\in\Omega_{\human}^\infty}
\rho_{ij}\,p_{\human ij}(1-p_{\human ij})\,(e_i-e_j)(e_i-e_j)^\top
=
\nabla^2R_{\human}(s_{\human}),
\label{eq:comparison-laplacian}
\end{equation}
the Hessian of \eqref{eq:population-human-risk} at the truth.
It is a weighted graph Laplacian for $\cG_{\human}^\infty$: $L\succeq0$, $L\one_N=\zero_N$, and $\rank(L)=N-1$ if and only if $\cG_{\human}^\infty$ is connected.
Write $L^+$ for the Moore-Penrose inverse of $L$, $L^{1/2}$ for its positive semidefinite square root and $L^{+/2}:=(L^{1/2})^+$, $P:=I_N-N^{-1}\one_N\one_N^\top$, and $P_A:=A(A^\top A)^{-1}A^\top$ for $A$ of full column rank.
For a linear subspace $\mathcal V\subseteq\RR^N$, define
\[
\operatorname{dist}_L^2\{x,\mathcal V\}
:=
\inf_{v\in\mathcal V}(x-v)^\top L(x-v).
\]

\begin{lemma}[Laplacian form of the fixed-space quantities]
\label[lemma]{lem:comparison-laplacian}
    Let $B\in\RR^{N\times q}$ have full column rank with $\one_N^\top B=0$ and satisfy \eqref{eq:fixed-space-identification}.
    \begin{enumerate}[(i), ref=\roman*]
    \item \label{eq:laplacian-trace} If the restricted model is correctly specified, then $H_B=J_B=B^\top LB$ in \eqref{eq:generic-HJ}, and $\operatorname{tr}(H_B^{-1}J_B)=q$.

    \item \label{eq:laplacian-projection} Suppose in addition that $\cG_{\human}^\infty$ is connected.
    If $\col(B_{\human})$ is the whole centered score space, then $B_{\human}(B_{\human}^\top LB_{\human})^{-1}B_{\human}^\top=L^+$, while
    \[
    B(B^\top LB)^{-1}B^\top
    =
    L^{+/2}P_{L^{1/2}B}L^{+/2}
    \preceq L^+,
    \]
    with equality if and only if $q=N-1$.
    The corresponding projection ranks are $\rank(P)=N-1$ and $\rank(P_{L^{1/2}B})=q$.

    \item \label{eq:laplacian-risk-expansion}
    Suppose in addition that $\cG_{\human}^{\infty}$ is connected, and let
    $\epsilon_B:=\inf_{c\in\RR^q}\|Bc-s_{\human}\|_2$.
    Along any local sequence with fixed $\rho$ and comparison support,
    $p_{\human ij}\in[\varepsilon,1-\varepsilon]$ for fixed $\varepsilon\in(0,1/2)$, and
    $\epsilon_B\to0$,
    \[
    \inf_{c\in\RR^q}
    \{R_{\human}(Bc)-R_{\human}(s_{\human})\}
    =
    \frac12
    \operatorname{dist}_L^2\{s_{\human},\col(B)\}
    +
    O(\epsilon_B^3).
    \]
    \end{enumerate}
\end{lemma}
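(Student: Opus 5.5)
The plan is to handle the three parts in order, each reducing to linear algebra on the Fisher-weighted Laplacian $L$ from \eqref{eq:comparison-laplacian}.

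\emph{Part (i).} Under correct specification we have $s_{\human}=Bc_B^\star$, so $\pi_B=p_{\human IJ}$. Conditioning on $(I,J)$ gives $\EE[(\pi_B-Y)^2\mid I,J]=p_{\human IJ}(1-p_{\human IJ})$. Hence both $H_B$ and $J_B$ equal $\sum\rho_{ij}p_{\human ij}(1-p_{\human ij})B^\top(e_i-e_j)(e_i-e_j)^\top B=B^\top LB$. This matrix is positive definite by \eqref{eq:fixed-space-identification}, so $\operatorname{tr}(H_B^{-1}J_B)=\operatorname{tr}(I_q)=q$.

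\emph{Part (ii).} Connectivity gives $\ker L=\operatorname{span}(\one_N)$ and $\col(L^{1/2})=\col(L^{+/2})=\col(P)$, with $L^{+/2}L^{1/2}=P$. Since $\one_N^\top B=0$, we have $PB=B$ and therefore $B=L^{+/2}L^{1/2}B$. Substituting,
\[
B(B^\top LB)^{-1}B^\top=L^{+/2}\,L^{1/2}B\{(L^{1/2}B)^\top L^{1/2}B\}^{-1}(L^{1/2}B)^\top L^{+/2}=L^{+/2}P_{L^{1/2}B}L^{+/2}.
\]
Because $L^{1/2}B\subseteq\col(P)$, we have $P_{L^{1/2}B}\preceq P$, and hence the expression is $\preceq L^{+/2}PL^{+/2}=L^+$. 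Equality holds exactly when $P_{L^{1/2}B}=P$ on $\col(P)$, that is when $q=N-1$, because $L^{+/2}$ is injective on $\col(P)$. This also covers the full-space claim for $B_{\human}$. The rank statements follow because $L^{1/2}B$ has full column rank $q$: its kernel meets $\col(B)$ only at $0$, since $\ker L^{1/2}=\operatorname{span}(\one)$ and $\one\perp\col(B)$.

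\emph{Part (iii).} This is the main obstacle. First reduce to a neighborhood: by \eqref{eq:population-risk-kl} and coercivity, with probabilities bounded in $[\varepsilon,1-\varepsilon]$ and fixed support, any near-minimizing $Bc$ must satisfy $\|Bc-s_{\human}\|=O(\epsilon_B)$. To see this, lower bound the KL sum on a compact set by a uniform multiple of $(x-s_{\human})^\top L(x-s_{\human})$, which is comparable to $\|x-s_{\human}\|^2$ on centered vectors. Then Taylor expand $R_{\human}$ around $s_{\human}$. The gradient vanishes because $s_{\human}$ minimizes over centered vectors and the risk is shift-invariant, so $R_{\human}(x)-R_{\human}(s_{\human})=\frac12(x-s_{\human})^\top L(x-s_{\human})+O(\|x-s_{\human}\|^3)$. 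The cubic remainder is uniform because the third derivatives of the logistic loss are bounded by $1$ and $\rho$ is fixed; $L$ itself varies with the sequence but remains uniformly nondegenerate on centered vectors.

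Taking the infimum gives an upper bound $\le\frac12\operatorname{dist}_L^2+O(\epsilon_B^3)$, using the $L$-projection, which lies within $O(\epsilon_B)$ of $s_{\human}$. The lower bound has the same form: evaluate at the true minimizer, which also lies within $O(\epsilon_B)$, and bound the quadratic term below by $\operatorname{dist}_L^2$. Combining the two bounds yields the stated expansion.
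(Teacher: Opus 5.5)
Your proposal is correct and follows the paper's proof essentially step for step: the same conditional-variance computation in (i), the same substitution $B=L^{+/2}L^{1/2}B$ and comparison $P_{L^{1/2}B}\preceq P$ in (ii), and in (iii) the same two-sided bound, with the $L$-projection $\bar s_B$ giving the upper bound and the restricted risk minimizer, localized to within $O(\epsilon_B)$ of $s_{\human}$, giving the lower bound. The differences are cosmetic: you get the full-space identity as the $q=N-1$ case using $L^{+/2}PL^{+/2}=L^+$ from spectral calculus, where the paper checks the Moore--Penrose conditions directly, and you spell out the localization step that the paper compresses into ``local strong convexity.''
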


\begin{proof}[Proof of \Cref{lem:comparison-laplacian}]
    For \eqref{eq:laplacian-trace}, correct specification gives $\pi_B=p_{\human IJ}$ almost surely, so $H_B=\EE[p_{\human IJ}(1-p_{\human IJ})X_BX_B^\top]=B^\top LB$ by \eqref{eq:comparison-laplacian}, while $\EE[(\pi_B-Y)^2\mid I,J]=p_{\human IJ}(1-p_{\human IJ})$ gives $J_B=H_B$ and hence $\operatorname{tr}(H_B^{-1}J_B)=q$.
    
    For \eqref{eq:laplacian-projection}, note first that $B(B^\top LB)^{-1}B^\top$ is unchanged when $B$ is replaced by $BA$ for invertible $A$, so we may take $B_{\human}$ with orthonormal columns spanning the centered space.
    Then $B_{\human}B_{\human}^\top=P$, and $L=PLP$ gives $L=B_{\human}(B_{\human}^\top LB_{\human})B_{\human}^\top$ with $B_{\human}^\top LB_{\human}$ invertible by connectivity; the matrix $B_{\human}(B_{\human}^\top LB_{\human})^{-1}B_{\human}^\top$ then satisfies the four Moore-Penrose conditions for $L$, so it equals $L^+$.
    Next, $P_{L^{1/2}B}=L^{1/2}B(B^\top LB)^{-1}B^\top L^{1/2}$, and $L^{+/2}L^{1/2}=P$ with $PB=B$, so $L^{+/2}P_{L^{1/2}B}L^{+/2}=B(B^\top LB)^{-1}B^\top$; the same computation with $B$ replaced by $B_{\human}$ gives $L^+=L^{+/2}PL^{+/2}$.
    Since $\col(L^{1/2}B)\subseteq\col(P)$ we have $P-P_{L^{1/2}B}\succeq0$, and conjugating by $L^{+/2}$ preserves this, with equality exactly when $\col(L^{1/2}B)=\col(P)$, that is when $q=N-1$.
    
    For \eqref{eq:laplacian-risk-expansion}, connectivity and the uniform interior bound make $L$ uniformly positive definite on the centered subspace.
    Let $\bar s_B$ minimize $(s-s_{\human})^\top L(s-s_{\human})$ over $s\in\col(B)$.
    Norm equivalence gives $\|\bar s_B-s_{\human}\|_2=O(\epsilon_B)$.
    Since $\nabla R_{\human}(s_{\human})=0_N$, $\nabla^2R_{\human}(s_{\human})=L$, and the third derivatives are uniformly bounded,
    \[
    R_{\human}(\bar s_B)-R_{\human}(s_{\human})
    =
    \frac12\operatorname{dist}_L^2\{s_{\human},\col(B)\}
    +
    O(\epsilon_B^3).
    \]
    The restricted risk minimizer has no larger risk, and local strong convexity gives distance $O(\epsilon_B)$ from $s_{\human}$.
    Applying the same Taylor expansion there and using optimality of $\bar s_B$ for the quadratic term gives the reverse bound.
\end{proof}

Taking $B=M$ in \Cref{lem:comparison-laplacian} gives the calibration-space variance reduction and the corresponding local expansion for $\Delta_M$.

For the following fixed-space proofs, write $\ell_{\human,B}(c):=\ell_{\human}(Bc)$, with all derivatives taken with respect to $c$.
The estimation regularity that the risk expansion requires is not an assumption but a consequence of the finite design.

\begin{lemma}[Existence, consistency, and moment bounds on a fixed score space]
\label[lemma]{lem:fixed-space-existence}
    Suppose \eqref{eq:fixed-space-identification} holds for $B$, that $\Omega_{\human}^\infty$ is finite with $\rho_{ij}>0$, and that $p_{\human ij}\in[\varepsilon,1-\varepsilon]$ for a fixed $\varepsilon\in(0,1/2)$.
    Let $\cE$ be the event that every cell of $\Omega_{\human}^\infty$ is observed at least once with each outcome, and take $\widehat c_B$ to be the minimizer of $\ell_{\human,B}$ on $\cE$ and $\widehat c_B:=0$ off $\cE$.
    Then there are constants $c,C>0$, depending on $p_{\human}$ only through $\varepsilon$, such that for every $n_{\human}$:
    \begin{enumerate}[(i), ref=\roman*]
    \item \label{eq:fse-event} $\Pr(\cE^c)\le Ce^{-cn_{\human}}$, and on $\cE$ the criterion $\ell_{\human,B}$ has a unique finite minimizer, which satisfies $\|\widehat c_B\|_2\le Cn_{\human}$;

    \item \label{eq:fse-tail} $\Pr\left\{\sqrt{n_{\human}}\,\|\widehat c_B-c_B^\star\|_2>t\right\}\le Ce^{-ct^2}$ for every $0\le t\le c\sqrt{n_{\human}}$;

    \item \label{eq:fse-ui} consequently $\widehat c_B\pto c_B^\star$, and $n_{\human}\|\widehat c_B-c_B^\star\|_2^2$ is uniformly integrable.
    \end{enumerate}
    Because the constants depend on $p_{\human}$ only through $\varepsilon$, all three statements hold uniformly over triangular arrays $\{p_{\human ij}^{(n_{\human})}\}$ obeying the same bound.
\end{lemma}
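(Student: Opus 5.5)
Part (i) comes first. A union bound over the finitely many cells gives $\Pr(\cE^c)$. Each cell $(i,j)$ is sampled with probability $\rho_{ij}\ge\rho_{\min}>0$, and each outcome occurs with probability at least $\varepsilon$. So a given (cell, outcome) pair is missed with probability at most $(1-\rho_{\min}\varepsilon)^{n_{\human}}$. Summing over at most $2|\Omega_{\human}^\infty|$ such pairs gives $Ce^{-cn_{\human}}$.

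On $\cE$, every cell has both outcomes, so for any $c\neq0$ the direction $Bc$ has some cell with $(e_i-e_j)^\top Bc\neq0$ by \eqref{eq:fixed-space-identification}. The loss therefore grows linearly along every ray, which makes $\ell_{\human,B}$ coercive. Strict convexity follows because the Hessian $n_{\human}^{-1}\sum n_{\human ij}\sigma'(\cdot)X_{ij}X_{ij}^\top$ is positive definite. Hence the minimizer is unique and finite.

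For the norm bound, compare with $\ell_{\human,B}(0)=\log 2$. Let $\kappa:=\min_{\|u\|=1}\max_{(i,j)}|(e_i-e_j)^\top Bu|>0$. A cell with linear predictor $x$ and at least one observation of each outcome contributes at least $n_{\human}^{-1}\log(1+e^{|x|})\ge n_{\human}^{-1}|x|$ to the loss. So $\log 2\ge n_{\human}^{-1}\kappa\|\widehat c_B\|_2$, which gives $\|\widehat c_B\|_2\le Cn_{\human}$.

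Part (ii) is a localization argument. Set $g(c):=\nabla\ell_{\human,B}(c)$, so that $g(c_B^\star)$ is an average of i.i.d. mean-zero vectors $\psi_B$ bounded by $\max\|X_B\|$. Hoeffding's inequality then gives $\Pr\{\|g(c_B^\star)\|_2>t/\sqrt{n_{\human}}\}\le Ce^{-ct^2}$. Next we need uniform curvature in a fixed ball $\|c-c_B^\star\|\le r_0$. The population Hessian there is at least $\underline{h}H$-type $\succeq 2\lambda_0 I$: bounded $p_{\human}$ makes $c_B^\star$ bounded in terms of $\varepsilon$ through the same coercivity bound, and \eqref{eq:fixed-space-identification} with $\rho_{\min}$ gives uniform positive definiteness. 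The empirical Hessian in the ball is a sum of cell-frequency-weighted rank-one terms, and the cell frequencies $\hat\rho_{ij}$ concentrate at exponential rate, so the empirical Hessian is $\succeq\lambda_0 I$ on the ball except on an event of probability $Ce^{-cn_{\human}}$.

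On the intersection of these events, convexity gives the following. If $\|g(c_B^\star)\|<\lambda_0 r_0$, then the minimizer lies in the ball and $\|\widehat c_B-c_B^\star\|\le\|g(c_B^\star)\|/\lambda_0$. To see this, compare directional derivatives on the sphere of radius $\rho=\|g\|/\lambda_0$ and use convexity to exclude minimizers outside. The restriction $t\le c\sqrt{n_{\human}}$ guarantees $t/\sqrt{n_{\human}}\le\lambda_0 r_0$. The exponential failure probability $Ce^{-cn_{\human}}$ is absorbed into $Ce^{-ct^2}$ in this range, and the same holds for $\cE^c$.

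Part (iii) follows from (ii) together with the crude bound. Consistency is immediate from (ii). For uniform integrability, bound $\EE[n_{\human}\|\widehat c_B-c_B^\star\|^2\ind\{\cdot>K\}]$ in two pieces. In the range $t\le c\sqrt{n_{\human}}$, the sub-Gaussian tail handles it. Beyond that range, the deviation is at most $Cn_{\human}$ on $\cE$, or equals $\|c_B^\star\|$ off $\cE$. That piece contributes at most $n_{\human}\cdot C n_{\human}^2\cdot Ce^{-cn_{\human}}\to0$, using that the tail event beyond $c\sqrt{n_{\human}}$ has probability at most $Ce^{-cn_{\human}}$ by monotonicity of (ii) at the endpoint.

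Uniformity over triangular arrays holds because every constant depends only on $\rho$, $B$, the support, and $\varepsilon$. These constants are the bound on $c_B^\star$, the curvature $\lambda_0$, the Hoeffding and cell-frequency constants, and the miss probabilities.

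I expect the main obstacle to be the uniform curvature step. The difficulty is showing that the empirical Hessian is bounded below on a neighborhood of $c_B^\star$ with constants depending only on $\varepsilon$, which requires first bounding $\|c_B^\star\|$ uniformly. For that bound I would use the population analogue of the coercivity argument: $R_B(c)\ge \rho_{\min}\varepsilon\,\kappa\|c\|$ and $R_B(c_B^\star)\le R_B(0)=\log 2$.
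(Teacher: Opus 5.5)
Your proposal is correct and takes essentially the same route as the paper. Both arguments use the two-sided-cell bound $\log(1+e^{v})+\log(1+e^{-v})\ge|v|$ for coercivity and the $O(n_{\human})$ norm bound, then Hoeffding on the score at $c_B^\star$ plus an empirical-Hessian lower bound on a fixed ball (with $\|c_B^\star\|_2$ bounded via $R_B(c)\ge\varepsilon\rho_{\min}\kappa\|c\|_2$) for the sub-Gaussian tail, and finally a near-range/far-range split for uniform integrability. The only differences are cosmetic: you bound $\Pr(\cE^c)$ directly by the miss probability $(1-\rho_{\min}\varepsilon)^{n_{\human}}$ of each (cell, outcome) pair rather than through concentration of $n_{\human ij}$, and you control the truncated first moment directly rather than proving $\sup_{n_{\human}}\EE[Z_{n_{\human}}^4]<\infty$ as the paper does.
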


\begin{proof}[Proof of \Cref{lem:fixed-space-existence}]
    Write $x_{ij}:=B^\top(e_i-e_j)$.
    Call $\cA\subseteq\Omega_{\human}^\infty$ identifying if $x_{ij}^\top u=0$ for all $(i,j)\in\cA$ forces $u=0$; for such an $\cA$ the map $u\mapsto\max_{(i,j)\in\cA}|x_{ij}^\top u|$ is a norm on $\RR^q$, and we let $\kappa>0$ be the smallest of its minima over the unit sphere, taken over the finitely many identifying subsets.

    For \eqref{eq:fse-event}, a cell observed with both outcomes contributes at least $\log(1+e^{v})+\log(1+e^{-v})\ge|v|$ to $n_{\human}\ell_{\human,B}(c)$, where $v:=x_{ij}^\top c$, because $\log(1+e^{v})\ge\max(0,v)$.
    On $\cE$ this holds for every cell, so $n_{\human}\ell_{\human,B}(c)\ge\kappa\|c\|_2$, and the criterion is coercive; \eqref{eq:fixed-space-identification} makes it strictly convex, so the minimizer exists and is unique.
    Since $\ell_{\human,B}(\widehat c_B)\le\ell_{\human,B}(0)=\log2$, the same bound gives $\|\widehat c_B\|_2\le n_{\human}\log2/\kappa$.
    For the probability, $n_{\human ij}\sim\operatorname{Binomial}(n_{\human},\rho_{ij})$ and, conditionally on $n_{\human ij}$, the chance that a cell shows a single outcome is at most $2(1-\varepsilon)^{n_{\human ij}}$, so
    \[
    \Pr(\cE^c)
    \le
    \sum_{(i,j)\in\Omega_{\human}^\infty}
    \left\{
    \Pr(n_{\human ij}<\rho_{ij}n_{\human}/2)
    +
    2(1-\varepsilon)^{\rho_{ij}n_{\human}/2}
    \right\}
    \le
    Ce^{-cn_{\human}},
    \]
    since $\Omega_{\human}^\infty$ is finite and each $\rho_{ij}>0$.

    For \eqref{eq:fse-tail}, $\nabla\ell_{\human,B}(c_B^\star)$ is an average of $n_{\human}$ independent, mean-zero, uniformly bounded vectors, so Hoeffding's inequality gives $\Pr\{\sqrt{n_{\human}}\|\nabla\ell_{\human,B}(c_B^\star)\|_2>t\}\le Ce^{-ct^2}$.
    On a fixed ball $\|c-c_B^\star\|_2\le\delta$ the logistic weights are bounded below, and the empirical cell frequencies concentrate, so $\nabla^2\ell_{\human,B}\succeq\kappa'I_q$ there for some $\kappa'>0$ with probability at least $1-Ce^{-cn_{\human}}$.
    Convexity then gives $\|\widehat c_B-c_B^\star\|_2\le\kappa'^{-1}\|\nabla\ell_{\human,B}(c_B^\star)\|_2$ whenever the right-hand side is at most $\kappa'\delta$, which is the stated range of $t$.

    For \eqref{eq:fse-ui}, consistency follows from \eqref{eq:fse-tail}.
For uniform integrability, first note that $c_B^\star$ is uniformly bounded under the stated interior-probability condition. Indeed, for every $p\in[\varepsilon,1-\varepsilon]$ and $v\in\RR$,

$$
    \log(1+e^v)-pv\ge \varepsilon |v|.
$$

Hence, writing $\rho_{\min}:=\min_{(i,j)\in\Omega_{\human}^\infty}\rho_{ij}>0$ and using the definition of $\kappa$ above,

$$
\begin{aligned}
    R_B(c)
    &\ge
    \varepsilon
    \sum_{(i,j)\in\Omega_{\human}^\infty}
    \rho_{ij}|x_{ij}^\top c| \\
    &\ge
    \varepsilon\rho_{\min}\kappa\|c\|_2.
\end{aligned}
$$

Since $R_B(c_B^\star)\le R_B(0)=\log2$, it follows that

$$
    \|c_B^\star\|_2
    \le
    \frac{\log2}{\varepsilon\rho_{\min}\kappa}.
$$

Now set
\[
    Z_{n_{\human}}
    :=
    \sqrt{n_{\human}}\,
    \|\widehat c_B-c_B^\star\|_2,
    \qquad
    X_{n_{\human}}
    :=
    Z_{n_{\human}}^2
    =
    n_{\human}\|\widehat c_B-c_B^\star\|_2^2.
\]
We prove uniform integrability by showing that
$\{X_{n_{\human}}\}$ has uniformly bounded second moments.
Choose a fixed $a>0$ small enough that \eqref{eq:fse-tail} applies for
every $0\le t\le a\sqrt{n_{\human}}$.
Using the tail-integral representation,
\[
\begin{aligned}
    \EE\!\left[
    Z_{n_{\human}}^4
    \ind_{\{Z_{n_{\human}}\le a\sqrt{n_{\human}}\}}
    \right]
    &\le
    4\int_0^{a\sqrt{n_{\human}}}
    t^3\Pr(Z_{n_{\human}}>t)\,dt \\
    &\le
    C\int_0^\infty
    t^3e^{-ct^2}\,dt
    \le C.
\end{aligned}
\]

It remains to control the far tail.
On $\cE$, part \eqref{eq:fse-event} and the uniform bound on
$c_B^\star$ give
\[
    Z_{n_{\human}}
    \le
    Cn_{\human}^{3/2}.
\]
Hence, applying \eqref{eq:fse-tail} at
$t=a\sqrt{n_{\human}}$,
\[
\begin{aligned}
    \EE\!\left[
    Z_{n_{\human}}^4
    \ind_{\{Z_{n_{\human}}>a\sqrt{n_{\human}}\}}
    \ind_{\cE}
    \right]
    &\le
    Cn_{\human}^6
    \Pr\!\left(
    Z_{n_{\human}}>a\sqrt{n_{\human}}
    \right) \\
    &\le
    Cn_{\human}^6e^{-cn_{\human}}
    \le C.
\end{aligned}
\]
Off $\cE$, the convention $\widehat c_B=0$ and the uniform bound on
$c_B^\star$ imply
\[
    Z_{n_{\human}}^4
    =
    n_{\human}^2\|c_B^\star\|_2^4
    \le
    Cn_{\human}^2,
\]
and therefore
\[
    \EE\!\left[
    Z_{n_{\human}}^4
    \ind_{\cE^c}
    \right]
    \le
    Cn_{\human}^2e^{-cn_{\human}}
    \le C.
\]
Consequently,
\[
    \sup_{n_{\human}}
    \EE\!\left[
    X_{n_{\human}}^2
    \right]
    =
    \sup_{n_{\human}}
    \EE\!\left[
    Z_{n_{\human}}^4
    \right]
    <\infty.
\]
It follows that, for every $A>0$,
\[
\begin{aligned}
    \sup_{n_{\human}}
    \EE\!\left[
    X_{n_{\human}}
    \ind_{\{X_{n_{\human}}>A\}}
    \right]
    &\le
    \frac{1}{A}
    \sup_{n_{\human}}
    \EE[X_{n_{\human}}^2]
    \longrightarrow0
    \qquad (A\to\infty).
\end{aligned}
\]
Thus
$n_{\human}\|\widehat c_B-c_B^\star\|_2^2$
is uniformly integrable.

\end{proof}

The convention in \Cref{lem:fixed-space-existence} modifies the empirical minimizer only on an event of probability at most $Ce^{-cn_{\human}}$, so it affects neither the limit distribution nor the $o(n_{\human}^{-1})$ risk expansion below.

\begin{lemma}[Risk expansion on a fixed score space]
\label[lemma]{lem:fixed-space-risk}
    Suppose that the conditions of \Cref{lem:fixed-space-existence} hold.
    Then
    \[
    \sqrt{n_{\human}}
    (\widehat c_B-c_B^\star)
    \dto
    \cN
    \left(
    0,
    H_B^{-1}J_BH_B^{-1}
    \right),
    \]
    and
    \begin{equation}
    \EE
    \left[
    R_{\human}(B\widehat c_B)
    -
    R_{\human}(Bc_B^\star)
    \right]
    =
    \frac{1}{2n_{\human}}
    \operatorname{tr}(H_B^{-1}J_B)
    +
    o(n_{\human}^{-1}).
    \label{eq:generic-risk-expansion}
    \end{equation}
    If the restricted model is correctly specified, then $H_B=J_B$ and the leading term in \eqref{eq:generic-risk-expansion} equals $q/(2n_{\human})$.
    The remainder is uniform over triangular arrays $\{p_{\human ij}^{(n_{\human})}\}$ with $p_{\human ij}^{(n_{\human})}\in[\varepsilon,1-\varepsilon]$ and $H_B\succeq\kappa I_q$ for fixed $\varepsilon,\kappa>0$.
\end{lemma}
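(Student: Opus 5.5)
The plan is the standard M-estimation argument on the fixed space $\col(B)$, with \Cref{lem:fixed-space-existence} doing the heavy lifting on existence and integrability.

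\emph{Asymptotic normality.} On the event $\cE$, whose complement has probability at most $Ce^{-cn_{\human}}$, the estimator $\widehat c_B$ solves $\nabla\ell_{\human,B}(\widehat c_B)=0$. Expanding around $c_B^\star$ gives
\[
0=\nabla\ell_{\human,B}(c_B^\star)+\bar H_n(\widehat c_B-c_B^\star),
\]
where $\bar H_n$ is the integrated Hessian along the segment. The score $\nabla\ell_{\human,B}(c_B^\star)=n_{\human}^{-1}\sum_t\psi_{B,t}$ is an average of i.i.d.\ bounded vectors, mean zero by first-order optimality of $c_B^\star$ for $R_B$, with covariance $J_B$. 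The Lindeberg--Feller CLT therefore gives $\sqrt{n_{\human}}\,\nabla\ell_{\human,B}(c_B^\star)\dto\cN(0,J_B)$; Lindeberg is trivial by boundedness, which also covers triangular arrays. Consistency from \Cref{lem:fixed-space-existence}\eqref{eq:fse-ui}, together with the LLN and uniformly bounded third derivatives of the logistic loss, gives $\bar H_n\pto H_B\succ0$. Slutsky then yields the normal limit with covariance $H_B^{-1}J_BH_B^{-1}$.

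\emph{Risk expansion.} Since $\nabla R_B(c_B^\star)=0$ and $\nabla^2R_B(c_B^\star)=H_B$, a Taylor expansion with bounded third derivatives gives
\[
R_B(\widehat c_B)-R_B(c_B^\star)=\tfrac12(\widehat c_B-c_B^\star)^\top H_B(\widehat c_B-c_B^\star)+O(\|\widehat c_B-c_B^\star\|_2^3).
\]
This holds only locally, since the third-derivative bound on $R_B$ is global but the cubic term must be controlled. Globally, note that $R_B(c)-R_B(c_B^\star)\le C\|c-c_B^\star\|_2^2$ for all $c$, because the Hessian of $R_B$ is globally bounded by $\sum\rho_{ij}x_{ij}x_{ij}^\top/4$. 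Multiply by $n_{\human}$. The quadratic term converges in distribution to $\frac12 Z^\top H_BZ$ with $Z\sim\cN(0,H_B^{-1}J_BH_B^{-1})$, whose mean is $\frac12\operatorname{tr}(H_B^{-1}J_B)$.

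The main obstacle is passing from convergence in distribution to convergence of expectations, including the cubic remainder and the fallback event. I would handle it as follows:
\begin{itemize}
\item The global bound gives $n_{\human}\{R_B(\widehat c_B)-R_B(c_B^\star)\}\le Cn_{\human}\|\widehat c_B-c_B^\star\|_2^2$, which is uniformly integrable by \Cref{lem:fixed-space-existence}\eqref{eq:fse-ui}. The whole sequence is therefore uniformly integrable.
\item The remainder $n_{\human}O(\|\widehat c_B-c_B^\star\|^3)$ equals $(n_{\human}\|\cdot\|^2)\cdot O(\|\cdot\|)$, a uniformly integrable variable times an $\op(1)$ factor. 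The difference between the exact excess risk and the quadratic form thus tends to zero in probability, and it is dominated by a uniformly integrable sequence.
\end{itemize}
Vitali's theorem then gives \eqref{eq:generic-risk-expansion}. The fallback convention off $\cE$ is already covered, since the uniform integrability in \Cref{lem:fixed-space-existence} was proved with that convention.

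\emph{Correct specification and uniformity.} Under correct specification, \Cref{lem:comparison-laplacian}\eqref{eq:laplacian-trace} gives $H_B=J_B$ and trace $q$. For uniformity over triangular arrays, every constant used above depends only on $\varepsilon$ and on the lower bound $H_B\succeq\kappa I_q$: the tail bounds, the third-derivative bounds, and the Berry--Esseen-type rate for the bounded CLT. Uniform integrability is also uniform by \Cref{lem:fixed-space-existence}. I would therefore argue by contradiction along subsequences. Along any subsequence, extract a further one where $p^{(n_{\human})}$ converges in the compact set $[\varepsilon,1-\varepsilon]$. Then $H_B,J_B$ converge continuously, and the preceding argument applies, which gives the uniform $o(n_{\human}^{-1})$ remainder.
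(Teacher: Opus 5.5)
Your proposal is correct and follows the paper's argument. Normality comes from a CLT for the score at $c_B^\star$ plus a Taylor expansion of the first-order condition on the event where the minimizer exists. The means then converge because the global Hessian bound $n_{\human}\{R_B(\widehat c_B)-R_B(c_B^\star)\}\le Cn_{\human}\|\widehat c_B-c_B^\star\|_2^2$ combined with \Cref{lem:fixed-space-existence}\eqref{eq:fse-ui} upgrades convergence in distribution to convergence of expectations.

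The differences are cosmetic. The paper writes the excess risk exactly as $\frac12(\widehat c_B-c_B^\star)^\top\overline H_B(\widehat c_B-c_B^\star)$ with an integrated Hessian $\overline H_B\pto H_B$, rather than splitting off a cubic remainder. Your cubic remainder bound in fact holds globally, since $\sigma''$ is bounded, so the ``only locally'' caveat is unnecessary. Your subsequence-compactness argument also makes explicit the triangular-array uniformity that the paper only asserts.
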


\begin{proof}[Proof of \Cref{lem:fixed-space-risk}]
    Recall the local shorthand $\ell_{\human,B}(c)=\ell_{\human}(Bc)$, differentiated with respect to $c$.
    At the population minimizer,
    \[
    \EE\{\nabla\ell_{\human,B}(c_B^\star)\}
    =
    \nabla R_B(c_B^\star)
    =
    0.
    \]
    Because the pair support is finite and the outcomes are Bernoulli, the multivariate central limit theorem gives
    \[
    \sqrt{n_{\human}}
    \nabla\ell_{\human,B}(c_B^\star)
    \dto
    \cN(0,J_B).
    \]
    Let $\mathcal{A}_{n_{\human}}$ denote the event that a finite empirical minimizer exists.
    On $\mathcal{A}_{n_{\human}}$, a Taylor expansion of the empirical score around $c_B^\star$ gives
    \[
    0
    =
    \nabla\ell_{\human,B}(\widehat c_B)
    =
    \nabla\ell_{\human,B}(c_B^\star)
    +
    \left\{
    H_B+\op(1)
    \right\}
    (\widehat c_B-c_B^\star),
    \]
    where consistency and continuity of the logistic Hessian imply convergence of the empirical Hessian.
    Since $\Pr(\mathcal{A}_{n_{\human}})\to1$,
    \[
    \sqrt{n_{\human}}
    (\widehat c_B-c_B^\star)
    =
    -
    H_B^{-1}
    \sqrt{n_{\human}}
    \nabla\ell_{\human,B}(c_B^\star)
    +
    \op(1),
    \]
    which proves the stated asymptotic normality.

    Since $c_B^\star$ minimizes $R_B$, $\nabla R_B(c_B^\star)=0$.
    Let $u_B:=\widehat c_B-c_B^\star$ and define
    \[
    \overline H_B
    :=
    2\int_0^1
    (1-t)
    \nabla^2 R_B(c_B^\star+tu_B)
    \,dt.
    \]
    The integral form of Taylor's theorem gives
    \[
    R_B(\widehat c_B)-R_B(c_B^\star)
    =
    \frac12
    u_B^\top
    \overline H_B
    u_B.
    \]
    Because $\widehat c_B\pto c_B^\star$ and $\nabla^2R_B$ is continuous,
    \[
    \overline H_B\pto H_B.
    \]
    Moreover, the pair support is finite and $\sigma(x)\{1-\sigma(x)\}\leq 1/4$, so
    \[
    \sup_{c\in\RR^q}
    \left\|
    \nabla^2R_B(c)
    \right\|_{\mathrm{op}}
    <
    \infty.
    \]
    Hence, there exists some constant $C>0$ such that
    \[
    0
    \leq
    n_{\human}
    \left\{
    R_B(\widehat c_B)-R_B(c_B^\star)
    \right\}
    \leq
    C n_{\human}
    \|\widehat c_B-c_B^\star\|_2^2.
    \]
    By \Cref{lem:fixed-space-existence}\eqref{eq:fse-ui}, the right-hand side is uniformly integrable.
    Together with the stated asymptotic normality and $\overline H_B\pto H_B$, this yields
    \[
    n_{\human}
    \left\{
    R_B(\widehat c_B)-R_B(c_B^\star)
    \right\}
    \dto
    \frac12 Z^\top H_BZ,
    \qquad
    Z\sim
    \cN(0,H_B^{-1}J_BH_B^{-1}),
    \]
    and uniform integrability therefore gives
    \begin{align*}
    \lim_{n_{\human}\to\infty}
    n_{\human}
    \EE
    \left[
    R_B(\widehat c_B)
    -
    R_B(c_B^\star)
    \right]
    &=
    \frac12
    \EE(Z^\top H_BZ)
    \\
    &=
    \frac12
    \operatorname{tr}(H_B^{-1}J_B).
    \end{align*}
    This proves \eqref{eq:generic-risk-expansion}.

    If the restricted model is correctly specified, then $\pi_B=p_{\human IJ}$ almost surely.
    The information identity gives $J_B=H_B$, and therefore $\operatorname{tr}(H_B^{-1}J_B)=q$.
\end{proof}

\subsection[Proof of Theorem \ref{thm:population-risk-tradeoff}]{Proof of \Cref{thm:population-risk-tradeoff}}
\label{app:proof-thm:population-risk-tradeoff}

Throughout this subsection, fixed-space empirical minimizers use the measurable extension in \Cref{lem:fixed-space-existence}.
\begin{theorem}[Risk on a fixed score space]
\label[theorem]{thm:fixed-space-tradeoff}
    Let $N$ be fixed and let $\mathcal V$ be a fixed $q$-dimensional subspace of the centered score space on which the population comparison support is identifying in the sense of \eqref{eq:fixed-space-identification}.
    Let $H_{\mathcal V}$ and $J_{\mathcal V}$ denote \eqref{eq:generic-HJ} in any basis of $\mathcal V$.
    Write $\widetilde s_{\mathcal V}$ for the minimizer of $\ell_{\human}$ over $\mathcal V$ and $\Delta_{\mathcal V}:=\inf_{s\in\mathcal V}R_{\human}(s)-R_{\human}(s_{\human})$.
    Under the sampling scheme of \Cref{subsec:population-calibration},
    \begin{equation}
    \EE\left[R_{\human}(\widetilde s_{\mathcal V})-R_{\human}(s_{\human})\right]
    =
    \Delta_{\mathcal V}
    +
    \frac{1}{2n_{\human}}\operatorname{tr}(H_{\mathcal V}^{-1}J_{\mathcal V})
    +
    o(n_{\human}^{-1}).
    \label{eq:population-risk-general}
    \end{equation}
    Under correct specification the estimation term is $q/(2n_{\human})+o(n_{\human}^{-1})$.
    The whole centered score space gives $q=N-1$, while $\mathcal V=\col(M)$ gives $q=d$.
    If, along a local-mismatch sequence with fixed $\rho$, $p_{\human ij}^{(n_{\human})}\in[\varepsilon,1-\varepsilon]$ on $\Omega_{\human}^{\infty}$ for fixed $\varepsilon\in(0,1/2)$ and $n_{\human}\Delta_{\mathcal V}\to\delta_{\mathcal V}<\infty$, then
    \[
    n_{\human}\EE\{R_{\human}(\widetilde s_{\mathcal V})-R_{\human}(s_{\human})\}
    \to
    \delta_{\mathcal V}+\frac q2,
    \]
    so anchoring to $\mathcal V$ improves first-order risk exactly when $\delta_{\mathcal V}<(N-1-q)/2$.
\end{theorem}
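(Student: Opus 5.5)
Fix a basis $B\in\RR^{N\times q}$ of $\mathcal V$ with $\one_N^\top B=0$; this is possible because $\mathcal V$ lies in the centered score space. Then $\widetilde s_{\mathcal V}=B\widehat c_B$ and $\inf_{s\in\mathcal V}R_{\human}(s)=R_B(c_B^\star)$. The quantities $R_{\human}(\widetilde s_{\mathcal V})$, $\Delta_{\mathcal V}$ and $\operatorname{tr}(H_B^{-1}J_B)$ do not depend on the basis. Indeed, replacing $B$ by $BA$ with $A$ invertible transforms $H_B\mapsto A^\top H_BA$ and $J_B\mapsto A^\top J_BA$, and the trace is unchanged under this congruence. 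So $H_{\mathcal V},J_{\mathcal V}$ are well defined for the trace.

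The proof starts from the exact decomposition
\[
R_{\human}(\widetilde s_{\mathcal V})-R_{\human}(s_{\human})
=\bigl\{R_B(\widehat c_B)-R_B(c_B^\star)\bigr\}+\Delta_{\mathcal V}.
\]
Taking expectations, the first bracket is handled by \Cref{lem:fixed-space-risk}, which applies because \eqref{eq:fixed-space-identification} holds on $\mathcal V$ and the support is finite with interior probabilities. That lemma gives \eqref{eq:population-risk-general}. Under correct specification the same lemma (equivalently \Cref{lem:comparison-laplacian}\eqref{eq:laplacian-trace}) gives $H_B=J_B$, so the trace equals $q$.

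The two special cases are identifications of $q$. If $\cG_{\human}^\infty$ is connected, the whole centered space is identifying, since $\ker$ of the difference map restricted to centered vectors is trivial; this gives $q=N-1$. The space $\col(M)$ is identifying because \Cref{cond:human-design}, at the population level with connectivity, gives $\ker(D_{\human})\cap\col(M)=\{0\}$ as in \Cref{prop:human_identification}, and $\rank(M)=d$.

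For the local-mismatch statement, multiply the decomposition by $n_{\human}$. The deterministic term $n_{\human}\Delta_{\mathcal V}$ tends to $\delta_{\mathcal V}$ by assumption. It remains to show $n_{\human}\EE\{R_B(\widehat c_B)-R_B(c_B^\star)\}\to q/2$ along the triangular array. This is the main obstacle: the model is only approximately specified, and $c_B^\star$, $H_B$, $J_B$ all move with $n_{\human}$.

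\emph{Uniform risk expansion.} I would use the uniformity clause of \Cref{lem:fixed-space-risk}. The probabilities stay in $[\varepsilon,1-\varepsilon]$, and $H_B\succeq\kappa I_q$ uniformly because the logistic weights at $c_B^\star$ are bounded below. The weights are bounded below because $\|c_B^\star\|$ is uniformly bounded by the argument in \Cref{lem:fixed-space-existence}, and the design is fixed and identifying. This gives
\[
n_{\human}\EE\{\cdot\}=\tfrac12\operatorname{tr}(H_B^{-1}J_B)+o(1).
\]

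\emph{The trace tends to $q$.} First, $n_{\human}\Delta_{\mathcal V}\to\delta_{\mathcal V}$ together with the KL representation \eqref{eq:population-risk-kl} and Pinsker-type lower bounds forces
\[
\max_{(i,j)\in\Omega_{\human}^\infty}\bigl|\sigma(x_{ij}^\top c_B^\star)-p_{\human ij}\bigr|=O(n_{\human}^{-1/2})\to0.
\]
Hence $\pi_B-p_{\human IJ}\to0$ uniformly. Then $J_B-H_B=\EE[(\pi_B-p_{\human IJ})^2X_BX_B^\top]\to0$. Since $H_B$ is uniformly invertible, $\operatorname{tr}(H_B^{-1}J_B)=q+\operatorname{tr}\{H_B^{-1}(J_B-H_B)\}\to q$.

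Combining the two limits gives $\delta_{\mathcal V}+q/2$. Comparing with the case $\mathcal V=$ centered space, where $\delta=0$ and $q=N-1$, yields the stated criterion $\delta_{\mathcal V}<(N-1-q)/2$.
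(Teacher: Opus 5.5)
Your proposal is correct and follows the paper's proof essentially step for step. It uses the same basis-invariant decomposition into $\Delta_{\mathcal V}$ plus the fixed-space excess risk, handled by \Cref{lem:fixed-space-risk} and its triangular-array clause, and the same KL identity forcing $\pi_B-p_{\human IJ}\to0$ so that $\operatorname{tr}(H_{\mathcal V}^{-1}J_{\mathcal V})\to q$. One small correction: conditioning on $(I,J)$ gives $J_B-H_B=\EE[(p_{\human IJ}-\pi_B)(1-2\pi_B)X_BX_B^\top]$, which is first order rather than quadratic in $\pi_B-p_{\human IJ}$; your uniform bound $\max_{(i,j)\in\Omega_{\human}^\infty}|\sigma(x_{ij}^\top c_B^\star)-p_{\human ij}|\to0$ still gives $J_B-H_B\to0$, so the conclusion stands.
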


\begin{proof}[Proof of \Cref{thm:fixed-space-tradeoff,thm:population-risk-tradeoff}]
    Let $B$ be any basis of $\mathcal V$, so that $\one_N^\top B=0$ and $B$ has full column rank $q$.
    Both $\widetilde s_{\mathcal V}=B\widehat c_B$ and $\inf_{s\in\mathcal V}R_{\human}(s)=R_B(c_B^\star)$ are invariant to replacing $B$ by $BA$ for invertible $A$.
    By the definition of $\Delta_{\mathcal V}$,
    \[
    \EE\left[R_{\human}(\widetilde s_{\mathcal V})-R_{\human}(s_{\human})\right]
    =
    \Delta_{\mathcal V}
    +
    \EE\left[R_B(\widehat c_B)-R_B(c_B^\star)\right],
    \]
    and \Cref{lem:fixed-space-risk} applied to $B$ gives \eqref{eq:population-risk-general}.
    If $s_{\human}\in\mathcal V$, then $\Delta_{\mathcal V}=0$ because $s_{\human}$ minimizes $R_{\human}$ globally, the restricted model is correctly specified, and $\operatorname{tr}(H_{\mathcal V}^{-1}J_{\mathcal V})=q$ by \Cref{lem:comparison-laplacian}\eqref{eq:laplacian-trace}.

    For the unrestricted endpoint, take $B_{\human}\in\RR^{N\times(N-1)}$ spanning the centered score space; connectivity of $\cG_{\human}^\infty$ gives \eqref{eq:fixed-space-identification}, correct specification, and $q=N-1$.
    For the anchored endpoint, take $B=M$ and $q=d$; \eqref{eq:fixed-space-identification} is equivalent to $\rank(D_{\human}^\infty M)=d$, as relaxed further in \Cref{thm:incomplete-human-coverage}.

    \paragraph{Local-mismatch regime.}
    Let $\pi_{ij}^\star:=\sigma\{(e_i-e_j)^\top Bc_B^\star\}$.
    By the KL identity \eqref{eq:population-risk-kl}, which does not require connectivity,
    \[
    \Delta_{\mathcal V}
    =
    \sum_{(i,j)\in\Omega_{\human}^{\infty}}
    \rho_{ij}
    \operatorname{KL}
    \left\{
    \operatorname{Bernoulli}(p_{\human ij})
    \;\middle\|\;
    \operatorname{Bernoulli}(\pi^\star_{ij})
    \right\}
    \ge
    0 .
    \]
    Hence $n_{\human}\Delta_{\mathcal V}\to\delta_{\mathcal V}<\infty$ implies $\Delta_{\mathcal V}\to0$ and therefore $\pi^\star_{ij}-p_{\human ij}\to0$ for every $(i,j)\in\Omega_{\human}^{\infty}$.
    Specializing \eqref{eq:generic-HJ} gives $J_{\mathcal V}-H_{\mathcal V}\to0$, while the uniform interior bound keeps $H_{\mathcal V}$ uniformly positive definite, so $\operatorname{tr}(H_{\mathcal V}^{-1}J_{\mathcal V})\to q$.
    The remainder in \eqref{eq:population-risk-general} is uniform along the sequence by the triangular-array clause of \Cref{lem:fixed-space-risk}, so multiplying by $n_{\human}$ gives
    \[
    n_{\human}\EE\left[R_{\human}(\widetilde s_{\mathcal V})-R_{\human}(s_{\human})\right]
    \longrightarrow
    \delta_{\mathcal V}+\frac{q}{2},
    \]
    and the same argument with $\mathcal V$ the whole centered score space gives the limit $(N-1)/2$.
    The former is smaller if and only if $\delta_{\mathcal V}<(N-1-q)/2$.
    Taking $\mathcal V=\col(M)$ and $q=d$ gives \Cref{thm:population-risk-tradeoff}.
\end{proof}

\subsection{Calibration under Incomplete Human Comparison Coverage}
\label{app:incomplete-human-coverage}

Under \eqref{eq:human-subspace-model}, connectivity of $\cG_{\human}^\infty$ can be replaced by identification of the $d$ directions in $\col(M)$.
Enumerate $\Omega_{\human}^\infty=\{(i_t,j_t)\}_{t=1}^{m_{\human}^\infty}$, let $D_{\human}^\infty$ have rows $(e_{i_t}-e_{j_t})^\top$, and write $C_{\human}$ for the number of connected components of $\cG_{\human}^\infty$, including isolated items, so $q_{\human}:=\rank(D_{\human}^\infty)=N-C_{\human}$.

\begin{theorem}[Calibration with incomplete human coverage]
\label[theorem]{thm:incomplete-human-coverage}
    Suppose \eqref{eq:human-subspace-model} holds and $\rank(D_{\human}^\infty M)=d$.
    Then:
    \begin{enumerate}[(i), ref=\roman*]
    \item \label{eq:ihc-identification} $c_{\human}$ and the basis-invariant score $s_{\human}=Mc_{\human}$ are identifiable from the population human probabilities, whereas an unrestricted centered score is not identifiable when $C_{\human}>1$.
    
    \item \label{eq:ihc-risk}
    Treating $M$ as known,
    \[
    \EE\!\left[R_{\human}(\widetilde s_\infty)-R_{\human}(s_{\human})\right]
    =
    \frac{d}{2n_{\human}}+o(n_{\human}^{-1}).
    \]
    Let $M_{\human}\in\RR^{N\times q_{\human}}$ span $\col\{(D_{\human}^\infty)^\top\}$ and set $\widehat s_0^\dagger:=M_{\human}\widehat c_{M_{\human}}$.
    Then
    \[
    \EE\!\left[R_{\human}(\widehat s_0^\dagger)-R_{\human}(s_{\human})\right]
    =
    \frac{q_{\human}}{2n_{\human}}+o(n_{\human}^{-1}).
    \]
    \item \label{eq:ihc-inference}
    Under the remaining conditions of \Cref{thm:weighted-asymptotic-normality}, with connectivity of $\cG_{\human}^\infty$ replaced by $\rank(D_{\human}^\infty M)=d$, the conclusion \eqref{eq:calibrated-asymptotic-normality} and the consistency of the sandwich estimator \eqref{eq:weighted-empirical-sandwich} continue to hold.
    \end{enumerate}
\end{theorem}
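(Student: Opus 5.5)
The plan is to reduce every part to the fixed-space machinery of \Cref{lem:fixed-space-existence,lem:fixed-space-risk,lem:comparison-laplacian}. The single structural fact needed is that the population identification condition \eqref{eq:fixed-space-identification} for a basis $B$ is exactly $\rank(D_{\human}^\infty B)=\rank(B)$, that is, $\ker(D_{\human}^\infty)\cap\col(B)=\{0\}$. Connectivity was used earlier only to guarantee this for $B$ spanning the whole centered space.

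For \eqref{eq:ihc-identification}, I will repeat the argument in the proof of \Cref{prop:human_identification} with $D_{\human}$ replaced by $D_{\human}^\infty$. Equal population probabilities on $\Omega_{\human}^\infty$ force $D_{\human}^\infty M(c-\widetilde c)=0$, so $c=\widetilde c$ by the rank assumption. Basis invariance of $Mc_{\human}$ is inherited verbatim. For the negative statement, I will note that $\ker(D_{\human}^\infty)$ is spanned by the indicators of the $C_{\human}$ connected components. When $C_{\human}>1$, a difference of two component indicators, rescaled to be centered, is a nonzero centered vector in this kernel. Adding it to $s_{\human}$ leaves all population probabilities unchanged.

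For \eqref{eq:ihc-risk}, I will apply \Cref{lem:fixed-space-risk} with $B=M$, $q=d$. The rank assumption is \eqref{eq:fixed-space-identification}, and correct specification together with \Cref{lem:comparison-laplacian}\eqref{eq:laplacian-trace} gives $\operatorname{tr}(H_M^{-1}J_M)=d$. That lemma's proof used only $\one_N^\top B=0$ and the identification condition, not connectivity. Since $s_{\human}=Mc_{\human}$ minimizes $R_{\human}$, we have $R_M(c_M^\star)=R_{\human}(s_{\human})$, and the first display follows.

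For the second display I take $B=M_{\human}$. Three things must be checked. First, $\one_N^\top M_{\human}=0$, because $\row(D_{\human}^\infty)\perp\one_N$. Second, \eqref{eq:fixed-space-identification} holds, because $\col(M_{\human})=\row(D_{\human}^\infty)=\ker(D_{\human}^\infty)^\perp$, so $D_{\human}^\infty M_{\human}$ has full column rank $q_{\human}=N-C_{\human}$. Third, the restricted model is correctly specified in the sense of the population risk. Write $s_{\human}=P_{M_{\human}}s_{\human}+k$ with $k\in\ker(D_{\human}^\infty)$. Then $P_{M_{\human}}s_{\human}$ induces the same probabilities on $\Omega_{\human}^\infty$, so $R_{\human}(M_{\human}c^\star_{M_{\human}})=R_{\human}(s_{\human})$ and $\pi_B=p_{\human IJ}$. \Cref{lem:comparison-laplacian}\eqref{eq:laplacian-trace} then gives the trace $q_{\human}$.

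For \eqref{eq:ihc-inference}, I will retrace the proof of \Cref{thm:weighted-asymptotic-normality} and locate where connectivity enters. I expect this is the main obstacle, because that proof is only referenced here. My working hypothesis is that connectivity is used only to show that the human part of the local information is nondegenerate along the $c$-directions. The LLM information is already nondegenerate in the $(S,b)$ directions by \Cref{cond:llm-design} and \Cref{prop:stage-one-normality}.

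Concretely, I plan to show $\mathcal H_\lambda=\mathcal I_{\human}+\lambda\mathcal I_{\llm}\succ0$. Suppose a tangent vector $\dot\zeta$ satisfies $\dot\zeta^\top\mathcal H_\lambda\dot\zeta=0$. Then $\mathcal I_{\llm}\dot\zeta=0$, which forces $\dot S=0$ and $\dot b=0$, so the tangent reduces to a change $\dot s=M\dot c$ within the fixed space $\col(M)$. Next, $\dot s^\top L\dot s=0$ with $L$ now the Laplacian of the possibly disconnected $\cG_{\human}^\infty$. This gives $D_{\human}^\infty M\dot c=0$, hence $\dot c=0$ by the rank assumption.

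Positive definiteness of $\mathcal H_\lambda$, uniqueness of the population minimizer (which comes from part \eqref{eq:ihc-identification}), and finite-support CLTs are the only inputs to the consistency, Taylor-expansion, and sandwich-consistency steps. Those steps should therefore go through unchanged. The one remaining point to verify is that any compactness or coercivity argument for the human loss is run only along $\col(M_\theta)$ near the truth, which is exactly where the rank condition applies.
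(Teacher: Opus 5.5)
Your proposal is correct and follows the paper's own proof. In (i) you use the same kernel/rank argument. In (ii) you apply \Cref{lem:fixed-space-risk} with $B=M$ and $B=M_{\human}$ after checking identification and correct specification. In (iii) you reduce everything to $\mathcal H_\lambda\succ0$ under $\rank(D_{\human}^\infty M)=d$. The only difference is that you re-derive two pieces the paper cites: the kernel argument packaged as \Cref{lem:h-lambda-pd}, and the coercivity step along $\col(M_\theta)$, which \Cref{lem:weighted-consistency} already states under the rank condition rather than connectivity.
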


\begin{proof}[Proof of \Cref{thm:incomplete-human-coverage}]
    For \eqref{eq:ihc-identification}, equality of population probabilities gives
    $D_{\human}^\infty M(c_{\human}-\widetilde c_{\human})=0$, so the rank condition gives $c_{\human}=\widetilde c_{\human}$.
    If $C_{\human}>1$, then $\dim\ker(D_{\human}^\infty)=C_{\human}$, which contains a nonzero centered direction.
    
    For \eqref{eq:ihc-risk}, $\rank(D_{\human}^\infty M)=d$ is exactly \eqref{eq:fixed-space-identification} for $M$, so exact calibration and \Cref{lem:fixed-space-risk} give the first expansion.
    For $M_{\human}$, $D_{\human}^\infty$ is injective on $\col(M_{\human})$ and the restricted model reproduces $D_{\human}^\infty s_{\human}$; applying \Cref{lem:fixed-space-risk} with dimension $q_{\human}$ gives the second expansion.
    
    For \eqref{eq:ihc-inference}, \Cref{lem:h-lambda-pd} gives $\mathcal H_\lambda\succ0$ under $\rank(D_{\human}^\infty M)=d$, and \Cref{lem:weighted-consistency} then allows the proof of \Cref{thm:weighted-asymptotic-normality} to proceed unchanged.
\end{proof}

If comparisons cover only $\mathcal C\subset[N]$ and its induced graph is connected, then $\rank(D_{\human}^\infty M)=d$ is equivalent to $\rank(P_{\mathcal C}M_{\mathcal C})=d$, where $P_{\mathcal C}:=I_{|\mathcal C|}-|\mathcal C|^{-1}\one\one^\top$.
Hence $s_{\human}$ can remain identifiable on all $N$ items even when items outside $\mathcal C$ receive no human comparisons.

\subsection{Additional Results for Human Calibration}
\label{app:human-calibration-additional}

\paragraph{Effect of estimating the calibration space.}
The oracle analysis of \Cref{thm:population-risk-tradeoff} treats $M$ as fixed, whereas the staged endpoint \eqref{eq:staged-calibrated-estimator} estimates it from the LLM comparisons.
Under exact calibration, write $H_M:=M^\top LM$.

\begin{corollary}[Oracle equivalence of the staged endpoint]
\label[corollary]{cor:staged-oracle-equivalence}
    Suppose the exact-calibration case of \Cref{thm:population-risk-tradeoff} holds, \Cref{prop:stage-one-normality} applies, and $n_{\human},n_{\llm}\to\infty$ with $n_{\human}/n_{\llm}\to0$.
    Then
    \[
    \|\widehat s_\infty-\widetilde s_\infty\|_2
    =
    \Op(n_{\llm}^{-1/2}),
    \qquad
    \sqrt{n_{\human}}(\widehat s_\infty-s_{\human})
    \dto
    \cN(0,MH_M^{-1}M^\top),
    \]
    and
    \[
    R_{\human}(\widehat s_\infty)-R_{\human}(\widetilde s_\infty)
    =
    o_P(n_{\human}^{-1}).
    \]
\end{corollary}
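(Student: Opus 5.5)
The plan is to treat $\widehat s_\infty$ as a perturbation of the oracle fit $\widetilde s_\infty$, with the estimated basis $\widehat M:=M_{\widehat\theta_\infty}$ replacing $M$. Since both $\widehat s_\infty$ and $\widetilde s_\infty$ depend on the basis only through its column space, I will first align bases. Let $\widehat A$ be the invertible $d\times d$ matrix closest to making $\widehat M\widehat A$ match $M$, for instance $\widehat A:=(M^\top\widehat M)^{-1}M^\top M$. Set $\check M:=\widehat M\widehat A$. By \eqref{eq:subspace-rate} and the normalization $N^{-1}M^\top M=I_d$, we get $\|\check M-M\|_{\mathrm F}=\Op(n_{\llm}^{-1/2})$. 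For $M=\mu$ this follows from the sign convention $\one_K^\top\gamma>0$. For $M=W$ it follows from the projection bound. Because $\widehat s_\infty=\check M\check c$, where $\check c:=\arg\min_c\ell_{\human}(\check Mc)$, it suffices to compare the minimizers of $c\mapsto\ell_{\human}(\check Mc)$ and $c\mapsto\ell_{\human}(Mc)$.

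\textbf{Step 1 (basis perturbation of the $M$-estimator).} I will condition on the LLM sample, which is independent of the human sample, so that $\check M$ is a fixed matrix with $\|\check M-M\|=\Op(n_{\llm}^{-1/2})$. Uniformly over $B$ in a shrinking neighborhood of $M$, the empirical criterion $\ell_{\human}(Bc)$ has Hessian converging to $B^\top LB\succ0$. This uses \Cref{lem:fixed-space-existence} together with finiteness of the support and bounded logistic derivatives. By the implicit function theorem applied to the score equation $B^\top\nabla\ell_{\human}(Bc)=0$, the minimizer $\widehat c(B)$ is locally Lipschitz in $B$ with a stochastically bounded constant. Hence $\|\check c-\widetilde c_\infty\|=\Op(n_{\llm}^{-1/2})$, and
\[
\|\widehat s_\infty-\widetilde s_\infty\|_2\le\|\check M-M\|\,\|\check c\|+\|M\|\,\|\check c-\widetilde c_\infty\|=\Op(n_{\llm}^{-1/2}).
\]

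\textbf{Step 2 (distribution).} Write
\[
\sqrt{n_{\human}}(\widehat s_\infty-s_{\human})=\sqrt{n_{\human}}(\widetilde s_\infty-s_{\human})+\sqrt{n_{\human}}\,\Op(n_{\llm}^{-1/2}).
\]
The second term is $\op(1)$ because $n_{\human}/n_{\llm}\to0$. The first term converges to $\cN(0,MH_M^{-1}M^\top)$ by \Cref{lem:fixed-space-risk} under correct specification, where $H_B=J_B=M^\top LM$ by \Cref{lem:comparison-laplacian}\eqref{eq:laplacian-trace}. Slutsky's lemma then gives the stated limit.

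\textbf{Step 3 (risk).} I will Taylor-expand $R_{\human}$ around $s_{\human}$, where the gradient vanishes and the Hessian is $L$. This gives
\[
R_{\human}(\widehat s_\infty)-R_{\human}(\widetilde s_\infty)=\tfrac12(\widehat s_\infty-\widetilde s_\infty)^\top L(\widehat s_\infty+\widetilde s_\infty-2s_{\human})+O(\|\cdot\|^3).
\]
The first factor is $\Op(n_{\llm}^{-1/2})$ and the second is $\Op(n_{\human}^{-1/2})$, by Step 2 and the oracle CLT. The product is therefore $\Op((n_{\human}n_{\llm})^{-1/2})=o_P(n_{\human}^{-1})$, again because $n_{\human}/n_{\llm}\to0$. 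The cubic remainder is of smaller order.

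The main obstacle is Step 1: making the Lipschitz dependence of the minimizer on the basis uniform and stochastic. My first attempt will use convexity. For fixed $B$ near $M$, the map $c\mapsto\ell_{\human}(Bc)$ is strictly convex on the high-probability event $\cE$. On that event I will bound $\|\widehat c(B)-\widehat c(M)\|$ by $\lambda_{\min}^{-1}\|B^\top\nabla\ell_{\human}(B\widehat c(M))\|$. Since $\nabla\ell_{\human}$ is Lipschitz and $M^\top\nabla\ell_{\human}(M\widehat c(M))=0$, this quantity is $O(\|B-M\|)$ times $(\|\widehat c(M)\|+\|\nabla\ell_{\human}\|)$. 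The local strong convexity lower bound $\lambda_{\min}$ is available with probability tending to one by the Hessian concentration argument in the proof of \Cref{lem:fixed-space-existence}.
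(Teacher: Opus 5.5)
Your proposal is correct and follows the same route as the paper's proof. The $\Op(n_{\llm}^{-1/2})$ subspace rate from \Cref{prop:stage-one-normality} and local Lipschitz continuity of the restricted human minimizer give the first claim, Slutsky with \Cref{lem:fixed-space-risk} gives the limit law, and a second-order expansion of $R_{\human}$ at $s_{\human}$ gives the $\Op((n_{\human}n_{\llm})^{-1/2})=o_P(n_{\human}^{-1})$ risk difference; your basis alignment $\check M=\widehat M\widehat A$, conditioning on the independent LLM sample, and strong-convexity bound simply make explicit the Lipschitz step that the paper asserts in one line.
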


\begin{proof}[Proof of \Cref{cor:staged-oracle-equivalence}]
    By \Cref{prop:stage-one-normality} and smoothness of $S\mapsto P_{\col(M_\theta)}$,
    \[
    \|P_{\col(\widehat M_\infty)}-P_{\col(M)}\|_{\mathrm F}
    =
    \Op(n_{\llm}^{-1/2}).
    \]
    Local Lipschitz continuity of the restricted human minimizer therefore gives
    $\|\widehat s_\infty-\widetilde s_\infty\|_2=\Op(n_{\llm}^{-1/2})$.
    Since $n_{\human}/n_{\llm}\to0$,
    $\sqrt{n_{\human}}(\widehat s_\infty-\widetilde s_\infty)\pto0$,
    while \Cref{lem:fixed-space-risk} gives the stated limit law for $\widetilde s_\infty$.
    
    Finally, \Cref{lem:fixed-space-risk} gives
    $\|\widetilde s_\infty-s_{\human}\|_2=\Op(n_{\human}^{-1/2})$.
    Since $\nabla R_{\human}(s_{\human})=0$ and $\nabla^2R_{\human}$ is bounded,
    \[
    |R_{\human}(\widehat s_\infty)-R_{\human}(\widetilde s_\infty)|
    =
    \Op\!\left((n_{\human}n_{\llm})^{-1/2}+n_{\llm}^{-1}\right)
    =
    o_P(n_{\human}^{-1}).
    \]
    The same limit is reached by $\widehat s_\lambda$ along any $\lambda\to\infty$ by \Cref{cor:uniform-lambda}.
\end{proof}

\paragraph{Ranking implication.}
Let $c_M^\star\in\argmin_c R_{\human}(Mc)$ and $e:=Mc_M^\star-s_{\human}$.
If $\min_{i\neq j}|s_{\human i}-s_{\human j}|>2\|e\|_\infty$, then $Mc_M^\star$ and $s_{\human}$ induce the same ranking; under \eqref{eq:human-subspace-model}, $e=\zero_N$.

\clearpage
\section{Estimation, Inference, and Adaptive Tuning}
\label{app:gacv}

\subsection{Reduced score-space representation and regularity}
\label{app:gacv-reduced}

The criterion in \eqref{eq:weighted-loss}, restricted to the space $\mathcal M_r$ in \eqref{eq:reduced-parameter-space}, is
\begin{equation}
Q_\lambda(s,S,b)
:=
\ell_{\human}(s)
+
\lambda\ell_{\llm}(S,b),
\qquad
(s,S,b)\in\mathcal M_r.
\label{eq:reduced-weighted-loss}
\end{equation}

Under \Cref{cond:llm-structure},
\[
S=[\gamma,U][\mu,V]^\top,\qquad
\rank(S)=r+1,\qquad
\operatorname{row}(S)=\col(W),
\]
with
\[
S\one_N=\zero_K,\qquad
S^\top\one_K=(\one_K^\top\gamma)\mu\neq\zero_N.
\]

Conversely, let $S\one_N=\zero_K$, $\rank(S)=r+1$, and $S^\top\one_K\neq\zero_N$, and define
\[
\mu:=\sqrt N\,\frac{S^\top\one_K}{\|S^\top\one_K\|_2},
\qquad
\gamma:=N^{-1}S\mu,
\qquad
R:=S-\gamma\mu^\top.
\]
Then $\one_N^\top\mu=0$, $\one_K^\top\gamma>0$, $\one_K^\top R=\zero_N^\top$, $R\mu=\zero_K$, and $\rank(R)=r$.
If $R=P\Sigma Q^\top$ is a compact singular-value decomposition, setting $V:=\sqrt N\,Q$ and $U:=N^{-1/2}P\Sigma$ yields a factorization satisfying \Cref{cond:llm-structure}.
Thus every admissible $S$ has a regular factorization, unique only up to orthogonal rotation of $(U,V)$.

\begin{lemma}[Factorized and reduced criterion equivalence]
\label[lemma]{lem:factor-reduced-equivalence}
    For every $\lambda>0$, the map
    \[
    \mathcal T_M(c,\theta)
    :=
    (M_\theta c,S_\theta,b),
    \qquad
    (c,\theta)\in\RR^d\times\Theta_r,
    \]
    is onto $\mathcal M_r$ and satisfies
    \[
    Q_\lambda(M_\theta c,S_\theta,b)
    =
    Q_\lambda\{\mathcal T_M(c,\theta)\}.
    \]
    Consequently,
    \[
    \inf_{\substack{c\in\RR^d\\ \theta\in\Theta_r}}
    Q_\lambda(M_\theta c,S_\theta,b)
    =
    \inf_{(s,S,b)\in\mathcal M_r}
    Q_\lambda(s,S,b).
    \]
    If $(\widehat c_\lambda,\widehat\theta_\lambda)$ minimizes \eqref{eq:weighted-joint-estimator}, then
    \[
    \bigl(
    M_{\widehat\theta_\lambda}\widehat c_\lambda,
    S_{\widehat\theta_\lambda},
    \widehat b_\lambda
    \bigr)
    \]
    is a minimizer of \eqref{eq:reduced-weighted-loss}.
    Conversely, every minimizer of \eqref{eq:reduced-weighted-loss} admits a regular factorization with the same calibrated score and objective value.
\end{lemma}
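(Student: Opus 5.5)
The plan is to verify three things in order: that $\mathcal T_M$ maps into $\mathcal M_r$, that it is onto, and that objective values match. The infimum and minimizer statements then follow formally.

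\emph{Into.} Take $(c,\theta)\in\RR^d\times\Theta_r$ with $\theta=(\gamma,\mu,U,V,b)$. By the discussion preceding the lemma, $S_\theta=[\gamma,U][\mu,V]^\top$ has the required properties.
\begin{itemize}
\item Centering: $S_\theta\one_N=\zero_K$, since $W_\theta^\top\one_N=\zero_{r+1}$.
\item Rank: $\rank(S_\theta)=r+1$. Here $[\gamma,U]$ has full column rank: if $\alpha\gamma+Uu=0$, left-multiplying by $\one_K^\top$ gives $\alpha\one_K^\top\gamma=0$, so $\alpha=0$, and then $\rank(U)=r$ forces $u=0$. Also $W_\theta$ has orthogonal columns of full rank.
\item Consensus: $S_\theta^\top\one_K=(\one_K^\top\gamma)\mu\neq\zero_N$.
\item Calibrated score: $\mathcal C_\mu(S_\theta)=\operatorname{span}(\mu)$ and $\mathcal C_W(S_\theta)=\row(S_\theta)=\col(W_\theta)$. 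Hence $M_\theta c\in\mathcal C_M(S_\theta)$ in both cases.
\end{itemize}
So $\mathcal T_M(c,\theta)\in\mathcal M_r$. The identity $Q_\lambda(M_\theta c,S_\theta,b)=Q_\lambda\{\mathcal T_M(c,\theta)\}$ is immediate, because both sides evaluate the same function at the same triple.

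\emph{Onto.} Take $(s,S,b)\in\mathcal M_r$.
\begin{itemize}
\item Use the constructive converse stated before the lemma: set $\mu=\sqrt N S^\top\one_K/\|S^\top\one_K\|_2$ and $\gamma=N^{-1}S\mu$. Take the compact SVD $R=S-\gamma\mu^\top=P\Sigma Q^\top$ and set $V=\sqrt N Q$, $U=N^{-1/2}P\Sigma$.
\item I would check the remaining constraints of \Cref{cond:llm-structure}:
 \begin{itemize}
 \item $\mu^\top V=0$ and $V^\top\one_N=0$ hold because $R\mu=0$ and $R\one_N=0$, so $\row(R)\perp\mu,\one_N$.
 \item $\one_K^\top U=0$ holds because $\one_K^\top R=\one_K^\top S-(\one_K^\top\gamma)\mu^\top$ and $\one_K^\top\gamma=N^{-1}\|S^\top\one_K\|_2\cdot\sqrt N$. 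Substituting gives $\one_K^\top R=0$.
 \end{itemize}
\item This yields $\theta\in\Theta_r$ with $S_\theta=S$.
\item It remains to produce $c$ with $M_\theta c=s$. Since $s\in\mathcal C_M(S)$, which equals $\operatorname{span}(\mu)$ or $\row(S)=\col(W_\theta)$, and $M_\theta$ has full column rank, take $c=N^{-1}M_\theta^\top s$, using $M_\theta^\top M_\theta=NI_d$.
\end{itemize}

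\emph{Conclusion.} Surjectivity together with equality of objectives gives equality of the two infima, since the objective is pulled back along an onto map. Image and preimage minimizers therefore correspond. Explicitly, the image of a factorized minimizer attains the common infimum, so it minimizes \eqref{eq:reduced-weighted-loss}. Conversely, any reduced minimizer has a preimage with the same value and the same calibrated score $M_\theta c=s$.

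The main obstacle is the rank bookkeeping in the "into" step: confirming $\rank(S_\theta)=r+1$, and that $\row(S_\theta)=\col(W_\theta)$ exactly, so that the $W$-case score lies in $\mathcal C_W$. This is where $\one_K^\top\gamma>0$ and $\one_K^\top U=0$ must be combined as above.
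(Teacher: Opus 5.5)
Your proposal is correct and follows the paper's proof essentially step for step. You show that $\mathcal T_M$ maps into $\mathcal M_r$, get surjectivity from the SVD-based reconstruction stated before the lemma, and deduce the infimum and minimizer correspondence from objective preservation along an onto map. Your explicit check that $[\gamma,U]$ has full column rank (via $\one_K^\top\gamma>0$ and $\one_K^\top U=0^\top$) and your explicit coordinate $c=N^{-1}M_\theta^\top s$ fill in details the paper leaves implicit when it asserts $\row(S_\theta)=\col(W_\theta)$ with both of dimension $r+1$.
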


\begin{proof}[Proof of \Cref{lem:factor-reduced-equivalence}]
    Fix $\theta\in\Theta_r$.
    The constraints defining $\Theta_r$ give $S_\theta\one_N=\zero_K$, $S_\theta=[\gamma,U]W_\theta^\top$, and, since $\one_K^\top U=0^\top$, $S_\theta^\top\one_K=(\one_K^\top\gamma)\mu\neq\zero_N$ because $\one_K^\top\gamma>0$ and $N^{-1}\|\mu\|_2^2=1$.
    Moreover $V$ has rank $r$ and $\mu^\top V=0$, so $\rank(W_\theta)=r+1$ and $\operatorname{row}(S_\theta)=\col(W_\theta)$, both spaces having dimension $r+1$.
    Hence $\mathcal C_M(S_\theta)=\col(M_\theta)$ and $\rank(M_\theta)=d$ in either case, so $\mathcal T_M$ maps into $\mathcal M_r$.
    Conversely, for $(s,S,b)\in\mathcal M_r$, the reconstruction above gives $\theta\in\Theta_r$ with $S_\theta=S$ and $\col(M_\theta)=\mathcal C_M(S)$, so $s=M_\theta c$ for some $c\in\RR^d$ and $\mathcal T_M$ is onto.
    By \eqref{eq:weighted-loss},
    \[
    Q_\lambda(M_\theta c,S_\theta,b)
    =
    \ell_{\human}(M_\theta c)+\lambda\ell_{\llm}(S_\theta,b)
    =
    Q_\lambda\{\mathcal T_M(c,\theta)\},
    \]
    so the two infima agree by surjectivity.
    Comparison with every factorization shows that any global minimizer of \eqref{eq:weighted-joint-estimator} maps to a minimizer of \eqref{eq:reduced-weighted-loss}, and the converse follows from surjectivity and objective preservation.
\end{proof}

The set $\mathcal M_r$ is locally a smooth manifold: after imposing $S\one_N=\zero_K$, the exact-rank constraint is locally the rank-$(r+1)$ manifold in $\RR^{K\times(N-1)}$, while $S^\top\one_K\neq\zero_N$ is open.

\paragraph{Explicit coordinates.}
For analysis, use the over-parameterization
\begin{equation}
\begin{gathered}
\varphi=(A,B,c,b)
\longmapsto
(s,S,b)=\left(s_M(A,B,c),AB^\top,b\right),
\\
s_M(A,B,c)
:=
\begin{cases}
\displaystyle
c\sqrt N\,\frac{BA^\top\one_K}{\|BA^\top\one_K\|_2},
& M=\mu,
\\[1ex]
Bc,
& M=W,
\end{cases}
\\
A\in\RR^{K\times(r+1)},
\qquad
B\in\RR^{N\times(r+1)},
\qquad
B^\top\one_N=\zero_{r+1},
\end{gathered}
\label{eq:overparam}
\end{equation}
with $A$ and $B$ full column rank, $A^\top\one_K\neq\zero_{r+1}$, $c\in\RR^d$, and $b\in\RR^K$.
Its image is $\mathcal M_r$, and its fibres are exactly the gauge orbits
\begin{equation}
(A,B,c)
\longmapsto
\begin{cases}
(AG^{-\top},BG,c), & M=\mu,\\
(AG^{-\top},BG,G^{-1}c), & M=W,
\end{cases}
\qquad
G\in GL(r+1).
\label{eq:gauge}
\end{equation}
Therefore
\begin{equation}
\dim\mathcal M_r
=
(r+1)(K+N-r-2)+d+K.
\label{eq:dim-Mr}
\end{equation}
Thus the regular fibre of \eqref{eq:overparam} has dimension $(r+1)^2$, the dimension of $GL(r+1)$.
For computation, we use the restriction
\[
A^\top\one_K=K e_1,
\]
which remains a surjective local parameterization of $\mathcal M_r$ and restricts the gauge in \eqref{eq:gauge} to $\{G\in GL(r+1):Ge_1=e_1\}$, of dimension $r(r+1)$.
For $M=\mu$, its scalar calibration coordinate is equivalently written as $\alpha$ with $s=\alpha Be_1$.

\begin{lemma}[Gauge-invariant sandwich and trace]
\label[lemma]{lem:overparam-invariance}
    Fix a local minimizer of $q_\lambda$ on $\mathcal M_r$, and let $\zeta$ be any smooth chart around it in which the Hessian $H_\zeta:=\nabla_\zeta^2q_\lambda$ is nonsingular.
    Let $J_\zeta$ be the covariance of the individual human scores in that chart.
    Let $\varphi$ be coordinates in \eqref{eq:overparam}, or in any smooth restriction that remains a submersion onto $\mathcal M_r$, including the computational restriction $A^\top\one_K=K e_1$, and set $H_\varphi:=\nabla_\varphi^2q_\lambda(\varphi)$, $J_\varphi$ the same score covariance, and $\dot s_\varphi:=\partial s/\partial\varphi^\top$.
    Then, with Moore-Penrose inverses,
    \[
    \operatorname{tr}\!\left(H_\varphi^+J_\varphi\right)
    =
    \operatorname{tr}\!\left(H_\zeta^{-1}J_\zeta\right),
    \qquad
    \dot s_\varphi H_\varphi^+J_\varphi H_\varphi^+\dot s_\varphi^\top
    =
    \dot s_\zeta H_\zeta^{-1}J_\zeta H_\zeta^{-1}\dot s_\zeta^\top.
    \]
    Consequently the GACV penalty in \eqref{eq:gacv} and the sandwich \eqref{eq:weighted-empirical-sandwich} can be evaluated in either over-parameterized chart, and neither depends on the chart.
\end{lemma}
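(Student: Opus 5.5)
The strategy is to factor the over-parameterized coordinates through the chart $\zeta$ and use the chain rule at a stationary point. Let $\Phi:\varphi\mapsto\zeta(\varphi)$ denote the composition of the parameterization \eqref{eq:overparam}, or its restriction $A^\top\one_K=Ke_1$, with the chart $\zeta$ on $\mathcal M_r$. By hypothesis $\Phi$ is a submersion near the minimizer. Write $T:=\partial\zeta/\partial\varphi^\top\in\RR^{p\times m}$, with $m=\dim\mathcal M_r$ and $p=\dim\varphi$. Then $T$ has full row rank $m$, and its null space is the tangent space of the gauge fibre \eqref{eq:gauge}.

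\textbf{First derivatives.} Since $q_\lambda(\varphi)=q_\lambda\{\zeta(\varphi)\}$, each individual human score satisfies $g^\varphi_t=T^\top g^\zeta_t$. Hence $J_\varphi=T^\top J_\zeta T$, and likewise $\dot s_\varphi=\dot s_\zeta T$.

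\textbf{Second derivatives.} For the Hessian, the chain rule gives
\[
H_\varphi=T^\top H_\zeta T+\sum_{a}\{\partial_\zeta q_\lambda\}_a\,\nabla_\varphi^2\zeta_a .
\]
The second term vanishes because we evaluate at a stationary point of $q_\lambda$ on $\mathcal M_r$, where $\nabla_\zeta q_\lambda=0$. The whole statement therefore reduces to a linear-algebra identity for $H_\varphi=T^\top H_\zeta T$, with $H_\zeta$ nonsingular and $T$ of full row rank.

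\textbf{Pseudo-inverse identity.} I would next establish
\[
T H_\varphi^+ T^\top=H_\zeta^{-1}.
\]
Write $T^\top=Q R$ with $Q\in\RR^{p\times m}$ having orthonormal columns and $R\in\RR^{m\times m}$ invertible. Then $H_\varphi=Q(RH_\zeta R^\top)Q^\top$, and since $RH_\zeta R^\top$ is invertible, the Moore-Penrose conditions give
\[
H_\varphi^+=Q(RH_\zeta R^\top)^{-1}Q^\top .
\]
Consequently $TH_\varphi^+T^\top=R^\top Q^\top Q(R^{-\top}H_\zeta^{-1}R^{-1})Q^\top QR=H_\zeta^{-1}$. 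The only care needed is that $H_\zeta$ is symmetric but possibly indefinite; the Moore-Penrose check works for any invertible middle factor.

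\textbf{Conclusion.} For the trace, cyclicity gives
\[
\operatorname{tr}(H_\varphi^+T^\top J_\zeta T)=\operatorname{tr}(TH_\varphi^+T^\top J_\zeta)=\operatorname{tr}(H_\zeta^{-1}J_\zeta).
\]
For the sandwich,
\[
\dot s_\varphi H_\varphi^+J_\varphi H_\varphi^+\dot s_\varphi^\top=\dot s_\zeta(TH_\varphi^+T^\top)J_\zeta(TH_\varphi^+T^\top)\dot s_\zeta^\top=\dot s_\zeta H_\zeta^{-1}J_\zeta H_\zeta^{-1}\dot s_\zeta^\top .
\]
Chart independence across two charts $\zeta,\zeta'$ is the special case in which $T$ is square and invertible. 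This shows that the GACV penalty and the sandwich estimator do not depend on the chart.

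\textbf{Main obstacle.} The delicate point is discarding the second-order chain-rule term. It requires exact stationarity, which holds at a local minimizer. It also requires that the restricted computational chart still be a submersion, so that $T$ retains full row rank; this is given in the hypothesis. At an approximate numerical minimizer, I would only claim the identities up to $O(\|\nabla_\zeta q_\lambda\|)$.
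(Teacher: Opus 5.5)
Your proof is correct and follows the paper's argument essentially step for step. You use the chain rule with the second-order term removed by stationarity, which gives $H_\varphi=T^\top H_\zeta T$, $J_\varphi=T^\top J_\zeta T$, and $\dot s_\varphi=\dot s_\zeta T$, and then a thin QR factorization $T^\top=QR$ yields $H_\varphi^+=Q(RH_\zeta R^\top)^{-1}Q^\top$; isolating $TH_\varphi^+T^\top=H_\zeta^{-1}$ before invoking cyclicity only repackages the paper's direct computation via $TQ=R^\top$. One small slip: $T=\partial\zeta/\partial\varphi^\top$ is $m\times p$, not $p\times m$, which is what your full row rank $m$ and $Q\in\RR^{p\times m}$ already presuppose.
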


\begin{proof}[Proof of \Cref{lem:overparam-invariance}]
    Let $T:=\partial\zeta/\partial\varphi^\top$ at $\varphi$.
    Because the chosen parameterization is a submersion onto $\mathcal M_r$, $T$ has full row rank.
    Every quantity involved is a function of $(s,S,b)$ alone, so $\nabla_\varphi=T^\top\nabla_\zeta$; since $\nabla_\zeta q_\lambda=0$ at the point, the term involving the second derivative of $\zeta$ drops and
    \[
    H_\varphi=T^\top H_\zeta T,
    \qquad
    J_\varphi=T^\top J_\zeta T,
    \qquad
    \dot s_\varphi=\dot s_\zeta T.
    \]
    Take a thin QR factorization $T^\top=QR$ with $Q^\top Q=I$ and $R$ invertible, and set $G:=RH_\zeta R^\top\succ0$.
    Then $H_\varphi=QGQ^\top$, so $H_\varphi^+=QG^{-1}Q^\top$, while $J_\varphi=Q(RJ_\zeta R^\top)Q^\top$.
    Hence
    \[
    \operatorname{tr}\!\left(H_\varphi^+J_\varphi\right)
    =
    \operatorname{tr}\!\left(G^{-1}RJ_\zeta R^\top\right)
    =
    \operatorname{tr}\!\left(R^{-\top}H_\zeta^{-1}J_\zeta R^\top\right)
    =
    \operatorname{tr}\!\left(H_\zeta^{-1}J_\zeta\right).
    \]
    Since $TQ=R^\top$, the same substitution gives $\dot s_\varphi H_\varphi^+J_\varphi H_\varphi^+\dot s_\varphi^\top=\dot s_\zeta H_\zeta^{-1}J_\zeta H_\zeta^{-1}\dot s_\zeta^\top$.
\end{proof}

\paragraph{Regularity and endpoints.}
Neither $\Theta_r$ nor $\mathcal M_r$ is closed because rank may drop and $S^\top\one_K\neq\zero_N$ is an open restriction.
The item-side normalization in \Cref{cond:llm-structure} removes scale escape; under a finite identifying design with limiting probabilities in $(0,1)$, the remaining asymptotic failure modes are rank drop and separation.
At $\lambda=0$, the score-level problem reduces to the unrestricted centered human BTL estimator $\widehat s_0$, while its factor coordinates are unidentified.
For finite $\lambda$, the human likelihood can perturb $(S,b)$, so we report judge diagnostics from the LLM-only fit $\widehat\theta_\infty$ and reserve $\widehat s_\lambda$ for the human target.

For tuning and inference, \Cref{lem:factor-reduced-equivalence} lets us work on $\mathcal M_r$, while \Cref{lem:h-lambda-pd,lem:weighted-consistency} give the required local regularity.

\paragraph{Weighted information and consistency.}
\label{app:weighted-information}

\begin{lemma}[When the weighted information is nonsingular]
\label[lemma]{lem:h-lambda-pd}
    Suppose the limiting LLM comparison design satisfies \Cref{cond:llm-design}.
    Then $\ker(\mathcal I_{\llm})$ is exactly the $d$-dimensional fibre along which $s$ moves inside $\mathcal C_M(S)=\col(M)$ with $(S,b)$ fixed, and $\mathcal I_{\human}$ restricted to that fibre is congruent to $M^\top LM$.
    Consequently, for every $\lambda>0$,
    \[
    \mathcal H_\lambda\succ0
    \quad\Longleftrightarrow\quad
    \rank(D_{\human}^\infty M)=d.
    \]
\end{lemma}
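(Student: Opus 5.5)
Work on $\mathcal M_r$ in a smooth local chart $\zeta=(\xi,c)$ at the truth, where $\xi$ parameterizes the pair $(S,b)$ on the regular structured model and $c\in\RR^d$ is the calibration coordinate. Concretely, $s=M(\xi)c$, with $M(\xi)$ a smoothly varying basis of $\mathcal C_M(S)$, for example the normalized $\mu$ or the $B$-block of \eqref{eq:overparam} with a gauge fixed. Both Hessians are Gauss--Newton forms at the truth:
\[
\mathcal I_{\llm}=\dot\eta^\top\Gamma_{\llm}\dot\eta,
\qquad
\mathcal I_{\human}=\dot s_0^\top L\dot s_0 .
\]
Here $\dot\eta$ is the Jacobian of the limiting LLM linear predictors $\eta_{kij}^{(a)}$ with respect to $\zeta$, and $\Gamma_{\llm}$ is the diagonal matrix of weights $\kappa_k\rho_{kij}^{(a)}p(1-p)$. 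The first-derivative terms drop because both risks are stationary at the truth. Under exact calibration $s_{\human}$ minimizes $R_{\human}$, and $(S,b)$ minimizes $R_{\llm}$. Each limiting cell probability lies in $(0,1)$, so $\Gamma_{\llm}\succ0$ on the limiting support.

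\textbf{Kernel of $\mathcal I_{\llm}$.} The kernel equals $\ker(\dot\eta)$. The predictors depend on $\zeta$ only through $(S,b)$, so $\dot\eta=\dot\eta_{(S,b)}\,\partial(S,b)/\partial\zeta^\top$. By \Cref{prop:ident-LLM}, applied judge-wise to the limiting supports that satisfy \Cref{cond:llm-design}, the linear map $(S,b)\mapsto\eta$ is injective on centered $S$. Tangent vectors of $\mathcal M_r$ in the $S$ direction are centered, because $S\one_N=\zero_K$ holds on $\mathcal M_r$. Hence $\dot\eta\,v=0$ forces the $(S,b)$-component of $v$ to vanish. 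The chart is a local diffeomorphism onto $\mathcal M_r$, so the map $v\mapsto(\text{$(S,b)$-component},\ \text{$s$-component})$ is injective. Therefore $\ker(\mathcal I_{\llm})$ is the set of tangent directions with $\delta S=0$ and $\delta b=0$. In those directions $s$ moves within $\mathcal C_M(S)$, so this set is exactly the $d$-dimensional fibre $\{(M\delta c,0,0)\}$.

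\textbf{Restriction of $\mathcal I_{\human}$ to the fibre.} On the fibre, $\dot s_0 v=M\delta c$, which gives
\[
v^\top\mathcal I_{\human}v=\delta c^\top M^\top LM\,\delta c .
\]
So the restriction is congruent to $M^\top LM$ through the coordinate map $\delta c$. Note that $L$ is defined with the population support $\Omega_{\human}^\infty$ and has weights $\rho_{ij}p_{\human ij}(1-p_{\human ij})>0$.

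\textbf{The equivalence.} Both $\mathcal I_{\human}$ and $\mathcal I_{\llm}$ are positive semidefinite, and $\lambda>0$. Hence $v^\top\mathcal H_\lambda v=0$ if and only if $v\in\ker(\mathcal I_{\llm})\cap\ker(\mathcal I_{\human})$, that is, if and only if $v$ lies in the fibre and $M^\top LM\,\delta c=0$. Thus $\mathcal H_\lambda\succ0$ if and only if $M^\top LM\succ0$.

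It remains to show $M^\top LM\succ0$ if and only if $\rank(D_{\human}^\infty M)=d$. Write $L=(D_{\human}^\infty)^\top\Lambda D_{\human}^\infty$ with $\Lambda$ diagonal and positive. Then
\[
\delta c^\top M^\top LM\,\delta c=\|\Lambda^{1/2}D_{\human}^\infty M\delta c\|_2^2 ,
\]
which vanishes exactly when $D_{\human}^\infty M\delta c=0$. This gives the claim.

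\textbf{Main obstacle.} The only delicate step is making rigorous that the kernel of $\mathcal I_{\llm}$ is precisely the fibre, neither larger nor smaller. This requires two facts. First, the tangent space of $\mathcal M_r$ must surject onto the tangent of the $(S,b)$ stratum, with fibre dimension exactly $d$. I would read this off from the dimension count \eqref{eq:dim-Mr}, together with the local product structure $s\in\mathcal C_M(S)$, where $\mathcal C_M(S)$ varies smoothly on the rank-$(r+1)$ stratum. Second, \Cref{prop:ident-LLM} must be applied at the level of tangent vectors. This works directly because the predictor is linear in $(S,b)$, so the injectivity established there is exactly injectivity of the differential.
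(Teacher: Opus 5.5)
Your proposal is correct and follows essentially the same route as the paper. Both arguments identify $\ker(\mathcal I_{\llm})$ as the tangent directions that leave the limiting LLM linear predictor unchanged, and use the LLM design condition with row centering to force $\dot S_v=0$ and $\dot b_v=0$. Both then restrict $\mathcal I_{\human}$ to the remaining $d$-dimensional fibre to obtain $M^\top LM$, and read off positivity from the weighted Laplacian quadratic form in $D_{\human}^\infty Mu$.

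Your product chart $\zeta=(\xi,c)$ and the Gauss--Newton bookkeeping only make explicit steps that the paper compresses. The paper states the fibre restriction chart-free as $\Pi^\top\mathcal I_{\human}\Pi=C^\top M^\top LMC$.
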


\begin{proof}[Proof of \Cref{lem:h-lambda-pd}]
    For a tangent direction $v$, write $(\dot s_v,\dot S_v,\dot b_v)$ for its score-level differential.
    Since $\mathcal I_{\llm}$ is the Hessian of the limiting LLM risk, $v^\top\mathcal I_{\llm}v=0$ if and only if the LLM linear predictor is unchanged, so
    \[
    D_k\dot S_{v,k\cdot}^\top+z_k\dot b_{v,k}=0
    \]
    on the limiting support for every $k$.
    By \Cref{cond:llm-design} and row centering, this implies $\dot S_v=0$ and $\dot b_v=0$.
    With $(S,b)$ fixed, the remaining directions are $\dot s_v=M\dot c_v$, giving a $d$-dimensional fibre.
    If $\Pi$ is a basis of this fibre, then $\dot s_0\Pi=MC$ for some invertible $C$, and
    \[
    \Pi^\top\mathcal I_{\human}\Pi
    =
    C^\top M^\top LM C.
    \]
    Hence $\mathcal H_\lambda\succ0$ if and only if $M^\top LM\succ0$.
    By \eqref{eq:comparison-laplacian},
    \[
    u^\top M^\top LMu
    =
    \sum_{(i,j)\in\Omega_{\human}^\infty}
    \rho_{ij}p_{\human ij}(1-p_{\human ij})
    \{(e_i-e_j)^\top Mu\}^2,
    \]
    which is positive for every $u\neq0$ exactly when $\rank(D_{\human}^\infty M)=d$.
\end{proof}

The condition \Cref{cond:llm-design} is sufficient but not necessary here: if it fails, a direction invisible to the LLM likelihood may still move $\mathcal C_M(S)$ and hence be detected by the human likelihood.

\paragraph{Effective dimension.}
For $A,B\succeq0$ with $A+\lambda B\succ0$, define
\[
\mathrm{df}(\lambda)
:=
\operatorname{tr}\{(A+\lambda B)^{-1}A\}.
\]
Writing $G_\lambda=(A+\lambda B)^{-1}$ gives
\[
\mathrm{df}'(\lambda)
=
-\operatorname{tr}(G_\lambda BG_\lambda A)
\leq0,
\]
with $\mathrm{df}(0^+)=\rank(A)$ and $\mathrm{df}(\infty)=\dim\ker(B)$.
Taking $A=\mathcal I_{\human}$ and $B=\mathcal I_{\llm}$, \Cref{lem:h-lambda-pd} gives $\dim\ker(\mathcal I_{\llm})=d$, so the effective dimension decreases from $\rank(\mathcal I_{\human})$ to $d$, and from $N-1$ to $d$ under the connected-design setting of \Cref{thm:weighted-asymptotic-normality} when $\rank(\dot s_0)=N-1$.

\begin{lemma}[Regularity and consistency of the weighted estimator]
\label[lemma]{lem:weighted-consistency}
    Fix $\lambda\in(0,\infty)$ and $N,K,r$.
    Under model \eqref{eq:binomial-model-llm}, \eqref{eq:binomial-model-human}, and \eqref{eq:human-subspace-model}, \Cref{cond:score-normalization,cond:llm-structure}, let the human comparisons follow the independent-draw scheme of \Cref{subsec:population-calibration}, let $n_{\human},n_{\llm}\to\infty$, and suppose
    $n_k/n_{\llm}\to\kappa_k>0$ and $n_{kij}^{(a)}/n_k\to\rho_{kij}^{(a)}$, with each limiting within-judge support satisfying \Cref{cond:llm-design}.
    Assume $\rank(D_{\human}^\infty M)=d$ and that all active limiting comparison probabilities lie in $(0,1)$.
    Then the population criterion
    \[
    R_{\human}(s)+\lambda R_{\llm}(S,b)
    \]
    has the unique minimizer
    $(s_{\human},S,b)$ over $\mathcal M_r$, and its value is strictly
    separated from the minimum along any sequence approaching the boundary of $\mathcal M_r$.
    With probability tending to one, the empirical reduced criterion has a finite global minimizer in $\mathcal M_r$, every measurable sequence of such minimizers is consistent, and its Hessian in any smooth local coordinate on $\mathcal M_r$ is positive definite.
    Moreover, the score-level image of every global minimizer $(\widehat c_\lambda,\widehat\theta_\lambda)$ of \eqref{eq:weighted-joint-estimator} is a global minimizer of the reduced criterion.
\end{lemma}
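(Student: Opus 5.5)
The plan is to prove the four claims in order: population identification, boundary separation, empirical existence with consistency, and local curvature. The final claim about factorized minimizers will follow from \Cref{lem:factor-reduced-equivalence}.

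\emph{Population uniqueness.} For $(s,S,b)\in\mathcal M_r$, write the population criterion as
\[
\{R_{\human}(s)-R_{\human}(s_{\human})\}+\lambda\{R_{\llm}(S,b)-R_{\llm}(S_0,b_0)\}.
\]
This is a sum of nonnegative weighted KL divergences, by \eqref{eq:population-risk-kl} and its LLM analogue. The LLM term vanishes only if the limiting LLM linear predictors agree on every active cell. By \Cref{prop:ident-LLM}, applied judgewise to the limiting supports together with row centering, this forces $S=S_0$ and $b=b_0$. Then $s\in\mathcal C_M(S_0)=\col(M)$, say $s=Mc$. The human term vanishes only if $D_{\human}^\infty M(c-c_{\human})=0$, and $\rank(D_{\human}^\infty M)=d$ then gives $s=s_{\human}$.

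\emph{Boundary separation.} This is the main obstacle. The boundary of $\mathcal M_r$ consists of four escape routes:
\begin{itemize}
\item $\|(s,S,b)\|\to\infty$;
\item rank drop of $S$;
\item $S^\top\one_K\to0$;
\item with $M=\mu$, the direction $S^\top\one_K/\|S^\top\one_K\|$ failing to converge.
\end{itemize}
I will handle unbounded sequences first. By \Cref{cond:llm-design}, the map $(S,b)\mapsto$ active predictors is injective on centered $(S,b)$. Since the limiting probabilities lie in $(0,1)$, the bound $\log(1+e^v)-pv\ge\varepsilon|v|$ used in \Cref{lem:fixed-space-existence} makes $R_{\llm}$ coercive in $(S,b)$. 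Likewise, $R_{\human}$ is coercive along $\col(M')$ for $M'$ near $M$, because the rank condition is open.

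For bounded sequences approaching a limit point $(s^*,S^*,b^*)$ outside $\mathcal M_r$, continuity of $R_{\human}+\lambda R_{\llm}$ on the closure reduces the claim to showing that the limit value exceeds the minimum. A zero value would force $(S^*,b^*)=(S_0,b_0)$, which has rank exactly $r+1$ and $S_0^\top\one_K\ne0$, so $(S^*,b^*)$ lies in the interior. The normalized direction $\sqrt N S^\top\one_K/\|S^\top\one_K\|_2$ and the projection onto $\row(S)$ are then continuous at $S_0$. Hence $s^*\in\col(M)$, and this contradicts the assumption that the limit is a boundary point. Compactness of sublevel sets intersected with a closed neighborhood of the boundary then gives a strictly positive gap $\eta$.

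\emph{Empirical minimizer, consistency, Hessian.} The cell frequencies converge almost surely, and $n_{\human}/n_{\llm}$ plays no role because $\lambda$ is fixed. The empirical criterion therefore converges uniformly on compacts to the population criterion, by the law of large numbers for bounded-derivative logistic losses on finite supports. On the probability-tending-to-one event that every active cell shows both outcomes, the $|v|$ lower bound of \Cref{lem:fixed-space-existence} makes the empirical criterion coercive as well. Combining this with the gap $\eta$, the empirical criterion, with high probability, exceeds its value at the truth plus $\eta/2$ outside a compact subset $K_\eta$ of $\mathcal M_r$. It therefore attains a finite minimizer inside $K_\eta$. The standard argmin argument (uniqueness plus uniform convergence on $K_\eta$) gives consistency of every measurable sequence of global minimizers.

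For the Hessian, I will use smooth local coordinates $\zeta$. The population Hessian at $\zeta_0$ equals $\mathcal H_\lambda$, which is positive definite by \Cref{lem:h-lambda-pd} under $\rank(D_{\human}^\infty M)=d$, noting that the gradient term vanishes at the truth. The empirical Hessian converges uniformly near $\zeta_0$, and consistency then yields positive definiteness at $\widehat\zeta_\lambda$ with probability tending to one.
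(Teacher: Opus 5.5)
Your skeleton matches the paper's:
\begin{itemize}
\item population uniqueness from nonnegativity of the two KL excesses, \Cref{prop:ident-LLM} on the limiting supports, and $\rank(D_{\human}^\infty M)=d$;
\item existence and consistency from coercivity plus uniform convergence of finite-support logistic losses;
\item positive definiteness from \Cref{lem:h-lambda-pd} at $\zeta_0$ plus consistency;
\item the last claim from \Cref{lem:factor-reduced-equivalence}.
\end{itemize}
Your empirical step is organized differently. You establish a global well-separation of the population criterion and transfer it to the sample. The paper instead localizes in two stages: first $(S,b)$ through the LLM part alone, then $s$ through coercivity of $\ell_{\human}$ on calibration spaces near $\col(M)$. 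Your population boundary argument is sound and makes explicit what the paper's proof leaves implicit.

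\textbf{The gap is in the transfer step.} The lemma assumes only $\rank(D_{\human}^\infty M)=d$, not connectivity of $\cG_{\human}^\infty$. For $S$ away from $S_0$, the space $\mathcal C_M(S)$ can contain a nonzero vector of $\ker(D_{\human}^\infty)$. For example, take $S^\top\one_K$ nonzero, centered, and constant on the components of $\cG_{\human}^\infty$; this works for both $M=\mu$ and $M=W$. Along such a vector $\ell_{\human}$ is exactly constant.

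Consider the region where $(S,b)$ is bounded and away from $(S_0,b_0)$ while $\|s\|\to\infty$. This region is not compact, so uniform convergence on compacts does not reach it. Nor is it covered by your coercivity claim, which, as you correctly note at the population level, holds only for $\mathcal C_M(S)$ near $\col(M)$.

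At the population level you implicitly used $R_{\human}(s)\ge R_{\human}(s_{\human})$ on this region. The empirical analogue you need is a lower bound uniform in $s$:
\[
\ell_{\human}(s)\ \ge\ \sum_{(i,j)\in\Omega_{\human}}\frac{n_{\human ij}}{n_{\human}}\,\inf_{v\in\RR}\Bigl\{\log(1+e^{v})-\frac{Y_{\human ij}}{n_{\human ij}}\,v\Bigr\}\ \pto\ R_{\human}(s_{\human}).
\]
This is the paper's remark that the unrestricted minimum of the human loss converges to $R_{\human}(s_{\human})$. It yields
\[
\lambda\{\ell_{\llm}(S,b)-\ell_{\llm}(S_0,b_0)\}\le Q_\lambda(s,S,b)-Q_\lambda(s_{\human},S_0,b_0)+\op(1),
\]
where the $\op(1)$ term does not depend on $(s,S,b)$. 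The LLM part alone then localizes $(S,b)$ near $(S_0,b_0)$. After that, your local human coercivity confines $s$, and your argmin argument goes through. If $\cG_{\human}^\infty$ is connected, your argument closes essentially as written.

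\textbf{A smaller point.} To confine near-minimizers to a fixed compact $K_\eta$, the coercivity must be uniform in $n$. The ``both outcomes'' bound of \Cref{lem:fixed-space-existence} gives only a lower bound of order $|v|/n$ for the normalized loss. You can fix this in either of two ways:
\begin{itemize}
\item work on the event that all active empirical proportions lie in $[\varepsilon/2,1-\varepsilon/2]$, and use $\log(1+e^{v})-pv\ge\min(p,1-p)|v|$;
\item or use convexity of each loss in its linear predictors.
\end{itemize}
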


\begin{proof}[Proof of \Cref{lem:weighted-consistency}]
    By correct specification,
    \[
    R_{\human}(s)-R_{\human}(s_{\human})\geq0,
    \qquad
    R_{\llm}(S,b)-R_{\llm}(S_0,b_0)\geq0.
    \]
    Under \Cref{cond:llm-design}, equality in the second display requires $(S,b)=(S_0,b_0)$.
    At $(S_0,b_0)$, exact calibration gives $s,s_{\human}\in\mathcal C_M(S_0)=\col(M)$, while equality in the human risk implies
    $D_{\human}^\infty(s-s_{\human})=0$.
    Hence $\rank(D_{\human}^\infty M)=d$ gives $s=s_{\human}$, proving uniqueness of the population minimizer.

    Because the comparison support is finite and all active probabilities are interior, with probability tending to one every active cell contains both outcomes.
    The empirical logistic losses are then coercive in their active linear predictors.
    Together with \Cref{cond:llm-design}, this bounds $(S,b)$ on every relevant empirical sublevel.
    Moreover, the unrestricted minimum human loss converges to $R_{\human}(s_{\human})$.
    Since the population LLM risk is separated from its minimum outside every neighborhood of $(S_0,b_0)$, uniform convergence on bounded predictor sets implies that every empirical global minimizer has $(S,b)$ in an arbitrarily small neighborhood of $(S_0,b_0)$ with probability tending to one.

    By continuity of $\mathcal C_M(S)$ and $\rank(D_{\human}^\infty M)=d$, on such a neighborhood
    $D_{\human}^\infty$ is uniformly injective on $\mathcal C_M(S)$.
    The human loss is therefore coercive in the calibration coordinates there, so the relevant empirical sublevel is compact.
    Uniform convergence and uniqueness of the population minimizer give consistency and existence of a finite minimizer in $\mathcal M_r$.

    Finally, \Cref{lem:h-lambda-pd} gives $\mathcal H_\lambda\succ0$ at the truth.
    Consistency and local uniform convergence of the empirical Hessian then imply positive definiteness of the Hessian in any smooth local coordinate with probability tending to one.
    The final claim follows from \Cref{lem:factor-reduced-equivalence}.
\end{proof}

\subsection[Proof of Theorem \ref{thm:weighted-asymptotic-normality}]{Proof of \Cref{thm:weighted-asymptotic-normality}}
\label{app:weighted-asymptotic-normality}

We work in the local coordinate $\zeta$ introduced in \Cref{subsec:asymptotic-normality}.
Let $\zeta_0$ denote the coordinate of the true reduced parameter $(s_{\human},S,b)$.
Let $\widehat\zeta_\lambda$ denote the local coordinate of the score-level image $(\widehat s_\lambda,\widehat S_\lambda,\widehat b_\lambda)$ of the factorized estimator in \eqref{eq:weighted-joint-estimator}.
By \Cref{lem:weighted-consistency,lem:factor-reduced-equivalence}, with probability tending to one this image is a regular global minimizer of the reduced criterion \eqref{eq:reduced-weighted-loss} and $\widehat\zeta_\lambda\pto\zeta_0$.
Fix a relatively compact smooth chart neighborhood of $\zeta_0$; the finite-support logistic criterion has uniformly bounded derivatives there.
Throughout this subsection, consider any sequence satisfying $n_{\human},n_{\llm}\to\infty$ and $n_{\human}/n_{\llm}\to\tau\in[0,\infty)$.

For the LLM sample, expand each binomial count into its individual Bernoulli comparisons.
For every comparison cell in the limiting support,
\[
\frac{n_{kij}^{(a)}}{n_{\llm}}
=
\frac{n_k}{n_{\llm}}\cdot\frac{n_{kij}^{(a)}}{n_k}
\longrightarrow
\kappa_k\rho_{kij}^{(a)},
\]
where $\sum_k\kappa_k=1$ and $\sum_{(i,j,a)}\rho_{kij}^{(a)}=1$ for each $k$, so the cell weights $\kappa_k\rho_{kij}^{(a)}$ sum to one, and each limiting within-judge support satisfies \Cref{cond:llm-design}.
Writing
\[
\eta_{kij}^{(a)}(S,b)
=
S_{ki}-S_{kj}+ab_k,
\]
the limiting population LLM loss is
\begin{align}
    R_{\llm}(S,b)
    =
    \sum_{k=1}^K
    \sum_{i<j}
    \sum_{a\in\{-1,1\}}
    \kappa_k\rho_{kij}^{(a)}
    \left[
    \log\left\{1+\exp\left(\eta_{kij}^{(a)}(S,b)\right)\right\}
    -
    p_{kij}^{(a)}
    \eta_{kij}^{(a)}(S,b)
    \right], \label{eq:R-llm}
\end{align}
where terms with $\kappa_k\rho_{kij}^{(a)}=0$ are omitted.

\begin{proof}[Proof of \Cref{thm:weighted-asymptotic-normality}]
    For the $t$-th human comparison between items $(i_t,j_t)$,
    define its negative log-likelihood contribution as
    \begin{equation}
    h_t(\zeta)
    :=
    \log\left\{
    1+\exp\left(
    \eta_{\human,i_tj_t}(\zeta)
    \right)
    \right\}
    -
    Y_t\eta_{\human,i_tj_t}(\zeta).
    \end{equation}

    For the $u$-th LLM comparison, corresponding to judge $k_u$,
    items $(i_u,j_u)$, and order $a_u\in\{-1,1\}$, define
    \begin{equation}
    l_u(\zeta)
    :=
    \log\left\{
    1+\exp\left(
    \eta_{k_u i_u j_u}^{(a_u)}(\zeta)
    \right)
    \right\}
    -
    Y_u\eta_{k_u i_u j_u}^{(a_u)}(\zeta).
    \end{equation}
    Both are expressed through the reduced local coordinate $\zeta$.
    Then
    \[
    q_\lambda(\zeta)
    =
    \frac{1}{n_{\human}}
    \sum_{t=1}^{n_{\human}}
    h_t(\zeta)
    +
    \frac{\lambda}{n_{\llm}}
    \sum_{u=1}^{n_{\llm}}
    l_u(\zeta).
    \]

    \paragraph{Part 1: Asymptotic normality.}
    Under the human BTL model, the LLM order-effect model, and the calibration restriction \eqref{eq:human-subspace-model}, $\zeta_0$ correctly specifies both likelihood components, so the two population scores vanish separately at $\zeta_0$.
    The information identity gives
    \[
    \Var\left\{\nabla_\zeta h_t(\zeta_0)\right\}
    =
    \mathcal I_{\human}
    \]
    for the human observations, while the limiting LLM design gives
    \[
    \frac{1}{n_{\llm}}
    \sum_{u=1}^{n_{\llm}}
    \Var\left\{\nabla_\zeta l_u(\zeta_0)\right\}
    \longrightarrow
    \mathcal I_{\llm}.
    \]

    Both $\mathcal I_{\human}$ and $\mathcal I_{\llm}$ are positive semidefinite, and $\mathcal H_\lambda\succ0$ by \Cref{lem:h-lambda-pd}, since connectivity of $\cG_{\human}^\infty$ makes $L$ positive definite on the centered space and hence gives $\rank(D_{\human}^\infty M)=d$.

    Because the human and LLM samples are independent,
    \begin{align*}
    \sqrt{n_{\human}}\nabla_\zeta q_\lambda(\zeta_0)
    &=
    \frac{1}{\sqrt{n_{\human}}}
    \sum_{t=1}^{n_{\human}}
    \nabla_\zeta h_t(\zeta_0)
    +
    \lambda
    \sqrt{\frac{n_{\human}}{n_{\llm}}}
    \frac{1}{\sqrt{n_{\llm}}}
    \sum_{u=1}^{n_{\llm}}
    \nabla_\zeta l_u(\zeta_0).
    \end{align*}
    The human term converges to $\mathcal N(0,\mathcal I_{\human})$.
    For the LLM triangular array, the individual score vectors are uniformly bounded and their average covariance converges to $\mathcal I_{\llm}$ under the limiting cell proportions, so the Lindeberg-Feller CLT gives
    \[
    \frac{1}{\sqrt{n_{\llm}}}\sum_{u=1}^{n_{\llm}}\nabla_\zeta l_u(\zeta_0)
    \dto
    \mathcal N(0,\mathcal I_{\llm}).
    \]
    Therefore, if $\tau>0$, independence and the joint central limit theorem give
    \[
    \sqrt{n_{\human}}\nabla_\zeta q_\lambda(\zeta_0)
    \dto
    \mathcal N
    \left(
    0,\,
    \mathcal I_{\human}+\lambda^2\tau\mathcal I_{\llm}
    \right).
    \]
    If $\tau=0$, then
    \[
    \lambda
    \sqrt{\frac{n_{\human}}{n_{\llm}}}
    \frac{1}{\sqrt{n_{\llm}}}
    \sum_{u=1}^{n_{\llm}}
    \nabla_\zeta l_u(\zeta_0)
    =
    \op(1),
    \]
    so the same conclusion holds with $\tau=0$.
    Hence, for any sequence with $n_{\human}/n_{\llm}\to\tau\in[0,\infty)$,
    \[
    \sqrt{n_{\human}}\nabla_\zeta q_\lambda(\zeta_0)
    \dto
    \mathcal N
    \left(
    0,\mathcal J_{\lambda,\tau}
    \right).
    \]

    Because the comparison support is finite and $\widehat\zeta_\lambda\pto\zeta_0$, the local uniform law of large numbers gives
    \[
    \nabla_\zeta^2 q_\lambda(\widetilde\zeta_\lambda)
    \pto
    \mathcal I_{\human}+\lambda\mathcal I_{\llm}
    =
    \mathcal H_\lambda
    \]
    for any intermediate point $\widetilde\zeta_\lambda$ between $\widehat\zeta_\lambda$ and $\zeta_0$.
    A Taylor expansion of the first-order condition therefore gives
    \[
    0
    =
    \nabla_\zeta q_\lambda(\zeta_0)
    +
    \left\{
    \mathcal H_\lambda+\op(1)
    \right\}
    (\widehat\zeta_\lambda-\zeta_0),
    \]
    and hence
    \[
    \sqrt{n_{\human}}
    (\widehat\zeta_\lambda-\zeta_0)
    =
    -
    \mathcal H_\lambda^{-1}
    \sqrt{n_{\human}}\nabla_\zeta q_\lambda(\zeta_0)
    +
    \op(1).
    \]
    It follows that
    \[
    \sqrt{n_{\human}}
    (\widehat\zeta_\lambda-\zeta_0)
    \dto
    \mathcal N
    \left(
    0,\,
    \mathcal H_\lambda^{-1}
    \mathcal J_{\lambda,\tau}
    \mathcal H_\lambda^{-1}
    \right).
    \]

    Applying the delta method to $\zeta\mapsto s(\zeta)$ gives
    \[
    \sqrt{n_{\human}}
    \left(
    \widehat s_\lambda-s_{\human}
    \right)
    \dto
    \mathcal N
    \left(
    0,
    \dot s_0
    \mathcal H_\lambda^{-1}
    \mathcal J_{\lambda,\tau}
    \mathcal H_\lambda^{-1}
    \dot s_0^\top
    \right).
    \]
    Under \eqref{eq:human-subspace-model}, $s_{\human}=s(\zeta_0)=Mc_{\human}$, which proves \eqref{eq:calibrated-asymptotic-normality}.

    \paragraph{Part 2: Variance estimator.}
    Under a smooth nonsingular change of local coordinates, $\mathcal H_\lambda$, $\mathcal J_{\lambda,\tau}$, and $\dot s_0$ transform covariantly, so the displayed covariance is unchanged.
    When $\tau=0$, $\mathcal J_{\lambda,0}=\mathcal I_{\human}$, giving the abundant-LLM expression in \Cref{thm:weighted-asymptotic-normality}.
    For completeness, define the empirical sandwich covariance explicitly.
    At $\widehat\zeta_\lambda$, let
    \[
    g_{\human t}
    :=
    \nabla_\zeta h_t(\widehat\zeta_\lambda),
    \qquad
    \overline g_{\human}
    :=
    \frac{1}{n_{\human}}\sum_{t=1}^{n_{\human}}g_{\human t},
    \]
    and
    \[
    g_{\llm u}
    :=
    \nabla_\zeta l_u(\widehat\zeta_\lambda),
    \qquad
    \overline g_{\llm}
    :=
    \frac{1}{n_{\llm}}\sum_{u=1}^{n_{\llm}}g_{\llm u}.
    \]
    Define
    \[
    \widehat{\mathcal V}_{\human}
    :=
    \frac{1}{n_{\human}}
    \sum_{t=1}^{n_{\human}}
    (g_{\human t}-\overline g_{\human})
    (g_{\human t}-\overline g_{\human})^\top,
    \]
    \[
    \widehat{\mathcal V}_{\llm}
    :=
    \frac{1}{n_{\llm}}
    \sum_{u=1}^{n_{\llm}}
    (g_{\llm u}-\overline g_{\llm})
    (g_{\llm u}-\overline g_{\llm})^\top,
    \]
    and
    \[
    \widehat{\mathcal H}_\lambda
    :=
    \nabla_\zeta^2q_\lambda(\widehat\zeta_\lambda),
    \qquad
    \widehat{\mathcal J}_\lambda
    :=
    \widehat{\mathcal V}_{\human}
    +
    \lambda^2\frac{n_{\human}}{n_{\llm}}
    \widehat{\mathcal V}_{\llm}.
    \]
    Writing
    \[
    \widehat{\dot s}_\lambda
    :=
    \nabla_\zeta s(\widehat\zeta_\lambda),
    \]
    define
    \begin{equation}
    \widehat\Sigma_{\lambda,s}
    :=
    \widehat{\dot s}_\lambda
    \widehat{\mathcal H}_\lambda^{-1}
    \widehat{\mathcal J}_\lambda
    \widehat{\mathcal H}_\lambda^{-1}
    \widehat{\dot s}_\lambda^\top.
    \label{eq:weighted-empirical-sandwich}
    \end{equation}
    By \Cref{lem:weighted-consistency}, finite comparison support, and the limiting-design condition,
    \[
    \widehat{\mathcal H}_\lambda\pto\mathcal H_\lambda,
    \qquad
    \widehat{\mathcal V}_{\human}\pto\mathcal I_{\human},
    \qquad
    \widehat{\mathcal V}_{\llm}\pto\mathcal I_{\llm},
    \qquad
    \widehat{\dot s}_\lambda\pto\dot s_0.
    \]
    Since $n_{\human}/n_{\llm}\to\tau$ and $\mathcal H_\lambda$ is positive definite,
    \[
    \widehat\Sigma_{\lambda,s}
    \pto
    \Sigma_{\lambda,s,\tau}
    \]
    by the continuous mapping theorem.
    Thus $n_{\human}^{-1}\widehat\Sigma_{\lambda,s}$ consistently estimates the first-order covariance of $\widehat s_\lambda$.
    The same change-of-coordinate calculation as above shows that \eqref{eq:weighted-empirical-sandwich} is invariant to the chosen smooth local coordinate.

    \paragraph{Part 3: Uniformity and infinity.}
    The uniform linearization, sandwich consistency, and $\lambda\to\infty$ limit follow from \Cref{cor:uniform-lambda}.
\end{proof}

\begin{corollary}[Uniformity in the calibration weight]
\label[corollary]{cor:uniform-lambda}
    Under the conditions of \Cref{thm:weighted-asymptotic-normality} with $n_{\human}/n_{\llm}\to0$, for every $\lambda_0>0$,
    \[
    \sup_{\lambda\ge\lambda_0}
    \Bigl\|
    \sqrt{n_{\human}}(\widehat s_\lambda-s_{\human})
    +
    \dot s_0\mathcal H_\lambda^{-1}\sqrt{n_{\human}}\,\nabla_\zeta\ell_{\human}\{s(\zeta_0)\}
    \Bigr\|_2
    =\op(1),
    \qquad
    \sup_{\lambda\ge\lambda_0}
    \bigl\|\widehat\Sigma_{\lambda,s}-\Sigma_{\lambda,s,0}\bigr\|
    =\op(1),
    \]
    and $\Sigma_{\lambda,s,0}\to MH_M^{-1}M^\top$ as $\lambda\to\infty$.
    Hence, if $\lambda_n\to\lambda_\star\in[\lambda_0,\infty)$, \eqref{eq:calibrated-asymptotic-normality} holds with covariance $\Sigma_{\lambda_\star,s,0}$; if $\lambda_n\to\infty$, its limit is that of \Cref{cor:staged-oracle-equivalence}.
\end{corollary}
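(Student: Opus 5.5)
The plan is to prove the three claims of \Cref{cor:uniform-lambda} in turn: the uniform linearization, uniform sandwich consistency, and the $\lambda\to\infty$ limit of $\Sigma_{\lambda,s,0}$. Throughout, reparameterize by $\omega:=1/\lambda\in(0,\omega_0]$ with $\omega_0:=1/\lambda_0$, and rescale the criterion to $\omega q_\lambda=\omega\ell_{\human}+\ell_{\llm}$. This compactifies the index set to $[0,\omega_0]$, with $\omega=0$ corresponding to the staged problem.

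\textbf{Step 1 (uniform consistency).} I would extend the argument of \Cref{lem:weighted-consistency}. The LLM part $\ell_{\llm}$ is free of $\omega$ and pins $(S,b)$ to a neighbourhood of $(S_0,b_0)$ uniformly in $\omega$, since its population minimizer is unique by \Cref{cond:llm-design}. On that neighbourhood, the human loss is coercive in the calibration coordinates because $\rank(D_{\human}^\infty M)=d$. Hence $\sup_{\omega\le\omega_0}\|\widehat\zeta_\lambda-\zeta_0\|=\op(1)$.

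\textbf{Step 2 (uniform linearization).} This step splits the chart into the fibre direction $\ker(\mathcal I_{\llm})$, which is the $d$-dimensional calibration fibre of \Cref{lem:h-lambda-pd}, and a complement. The first-order condition $\nabla\ell_{\human}(\widehat\zeta)+\lambda\nabla\ell_{\llm}(\widehat\zeta)=0$ is Taylor-expanded around $\zeta_0$:
\[
\sqrt{n_{\human}}(\widehat\zeta_\lambda-\zeta_0)=-\widehat{\mathcal H}_\lambda^{-1}\bigl\{\sqrt{n_{\human}}\nabla\ell_{\human}(\zeta_0)+\lambda\sqrt{n_{\human}}\nabla\ell_{\llm}(\zeta_0)\bigr\}.
\]
The key uniform bound I would aim for is
\[
\sup_{\lambda\ge\lambda_0}\|\lambda\,\mathcal H_\lambda^{-1}\|_{\mathrm{op}}=O(1),
\]
which holds because $\mathcal I_{\human}$ is positive definite on the kernel of $\mathcal I_{\llm}$; a block/Schur-complement argument gives it. With that bound, the LLM score term is $O(1)\cdot\sqrt{n_{\human}/n_{\llm}}\,\Op(1)=\op(1)$ uniformly, since $\tau=0$. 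Likewise $\widehat{\mathcal H}_\lambda^{-1}-\mathcal H_\lambda^{-1}$ is uniformly small in the relative sense $\|\mathcal H_\lambda^{1/2}(\widehat{\mathcal H}_\lambda^{-1}-\mathcal H_\lambda^{-1})\mathcal H_\lambda^{1/2}\|=\op(1)$. This follows from $\widehat{\mathcal I}_{\human}\to\mathcal I_{\human}$ and $\widehat{\mathcal I}_{\llm}\to\mathcal I_{\llm}$ evaluated at the uniformly consistent $\widehat\zeta_\lambda$, and from the fact that the error in $\lambda\widehat{\mathcal I}_{\llm}$ is dominated by $\lambda\mathcal I_{\llm}$ off the kernel. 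Applying the delta method through $\dot s$ then gives the displayed uniform expansion.

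\textbf{Main obstacle.} I expect the main difficulty to be the remainder term $\lambda\nabla^2\ell_{\llm}$ along the fibre, where $\mathcal I_{\llm}$ vanishes. The Hessian error there is multiplied by an unbounded $\lambda$. My first approach would be to use that $\ell_{\llm}$ is exactly constant along the fibre (moving $s$ within $\col(M)$ with $(S,b)$ fixed), so in adapted coordinates $(\zeta_{\mathrm{fib}},\zeta_{\perp})$ every derivative of $\ell_{\llm}$ involving $\zeta_{\mathrm{fib}}$ is identically zero. The block structure is then exact rather than asymptotic.

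\textbf{Step 3 (sandwich and limit).} Uniform sandwich consistency follows the same way. The factor $\lambda^2 (n_{\human}/n_{\llm})\widehat{\mathcal V}_{\llm}$, sandwiched by $\mathcal H_\lambda^{-1}$, is $O(n_{\human}/n_{\llm})=o(1)$ uniformly by the bound on $\lambda\mathcal H_\lambda^{-1}$. For the limit, in adapted coordinates $\mathcal H_\lambda^{-1}\mathcal I_{\human}\mathcal H_\lambda^{-1}$ converges to the fibre block $(\Pi^\top\mathcal I_{\human}\Pi)^{-1}$, padded by zeros. Since $\dot s_0\Pi=MC$ and $\Pi^\top\mathcal I_{\human}\Pi=C^\top H_MC$ by \Cref{lem:h-lambda-pd}, this gives $\Sigma_{\lambda,s,0}\to MH_M^{-1}M^\top$. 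The statements along sequences $\lambda_n$ follow from uniformity plus continuity in $\lambda$. For $\lambda_n\to\infty$, the limit $MH_M^{-1}M^\top$ matches the covariance in \Cref{cor:staged-oracle-equivalence}.
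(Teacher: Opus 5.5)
Your architecture is the paper's: a product chart $\zeta=(\xi,c)$ with $s=M(\xi)c$ in which $\ell_{\llm}$ depends on $\xi$ only (your ``adapted coordinates''), a $\lambda_0$-comparison for uniform consistency, block inversion of $\mathcal H_\lambda$, and the limit $\mathcal H_\lambda^{-1}\to\operatorname{diag}(0,C^{-1})$ with $C=M^\top LM$. The problem is the bound you call key, $\sup_{\lambda\ge\lambda_0}\|\lambda\mathcal H_\lambda^{-1}\|_{\mathrm{op}}=O(1)$. It is false. Both places where you use it, the LLM score term in Step~2 and the LLM meat in Step~3, rest on it.

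Positive definiteness of $\mathcal I_{\human}$ on $\ker(\mathcal I_{\llm})$ makes $\mathcal H_\lambda^{-1}$ bounded, not $\lambda\mathcal H_\lambda^{-1}$. Write $\mathcal I_{\human}=\begin{pmatrix}A&B\\B^\top&C\end{pmatrix}$ and $S_\lambda:=A-BC^{-1}B^\top+\lambda\mathcal I_\xi$. The $c$--$c$ block of $\mathcal H_\lambda^{-1}$ is $C^{-1}+C^{-1}B^\top S_\lambda^{-1}BC^{-1}\succeq C^{-1}$, so $\|\lambda\mathcal H_\lambda^{-1}\|_{\mathrm{op}}\ge\lambda\|C^{-1}\|_{\mathrm{op}}\to\infty$. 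This is your own ``main obstacle'' seen from the inverse side.

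What the Schur complement does give is control of the first block column only. Since $A-BC^{-1}B^\top\succeq0$, you get $S_\lambda\succeq\lambda\mathcal I_\xi$, hence $\lambda S_\lambda^{-1}\preceq\mathcal I_\xi^{-1}$. This yields $\sup_{\lambda\ge\lambda_0}\|\lambda\mathcal H_\lambda^{-1}\operatorname{diag}(E,0)\|\le c_0\|E\|$, which is the paper's \eqref{eq:uniform-block-inverse}.

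That restricted bound suffices because, in the product chart, every LLM-side quantity has identically zero $c$-blocks. This covers the score $\nabla_\zeta\ell_{\llm}(\zeta_0)=n_{\llm}^{-1/2}(Z_{\llm}^\top,0)^\top$ with $Z_{\llm}=\Op(1)$, the Hessian error $\lambda\operatorname{diag}(E_\xi,0)$, and the meat $\operatorname{diag}(\widehat{\mathcal V}_\xi,0)$. You must route each through the restricted bound:
\begin{itemize}
\item The LLM score term is $\sqrt{n_{\human}/n_{\llm}}\,\mathcal H_\lambda^{-1}(\lambda Z_{\llm}^\top,0)^\top=\Op(\sqrt{n_{\human}/n_{\llm}})$ uniformly.
\item The LLM meat contributes $(n_{\human}/n_{\llm})X_\lambda X_\lambda^\top$, with $X_\lambda:=\lambda\mathcal H_\lambda^{-1}\operatorname{diag}(\widehat{\mathcal V}_\xi^{1/2},0)$ uniformly bounded.
\end{itemize}

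The Hessian perturbation also has to be handled in one-sided form. Multiplying the expanded first-order condition by $\mathcal H_\lambda^{-1}$ gives $\{I+\mathcal H_\lambda^{-1}E_{\human}+\lambda\mathcal H_\lambda^{-1}\operatorname{diag}(E_\xi,0)\}(\widehat\zeta_\lambda-\zeta_0)=-\mathcal H_\lambda^{-1}\nabla_\zeta q_\lambda(\zeta_0)$. The bracket is $I+\op(1)$ uniformly, by the restricted bound. Your symmetric relative statement $\|\mathcal H_\lambda^{1/2}(\widehat{\mathcal H}_\lambda^{-1}-\mathcal H_\lambda^{-1})\mathcal H_\lambda^{1/2}\|=\op(1)$ is true but not enough on its own. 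Since $\|\mathcal H_\lambda^{-1/2}(\lambda Z_{\llm}^\top,0)^\top\|$ is of order $\sqrt\lambda\,\|Z_{\llm}\|$, it controls the perturbed LLM score term only up to a factor $\sqrt\lambda$, which is not uniform over $\lambda\ge\lambda_0$.

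Finally, make the Step~1 mechanism explicit; uniqueness of the LLM population minimizer alone does not pin $(\widehat S_\lambda,\widehat b_\lambda)$ uniformly. Comparing with $(s_{\human},S_0,b_0)$ gives $\ell_{\llm}(\widehat S_\lambda,\widehat b_\lambda)-\ell_{\llm}(S_0,b_0)\le\lambda_0^{-1}\{\ell_{\human}(s_{\human})-\inf\ell_{\human}\}=\op(1)$ simultaneously for all $\lambda\ge\lambda_0$. With these repairs your argument coincides with the paper's.
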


\begin{proof}[Proof of \Cref{cor:uniform-lambda}]
    By coordinate invariance, use the product chart $\zeta=(\xi,c)$, where $\xi$ is a smooth chart of the regular structured $(S,b)$ model as in \Cref{prop:stage-one-normality} and $s=M(\xi)c$ for a smooth local basis $M(\xi)$ of $\mathcal C_M\{S(\xi)\}$ with $M(\xi_0)=M$.
    In this chart the LLM loss depends on $\xi$ alone, so its score and Hessian have zero $c$-blocks at every $\zeta$, $\mathcal I_{\llm}=\operatorname{diag}(\mathcal I_\xi,0)$ with $\mathcal I_\xi\succ0$, and $\partial s/\partial c=M$ at $\zeta_0$.
    Write $\mathcal I_{\human}=\begin{pmatrix}A&B\\B^\top&C\end{pmatrix}$ with $C=M^\top LM\succ0$ (\Cref{lem:h-lambda-pd}).
    Block inversion of $\mathcal H_\lambda$ gives, for a constant $c_0$ depending on $\lambda_0$ only,
    \begin{equation}
    \mathcal H_\lambda^{-1}
    =
    \begin{pmatrix}0&0\\0&C^{-1}\end{pmatrix}
    +O(\lambda^{-1}),
    \qquad
    \sup_{\lambda\ge\lambda_0}
    \bigl\|\lambda\,\mathcal H_\lambda^{-1}\operatorname{diag}(E,0)\bigr\|
    \le c_0\|E\|
    \quad\text{for every }E.
    \label{eq:uniform-block-inverse}
    \end{equation}
    In the proof of \Cref{lem:weighted-consistency}, the inequality $\lambda\{\ell_{\llm}(\widehat\xi_\lambda)-\ell_{\llm}(\xi_0)\}\le\ell_{\human}(s_{\human})-\inf\ell_{\human}=\op(1)$ holds with $\lambda$ replaced by $\lambda_0$ for all $\lambda\ge\lambda_0$, and the remaining steps do not involve $\lambda$, so $\sup_{\lambda\ge\lambda_0}\|\widehat\zeta_\lambda-\zeta_0\|=\op(1)$.
    The first-order condition reads
    $0=\nabla q_\lambda(\zeta_0)+\{\mathcal H_\lambda+E_{\human}+\lambda\operatorname{diag}(E_\xi,0)\}(\widehat\zeta_\lambda-\zeta_0)$,
    where $E_{\human}=\op(1)$ uniformly, and $E_\xi=o(1)+O(\|\widehat\zeta_\lambda-\zeta_0\|)$ because the logistic Hessian of $\ell_{\llm}$ is free of the outcomes.
    Multiplying by $\mathcal H_\lambda^{-1}$ and using \eqref{eq:uniform-block-inverse},
    $\sqrt{n_{\human}}(\widehat\zeta_\lambda-\zeta_0)=-\{I+\op(1)\}\mathcal H_\lambda^{-1}\sqrt{n_{\human}}\nabla q_\lambda(\zeta_0)$ uniformly in $\lambda\ge\lambda_0$.
    Here $\sqrt{n_{\human}}\nabla q_\lambda(\zeta_0)=\sqrt{n_{\human}}\nabla\ell_{\human}\{s(\zeta_0)\}+\lambda\sqrt{n_{\human}/n_{\llm}}\,(Z_{\llm}^\top,0)^\top$ with $Z_{\llm}=\Op(1)$, and \eqref{eq:uniform-block-inverse} gives $\mathcal H_\lambda^{-1}(\lambda Z_{\llm}^\top,0)^\top=\Op(1)$ uniformly, so the LLM term is $\Op(\sqrt{n_{\human}/n_{\llm}})=\op(1)$; the delta method with $\widehat{\dot s}_\lambda\pto\dot s_0$ uniformly proves the first display.
    For the second, $\widehat{\mathcal H}_\lambda^{-1}-\mathcal H_\lambda^{-1}=\op(1)$ uniformly by the same expansion of $\widehat{\mathcal H}_\lambda=\mathcal H_\lambda+E_{\human}+\lambda\operatorname{diag}(E_\xi,0)$, and the LLM part of the meat contributes $\widehat{\mathcal H}_\lambda^{-1}\lambda^2(n_{\human}/n_{\llm})\operatorname{diag}(\widehat{\mathcal V}_\xi,0)\widehat{\mathcal H}_\lambda^{-1}=\Op(n_{\human}/n_{\llm})$ uniformly by \eqref{eq:uniform-block-inverse}.
    Finally, the first part of \eqref{eq:uniform-block-inverse} gives $\Sigma_{\lambda,s,0}\to\dot s_0\operatorname{diag}(0,C^{-1})\dot s_0^\top=MC^{-1}M^\top$.
\end{proof}

\paragraph{Local departures from exact calibration.}
\Cref{thm:weighted-asymptotic-normality} assumes \eqref{eq:human-subspace-model} exactly.
The following proposition gives the first-order limit under an $n_{\human}^{-1/2}$ departure.

\begin{proposition}[Inference under local misalignment]
\label[proposition]{prop:local-misalignment}
    Assume the conditions of \Cref{thm:weighted-asymptotic-normality} and $c_{\human}\neq0$, with \eqref{eq:human-subspace-model} replaced by
    \begin{equation}
    s_{\human}^{(n_{\human})}
    =
    Mc_{\human}
    +
    n_{\human}^{-1/2}h,
    \qquad
    h\in\RR^N,\quad \one_N^\top h=0.
    \label{eq:local-misalignment}
    \end{equation}
    Then
    \begin{equation}
    \sqrt{n_{\human}}
    \left(
    \widehat s_\lambda-s_{\human}^{(n_{\human})}
    \right)
    \dto
    \mathcal N
    \left(
    -(I_N-A_\lambda)h,\ \Sigma_{\lambda,s,\tau}
    \right),
    \qquad
    A_\lambda
    :=
    \dot s_0\mathcal H_\lambda^{-1}\dot s_0^\top L,
    \label{eq:local-asymptotic-normality}
    \end{equation}
    with $L$ the comparison Laplacian \eqref{eq:comparison-laplacian}.
    On the centered subspace, $A_\lambda$ is self-adjoint in the $L$-inner product, its ordered eigenvalues lie in $[0,1]$ and are nonincreasing in $\lambda$, and $A_\lambda h=h$ for $h\in\col(M)$.
    As $\lambda\to0^+$ it tends to the identity on centered vectors, while as $\lambda\to\infty$ it tends to $P_M^L:=M(M^\top LM)^{-1}M^\top L$.
\end{proposition}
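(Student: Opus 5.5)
The plan is to reuse the M-estimation linearization from the proof of \Cref{thm:weighted-asymptotic-normality}, now centered at the pseudo-true point $\zeta_0$, the coordinate of $(Mc_{\human},S,b)$. The only change is that the human score no longer has mean zero there. We argue under the local sequence \eqref{eq:local-misalignment} and write $s_n:=s_{\human}^{(n_{\human})}$.

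\textbf{Step 1: consistency and the Hessian.} Since $s_n\to Mc_{\human}$, the human probabilities converge and remain interior. The consistency argument of \Cref{lem:weighted-consistency} uses only uniform convergence and a separated population minimum, so it goes through along the triangular array and gives $\widehat\zeta_\lambda\pto\zeta_0$. Likewise $\nabla^2_\zeta q_\lambda\pto\mathcal H_\lambda$, and $\mathcal H_\lambda\succ0$ by \Cref{lem:h-lambda-pd}. The assumption $c_{\human}\neq0$ keeps the map $\zeta\mapsto s(\zeta)$ smooth near $\zeta_0$ when $M=\mu$; this is where regularity of the $\mu$-parameterization is used.

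\textbf{Step 2: the drift in the human score.} Let $X_t:=\dot s_0^\top(e_{I_t}-e_{J_t})$. The mean of the human score at $\zeta_0$ is
\[
\EE\,\nabla_\zeta h_t(\zeta_0)=\EE\bigl[X_t\{\sigma((e_{I_t}-e_{J_t})^\top Mc_{\human})-p^{(n)}_{\human I_tJ_t}\}\bigr].
\]
Expanding $\sigma$ to first order gives
\[
\EE\,\nabla_\zeta h_t(\zeta_0)=-n_{\human}^{-1/2}\dot s_0^\top L h+o(n_{\human}^{-1/2}),
\]
using the definition \eqref{eq:comparison-laplacian} of $L$; since $L\one_N=\zero_N$, centering causes no issue. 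The variance still converges to $\mathcal I_{\human}$, and the LLM part is unchanged. Lindeberg--Feller then gives
\[
\sqrt{n_{\human}}\nabla q_\lambda(\zeta_0)\dto\mathcal N\bigl(-\dot s_0^\top Lh,\ \mathcal J_{\lambda,\tau}\bigr).
\]

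\textbf{Step 3: linearization.} Taylor expansion and the delta method give
\[
\sqrt{n_{\human}}(\widehat s_\lambda-Mc_{\human})\dto\mathcal N\bigl(A_\lambda h,\ \Sigma_{\lambda,s,\tau}\bigr).
\]
Subtracting $\sqrt{n_{\human}}(s_n-Mc_{\human})=h$ yields the mean $-(I_N-A_\lambda)h$, which is \eqref{eq:local-asymptotic-normality}.

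\textbf{Step 4: structure of $A_\lambda$.} Let $G:=\dot s_0\mathcal H_\lambda^{-1}\dot s_0^\top$, which is symmetric, so $A_\lambda=GL$ and $LA_\lambda=LGL$ is symmetric. Hence $A_\lambda$ is $L$-self-adjoint. Note that $\mathcal I_{\human}=\dot s_0^\top L\dot s_0$. In $L$-orthonormal coordinates, $A_\lambda$ becomes
\[
L^{1/2}\dot s_0(\dot s_0^\top L\dot s_0+\lambda\mathcal I_{\llm})^{-1}\dot s_0^\top L^{1/2}.
\]
Setting $F:=L^{1/2}\dot s_0$, this is $F(F^\top F+\lambda\mathcal I_{\llm})^{-1}F^\top$, which is positive semidefinite and dominated by $F(F^\top F)^{+}F^\top\preceq I$. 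Its eigenvalues therefore lie in $[0,1]$. The matrix is Loewner-decreasing in $\lambda$, so the ordered eigenvalues are nonincreasing.

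For $h=Mu\in\col(M)$, pick the fibre direction $v$ with $\dot s_0v=Mu$ and $\mathcal I_{\llm}v=0$, which exists by \Cref{lem:h-lambda-pd}. Then $\mathcal H_\lambda v=\dot s_0^\top Lh$, so $A_\lambda h=\dot s_0 v=h$.

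For the limits, use the product chart of \Cref{cor:uniform-lambda}. As $\lambda\to\infty$, $\mathcal H_\lambda^{-1}\to\operatorname{diag}(0,C^{-1})$ with $C=M^\top LM$, so $A_\lambda\to M(M^\top LM)^{-1}M^\top L=P_M^L$. As $\lambda\to0^+$, $A_\lambda$ tends to the $L$-projection onto $\col(\dot s_0)$. Under connectivity and $\rank(\dot s_0)=N-1$, this projection is the identity on centered vectors.

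\textbf{Main obstacle.} The $\lambda\to0^+$ limit is the delicate step. $\mathcal I_{\human}$ is singular in $\zeta$ because the factor directions are invisible to humans, so the limit must be taken as a pseudo-inverse limit, $F(F^\top F+\lambda B)^{-1}F^\top\to P_{\col(F)}$. Proving this requires $F^\top F+\lambda B\succ0$ together with a block argument on $\ker(F)$, where $B\succ0$ on $\ker F$. It also requires surjectivity of $\dot s_0$ onto the centered space, which I would establish from the local-coordinate dimension count \eqref{eq:dim-Mr}. If that surjectivity fails, the claim only holds on $\col(\dot s_0)$.
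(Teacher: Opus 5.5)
Your linearization and the structural analysis of $A_\lambda$ match the paper's proof. Specifically, you expand at the pseudo-true coordinate of $(Mc_{\human},S,b)$, obtain the drift $-\dot s_0^\top Lh$, apply the delta method and subtract $h$, use $LA_\lambda=L\dot s_0\mathcal H_\lambda^{-1}\dot s_0^\top L$, take a fibre vector $v\in\ker(\mathcal I_{\llm})$ for $A_\lambda h=h$, and use the block limit as $\lambda\to\infty$. Your direct bound $F(F^\top F+\lambda\mathcal I_{\llm})^{-1}F^\top\preceq P_{\col(F)}$ and your Schur-complement treatment of $\lambda\to0^+$ are more explicit than the paper, which simply asserts $B_\lambda\to P$.

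\textbf{The gap.} It is the step you isolate yourself, $\rank(\dot s_0)=N-1$, and the route you propose would fail. The dimension of $\mathcal M_r$ does not depend on $c_{\human}$. Yet at $c_{\human}=0$ the product chart $s=M(\xi)c$ gives $\partial s/\partial\xi=(\partial M/\partial\xi)c_{\human}=0$. Hence $\rank(\dot s_0)=d$, $\mathcal I_{\human}=\operatorname{diag}(0,M^\top LM)$, and $A_\lambda\equiv P_M^L$ for every $\lambda$. That is not the identity on centered vectors when $d<N-1$. So no counting argument can deliver surjectivity; this is exactly what the hypothesis $c_{\human}\neq0$ is for. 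You instead attribute that hypothesis to smoothness of $\zeta\mapsto s(\zeta)$. But that map is smooth on $\mathcal M_r$ for every $c$, since normalizing $\mu$ only needs $S^\top\one_K\neq\zero_N$.

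\textbf{The missing ingredient} is a tangent computation, which is what the paper uses. Write $S=XY^\top$ with $Y^\top\one_N=\zero_{r+1}$. The tangent directions of $\mathcal M_r$ then include $\dot S=X\dot Y^\top$ for any column-centered $\dot Y$.

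For $M=\mu$, write $s=\alpha S^\top\one_K=\alpha YX^\top\one_K$. Here $X^\top\one_K\neq0$, and $\alpha\neq0$ if and only if $c_{\human}\neq0$. The variation gives $\dot s=\dot\alpha S^\top\one_K+\alpha\dot YX^\top\one_K$. Choosing $\dot Y=w(X^\top\one_K)^\top/\|X^\top\one_K\|_2^2$, which is column-centered exactly when $\one_N^\top w=0$, reaches $\alpha w$ for every centered $w$.

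For $M=W$, write $s=Yv$ with $v\neq0$ if and only if $c_{\human}\neq0$. The variation gives $\dot s=Y\dot v+\dot Yv$, and $\dot Y=wv^\top/\|v\|_2^2$ again reaches every centered $w$.

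Hence $\rank(\dot s_0)=N-1$. Under connectivity, $L^{1/2}$ is injective on centered vectors, so $\col(F)=\col(P)$. Your pseudo-inverse limit then gives $A_\lambda\to I$ on centered vectors, and the rest of your argument goes through.
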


\begin{proof}[Proof of \Cref{prop:local-misalignment}]
    Keep $\zeta_0$ at the coordinate of $(Mc_{\human},S,b)$, which under \eqref{eq:local-misalignment} is no longer the true human score.
    Only the human component of the criterion changes.
    Writing $p_{\human ij}^{(n_{\human})}=\sigma\{(Mc_{\human})_i-(Mc_{\human})_j+n_{\human}^{-1/2}(h_i-h_j)\}$ and expanding the logistic function about $n_{\human}^{-1/2}=0$,
    \begin{align*}
    \nabla_sR_{\human}^{(n_{\human})}(Mc_{\human})
    &=
    -\sum_{(i,j)\in\Omega_{\human}^\infty}
    \rho_{ij}
    \left\{
    p_{\human ij}^{(n_{\human})}
    -
    \sigma\left((Mc_{\human})_i-(Mc_{\human})_j\right)
    \right\}
    (e_i-e_j)
    \\
    &=
    -n_{\human}^{-1/2}Lh
    +
    O(n_{\human}^{-1}),
    \end{align*}
    by the definition \eqref{eq:comparison-laplacian} of $L$.
    The LLM component is unaffected, so the chain rule gives $\sqrt{n_{\human}}\,\EE\{\nabla_\zeta q_\lambda(\zeta_0)\}\to-\dot s_0^\top Lh$, while the score covariance is unchanged at first order because $p_{\human ij}^{(n_{\human})}\to p_{\human ij}$.
    Hence $\sqrt{n_{\human}}\nabla_\zeta q_\lambda(\zeta_0)\dto\cN(-\dot s_0^\top Lh,\mathcal J_{\lambda,\tau})$, and repeating the Taylor expansion above,
    \[
    \sqrt{n_{\human}}(\widehat\zeta_\lambda-\zeta_0)
    \dto
    \cN\left(
    \mathcal H_\lambda^{-1}\dot s_0^\top Lh,\
    \mathcal H_\lambda^{-1}\mathcal J_{\lambda,\tau}\mathcal H_\lambda^{-1}
    \right).
    \]
    The delta method gives $\sqrt{n_{\human}}(\widehat s_\lambda-Mc_{\human})\dto\cN(A_\lambda h,\Sigma_{\lambda,s,\tau})$, and subtracting the deterministic $\sqrt{n_{\human}}(s_{\human}^{(n_{\human})}-Mc_{\human})=h$ proves \eqref{eq:local-asymptotic-normality}.
    Since $c_{\human}\neq0$, the tangent map $\dot s_0$ spans the centered score space:
    for $M=\mu$, variation of $c$ spans $\mu$ and variation of $\mu$ spans its centered orthogonal complement; for $M=W$, variation of $c$ spans $\col(W)$ and subspace variation of $Wc_{\human}$ spans its centered complement.
    Hence $\rank(\dot s_0)=N-1$.
    
    \paragraph{The two endpoints.}
    Set
    \[
    B_\lambda
    :=
    L^{1/2}\dot s_0\mathcal H_\lambda^{-1}\dot s_0^\top L^{1/2}.
    \]
    Since $\mathcal I_{\human}=\dot s_0^\top L\dot s_0$ and $\rank(\dot s_0)=N-1$, $B_\lambda\to P$ as $\lambda\to0^+$.
    By \Cref{lem:h-lambda-pd}, $\ker(\mathcal I_{\llm})$ is the $d$-dimensional fibre mapped by $\dot s_0$ onto $\col(M)$, so $B_\lambda\to P_{L^{1/2}M}$ and $A_\lambda\to P_M^L$ as $\lambda\to\infty$.
    For $h\in\col(M)$, choose $v\in\ker(\mathcal I_{\llm})$ with $\dot s_0v=h$.
    Then
    \[
    \mathcal H_\lambda v
    =
    \mathcal I_{\human}v
    =
    \dot s_0^\top Lh,
    \]
    so $A_\lambda h=h$.
    
    Moreover,
    \[
    \frac{\partial B_\lambda}{\partial\lambda}
    =
    -
    L^{1/2}\dot s_0
    \mathcal H_\lambda^{-1}
    \mathcal I_{\llm}
    \mathcal H_\lambda^{-1}
    \dot s_0^\top L^{1/2}
    \preceq0.
    \]
    Since $B_\lambda\succeq0$ and $B_\lambda\to P$ as $\lambda\to0^+$,
    \[
    0\preceq B_\lambda\preceq P.
    \]
    Thus, on the centered subspace, the ordered eigenvalues of $B_\lambda$ lie in $[0,1]$ and are nonincreasing in $\lambda$.
    Because
    \[
    B_\lambda
    =
    L^{1/2}A_\lambda L^{+/2}
    \]
    on that subspace, $A_\lambda$ has the same eigenvalues.
    Finally,
    \[
    A_\lambda^\top L
    =
    LA_\lambda,
    \]
    so $A_\lambda$ is self-adjoint in the $L$-inner product.
    Together with $B_\lambda\to P_{L^{1/2}M}$, exactly $d$ centered directions remain unshrunk as $\lambda\to\infty$.
\end{proof}

\paragraph{Interpretation.}
By \Cref{prop:local-misalignment}, the local bias vanishes as $\lambda\to0^+$ and approaches $-(I_N-P_M^L)h$ as $\lambda\to\infty$, while $\Sigma_{\lambda,s,\tau}$ gives the corresponding first-order variance.

\begin{corollary}[Local risk at a finite calibration weight]
\label[corollary]{cor:finite-weight-risk}
Under \Cref{prop:local-misalignment} with $\tau=0$, let $R_{\human}^{(n_{\human})}$ denote the human risk under \eqref{eq:local-misalignment}.
For fixed $\lambda\in(0,\infty)$,
\[
n_{\human}\!\left\{R_{\human}^{(n_{\human})}(\widehat s_\lambda)
-R_{\human}^{(n_{\human})}(s_{\human}^{(n_{\human})})\right\}
\dto
\frac12\left\|(I_N-A_\lambda)h-G_\lambda\right\|_L^2,
\qquad
G_\lambda\sim\mathcal N(0,\Sigma_{\lambda,s,0}).
\]
The limiting variable has mean
\[
\frac12\|(I_N-A_\lambda)h\|_L^2+\frac12\tr(B_\lambda^2),
\qquad
B_\lambda:=L^{1/2}\dot s_0\mathcal H_\lambda^{-1}\dot s_0^\top L^{1/2}.
\]
Moreover,
$\tr(B_\lambda^2)\le \tr(B_\lambda)=\df(\lambda)$ and
$d\le\df(\lambda)\le N-1$.
As $\lambda\to0^+$ and $\lambda\to\infty$, the limiting means approach
$(N-1)/2$ and $\delta_M+d/2$, respectively, along \eqref{eq:local-misalignment}.
\end{corollary}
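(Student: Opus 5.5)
The plan is a second-order Taylor expansion of the local human risk around $s_{\human}^{(n_{\human})}$, combined with the limit law of \Cref{prop:local-misalignment} and a uniform-integrability-free continuous-mapping argument, since the statement is only in distribution.

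\emph{Step 1: expansion.} Write $R^{(n)}:=R_{\human}^{(n_{\human})}$. Because the model is correctly specified for the local truth, $\nabla R^{(n)}(s_{\human}^{(n_{\human})})=0$. Moreover $\nabla^2R^{(n)}(s_{\human}^{(n_{\human})})=L^{(n)}\to L$, since $p^{(n_{\human})}_{\human ij}\to p_{\human ij}$, and third derivatives are uniformly bounded on the finite support. Hence
\[
n_{\human}\{R^{(n)}(\widehat s_\lambda)-R^{(n)}(s_{\human}^{(n_{\human})})\}=\tfrac12 Z_n^\top L Z_n+O_P(n_{\human}^{-1/2}\|Z_n\|^3),
\]
with $Z_n:=\sqrt{n_{\human}}(\widehat s_\lambda-s_{\human}^{(n_{\human})})$. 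By \Cref{prop:local-misalignment} with $\tau=0$, $Z_n\dto -(I_N-A_\lambda)h+G_\lambda$. Here $Z_n=\Op(1)$, so the remainder is $\op(1)$, and the continuous mapping theorem gives the limit $\tfrac12\|(I_N-A_\lambda)h-G_\lambda\|_L^2$. The sign of $G_\lambda$ is immaterial by symmetry.

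\emph{Step 2: mean and trace bounds.} The mean of the limit is $\tfrac12\|(I-A_\lambda)h\|_L^2+\tfrac12\tr(L\Sigma_{\lambda,s,0})$, because $G_\lambda$ is centered. With $\mathcal J_{\lambda,0}=\mathcal I_{\human}=\dot s_0^\top L\dot s_0$,
\[
\tr(L\Sigma_{\lambda,s,0})=\tr(L^{1/2}\dot s_0\mathcal H_\lambda^{-1}\dot s_0^\top L\,\dot s_0\mathcal H_\lambda^{-1}\dot s_0^\top L^{1/2})=\tr(B_\lambda^2).
\]
From the proof of \Cref{prop:local-misalignment}, $0\preceq B_\lambda\preceq P$, so the eigenvalues lie in $[0,1]$ and $\tr(B_\lambda^2)\le\tr(B_\lambda)$. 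By cyclicity, $\tr(B_\lambda)=\tr(\mathcal H_\lambda^{-1}\mathcal I_{\human})=\df(\lambda)$. The effective-dimension paragraph gives monotonicity from $\rank(\mathcal I_{\human})=N-1$ (using $\rank\dot s_0=N-1$ since $c_{\human}\neq0$) down to $\dim\ker\mathcal I_{\llm}=d$. This yields $d\le\df(\lambda)\le N-1$.

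\emph{Step 3: endpoints.}
\begin{itemize}
\item As $\lambda\to0^+$, $A_\lambda\to I$ on centered vectors, so the bias term vanishes, and $B_\lambda\to P$ gives $\tr(B_\lambda^2)\to N-1$.
\item As $\lambda\to\infty$, $A_\lambda\to P_M^L$ and $B_\lambda\to P_{L^{1/2}M}$, a projection of rank $d$, so the variance term tends to $d/2$. The bias term tends to $\tfrac12\|(I-P_M^L)h\|_L^2=\tfrac12\operatorname{dist}_L^2\{h,\col(M)\}$. By \Cref{lem:comparison-laplacian}\eqref{eq:laplacian-risk-expansion}, $n_{\human}\Delta_M=\tfrac12\operatorname{dist}_L^2\{h,\col(M)\}+o(1)$ along \eqref{eq:local-misalignment}, because $\epsilon_M=O(n_{\human}^{-1/2})$. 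This identifies the bias limit with $\delta_M$.
\end{itemize}

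The main obstacle is Step 1: justifying that $Z_n=\Op(1)$ and that the Hessian of the local truth converges, uniformly along the triangular array. I would handle this by noting that the consistency and linearization in \Cref{prop:local-misalignment} already give tightness. Convergence in expectation is not claimed, so no uniform integrability is needed.
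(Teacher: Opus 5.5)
Your proposal is correct and follows the paper's own argument. The paper's proof likewise combines a second-order expansion of $R_{\human}^{(n_{\human})}$ at the local truth (zero gradient, Hessian tending to $L$) with the limit law of \Cref{prop:local-misalignment} and continuous mapping. It then uses $\operatorname{tr}(L\Sigma_{\lambda,s,0})=\operatorname{tr}(B_\lambda^2)$, $0\preceq B_\lambda\preceq P$, and the effective-dimension calculation for the trace bounds, and reads the endpoints off \Cref{prop:local-misalignment} and \Cref{lem:comparison-laplacian}\eqref{eq:laplacian-risk-expansion}. Your version supplies more of the details, including the correct remark that no uniform integrability is needed because only the mean of the limit variable is claimed. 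The only cosmetic fix is to write the quadratic term with $L^{(n_{\human})}$ first and then replace it by $L$ at an $\op(1)$ cost.
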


\begin{proof}[Proof of \Cref{cor:finite-weight-risk}]
    By \Cref{prop:local-misalignment} and a second-order expansion of $R_{\human}^{(n_{\human})}$ at $s_{\human}^{(n_{\human})}$, whose Hessian converges to $L$, the scaled excess risk converges to the stated quadratic form.
    For $\tau=0$, $\tr(L\Sigma_{\lambda,s,0})=\tr(B_\lambda^2)$, while the effective-dimension calculation gives $\tr(B_\lambda)=\df(\lambda)$ and $0\preceq B_\lambda\preceq P$.
    The endpoint claims follow from \Cref{prop:local-misalignment} and \Cref{lem:comparison-laplacian}.
\end{proof}

\subsection[Proof of Theorem \ref{thm:gacv}]{Proof of \Cref{thm:gacv}}\label{app:gacv-proof}

\begin{assumption}[GACV regularity]
\label[assumption]{ass:weighted-local-regularity}
    Condition on the complete LLM comparison sample and let $n_{\human}\to\infty$.
    For each $\lambda\in\overline\Lambda$, the corresponding conditional population criterion has a unique interior minimizer $\zeta_\lambda^\star$ with nonsingular Hessian and is well separated: for some neighborhood $\mathcal U_\lambda$ of $\zeta_\lambda^\star$ and $\epsilon_\lambda>0$,
    \[
    \inf_{\zeta\notin\mathcal U_\lambda}
    q_\lambda^\star(\zeta)
    \ge
    q_\lambda^\star(\zeta_\lambda^\star)+\epsilon_\lambda,
    \]
    where $q_\lambda^\star$ is the conditional expectation of the human part of the criterion plus its fixed LLM part.
    The full-sample fit converges in probability to $\zeta_\lambda^\star$.
    Here the score space is $\mathcal M_r$ for finite $\lambda$, the centered score space for $\lambda=0$, and $\mathcal C_M(\widehat S_\infty)$ for $\lambda=\infty$.
\end{assumption}

\begin{lemma}[Verification of GACV regularity]
\label[lemma]{lem:gacv-regularity}
Suppose the conditions of \Cref{lem:weighted-consistency} hold, $\cG_{\human}^\infty$ is connected, and $\overline\Lambda$ is fixed and finite.
As $n_{\llm}\to\infty$, \Cref{ass:weighted-local-regularity} holds with probability tending to one over the LLM sample, simultaneously for $\lambda\in\overline\Lambda$.
\end{lemma}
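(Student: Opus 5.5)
We verify \Cref{ass:weighted-local-regularity} one candidate at a time and then intersect the resulting events. This suffices because $\overline\Lambda$ is finite, so a finite intersection of events each with probability tending to one still has probability tending to one.

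Conditionally on the LLM sample, write the conditional population criterion as
\[
q_\lambda^\star(\zeta)=R_{\human}\{s(\zeta)\}+\lambda\,\ell_{\llm}\{S(\zeta),b(\zeta)\}.
\]
Here $R_{\human}$ is the human population risk \eqref{eq:population-human-risk}, and the LLM part is the realized empirical loss.

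\textbf{Endpoints.}
\begin{enumerate}[(i)]
\item For $\lambda=0$, the criterion is $R_{\human}$ on the centered score space and involves no LLM data. Connectivity of $\cG_{\human}^\infty$ makes $R_{\human}$ strictly convex and coercive there. Its unique minimizer $s_{\human}$ has Hessian $L$, which is positive definite on centered vectors by \Cref{lem:comparison-laplacian}. Well separation follows from strict convexity and coercivity. Consistency of $\widehat s_0$ is \Cref{lem:fixed-space-existence} with $B$ spanning the centered space.
\item For $\lambda=\infty$, the space is $\col(\widehat M_\infty)$ with $\widehat M_\infty:=M_{\widehat\theta_\infty}$. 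By \Cref{prop:stage-one-normality}, $\widehat M_\infty$ exists with probability tending to one and $P_{\col(\widehat M_\infty)}\to P_{\col(M)}$. Since $\rank(D_{\human}^\infty M)=d$ is an open condition, \eqref{eq:fixed-space-identification} holds for $\widehat M_\infty$ with probability tending to one. Then $c\mapsto R_{\human}(\widehat M_\infty c)$ is strictly convex and coercive with a positive definite Hessian $\widehat M_\infty^\top L\widehat M_\infty$. \Cref{lem:fixed-space-existence} supplies conditional consistency as $n_{\human}\to\infty$, because its constants depend only on $\varepsilon$, $\rho$ and the identifiability constant $\kappa$, and $\kappa$ stays bounded away from zero near $M$.
\end{enumerate}

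\textbf{Finite $\lambda\in\Lambda$.} This is the main obstacle: the criterion is nonconvex on the non-closed manifold $\mathcal M_r$, and its LLM part is random. The plan is a perturbation argument around the deterministic limit
\[
\bar q_\lambda(\zeta):=R_{\human}\{s(\zeta)\}+\lambda R_{\llm}\{S(\zeta),b(\zeta)\}.
\]
\begin{enumerate}[(i)]
\item By \Cref{lem:weighted-consistency}, $\bar q_\lambda$ has the unique minimizer $\zeta_0$ and is strictly separated from its minimum near the boundary of $\mathcal M_r$. By \Cref{lem:h-lambda-pd}, its Hessian $\mathcal H_\lambda$ is positive definite.
\item On bounded predictor sets, $\ell_{\llm}\to R_{\llm}$ uniformly with all derivatives up to second order, in probability as $n_{\llm}\to\infty$. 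Hence $q_\lambda^\star\to\bar q_\lambda$ in $C^2$ on a compact neighborhood of $\zeta_0$.
\item Outside that neighborhood, including near rank drop or divergence, the lower bounds follow from the sublevel-compactness argument in the proof of \Cref{lem:weighted-consistency}. That argument gives coercivity in $(S,b)$ on every relevant sublevel once every LLM cell shows both outcomes, and separation near the boundary.
\item Since $\mathcal H_\lambda\succ0$, the implicit function theorem applied to $\nabla q_\lambda^\star=0$ gives, with probability tending to one, a unique local minimizer $\zeta_\lambda^\star$ near $\zeta_0$ with a nonsingular Hessian.
\item The global separation gap $\epsilon_\lambda$ is inherited from $\bar q_\lambda$, taking half of its gap. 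Global uniqueness follows by combining this separation with local strict convexity.
\end{enumerate}

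\textbf{Consistency of the full-sample fit.} Convergence of the full-sample fit to $\zeta_\lambda^\star$ as $n_{\human}\to\infty$, conditionally on the LLM sample, follows from the standard argmin argument. The human part converges uniformly to $R_{\human}$ on compacts. The human loss is coercive in the calibration coordinates on the neighborhood where $D_{\human}^\infty$ is uniformly injective on $\mathcal C_M(S)$. Both facts are as in \Cref{lem:weighted-consistency}, with the LLM term now held fixed. Together with the well-separated minimizer from the previous step, these give the required convergence.
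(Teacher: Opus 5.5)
Your proposal is correct and follows essentially the same route as the paper. The paper handles $\lambda=0$ by connectivity, $\lambda=\infty$ by \Cref{prop:stage-one-normality} plus the fixed-space argument, and finite $\lambda$ by uniform $C^2$ convergence of $\ell_{\llm}$ to $R_{\llm}$ combined with \Cref{lem:weighted-consistency,lem:h-lambda-pd}, with simultaneity from finiteness of $\overline\Lambda$; your version spells out the separation and argmin steps that the paper leaves implicit. One small slip: at $\lambda=\infty$ the Hessian of $c\mapsto R_{\human}(\widehat M_\infty c)$ at its minimizer has logistic weights evaluated at $\widehat M_\infty c^\star$, not at $s_{\human}$, so it is not literally $\widehat M_\infty^\top L\widehat M_\infty$ with $L$ from \eqref{eq:comparison-laplacian}; it is still positive definite because those weights are positive and the connected support identifies $\col(\widehat M_\infty)$.
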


\begin{proof}[Proof of \Cref{lem:gacv-regularity}]
For finite $\lambda>0$, uniform $C^2$ convergence of $\ell_{\llm}$ to $R_{\llm}$ together with \Cref{lem:weighted-consistency,lem:h-lambda-pd} gives a unique separated regular conditional minimizer, and finite-support M-estimation gives conditional consistency as $n_{\human}\to\infty$.
For $\lambda=0$, connectivity gives the same properties on the centered score space.
For $\lambda=\infty$, \Cref{prop:stage-one-normality} gives $\widehat M_\infty\pto M$, hence $\rank(D_{\human}^\infty\widehat M_\infty)=d$ with probability tending to one, and the fixed-space argument applies.
Finiteness of $\overline\Lambda$ gives simultaneity.
\end{proof}

\begin{proof}[Proof of \Cref{thm:gacv}]
    Condition on the complete LLM comparison sample throughout.
    For finite $\lambda\in\Lambda$, use \Cref{lem:factor-reduced-equivalence} to identify each regular factorized full-sample fit with its score-level minimizer in $\mathcal M_r$; for $\lambda=0$ use the centered score space, and for $\lambda=\infty$ use $\mathcal C_M(\widehat S_\infty)$.
    Let $\zeta$ denote smooth local coordinates on the neighborhood $\mathcal U_\lambda$ in \Cref{ass:weighted-local-regularity}.
    By that assumption and finiteness of $\overline\Lambda$, with conditional probability tending to one all full-sample fits lie in these neighborhoods, their Hessians are uniformly nonsingular, and the required derivatives are uniformly bounded.
    
    For an individual human comparison $t$, let $(I_t,J_t)$ denote its compared pair and let $Z_t\in\{0,1\}$ indicate preference for item $I_t$.
    Its negative log-likelihood contribution is
    \[
    h_t(s)
    =
    \log\{1+\exp(s_{I_t}-s_{J_t})\}
    -
    Z_t(s_{I_t}-s_{J_t}).
    \]
    In local coordinates, write $h_t(\zeta):=h_t\{s(\zeta)\}$.
    For each candidate $\lambda$, the full-sample criterion can be written locally as
    \[
    q_\lambda(\zeta)
    =
    \frac{1}{n_{\human}}
    \sum_{t=1}^{n_{\human}}
    h_t(\zeta)
    +
    a_\lambda(\zeta),
    \]
    where $a_\lambda(\zeta)=\lambda\ell_{\llm}\{S(\zeta),b(\zeta)\}$ for finite $\lambda$, $a_0=0$, and at $\lambda=\infty$ the LLM-only fit is absorbed into the fixed reduced score space.
    In every case, $a_\lambda$ is fixed when a human comparison is deleted.
    
    Let $\widehat\zeta_\lambda$ minimize $q_\lambda$ and define
    \[
    g_{t,\lambda}
    :=
    \nabla h_t(\widehat\zeta_\lambda),
    \qquad
    \overline g_\lambda
    :=
    \frac{1}{n_{\human}}
    \sum_{t=1}^{n_{\human}}
    g_{t,\lambda},
    \qquad
    \widehat H_\lambda
    :=
    \nabla^2 q_\lambda(\widehat\zeta_\lambda).
    \]
    The first-order condition gives
    \[
    \overline g_\lambda
    +
    \nabla a_\lambda(\widehat\zeta_\lambda)
    =
    0.
    \]
    
    After deleting human comparison $t$, we renormalize the human loss by $n_{\human}-1$ and obtain
    \[
    q_{\lambda,-t}(\zeta)
    =
    \frac{1}{n_{\human}-1}
    \sum_{u\neq t}
    h_u(\zeta)
    +
    a_\lambda(\zeta).
    \]
    Evaluating its score at the full-sample estimator and using the preceding first-order condition gives
    \begin{align*}
    \nabla q_{\lambda,-t}(\widehat\zeta_\lambda)
    &=
    \frac{n_{\human}\overline g_\lambda-g_{t,\lambda}}{n_{\human}-1}
    +
    \nabla a_\lambda(\widehat\zeta_\lambda) \\
    &=
    -\frac{g_{t,\lambda}-\overline g_\lambda}{n_{\human}-1}.
    \end{align*}
    The individual score vectors are uniformly bounded because the comparison support is finite and the chart derivatives are bounded on the fixed local neighborhoods above, so
    \[
    \max_{\substack{1\leq t\leq n_{\human}\\ \lambda\in\overline\Lambda}}
    \left\|
    \nabla q_{\lambda,-t}(\widehat\zeta_\lambda)
    \right\|_2
    =
    \Op(n_{\human}^{-1}).
    \]
    Deleting one human comparison perturbs the criterion by $\Op(n_{\human}^{-1})$ in $C^2$ norm on these neighborhoods.
    By the finite-support uniform law of large numbers and the separation in \Cref{ass:weighted-local-regularity}, with conditional probability tending to one every leave-one-out global minimizer lies in $\mathcal U_\lambda$, uniformly over $t$ and $\lambda\in\overline\Lambda$.
    The Hessian remains nonsingular there, and the implicit function theorem gives a unique stationary point near $\widehat\zeta_\lambda$; hence the leave-one-out global minimizer $\widehat\zeta_\lambda^{(-t)}$ is this point.
    A mean-value expansion of the leave-one-out first-order condition, together with uniform Hessian nonsingularity, therefore first gives
    \[
    \max_{\substack{1\leq t\leq n_{\human}\\ \lambda\in\overline\Lambda}}
    \left\|
    \widehat\zeta_\lambda^{(-t)}
    -
    \widehat\zeta_\lambda
    \right\|_2
    =
    \Op(n_{\human}^{-1}).
    \]
    At $\widehat\zeta_\lambda$, deleting one human observation changes the Hessian by $\Op(n_{\human}^{-1})$ uniformly in $t$ and $\lambda$.
    The preceding perturbation rate and the uniform third-derivative bound further imply that the leave-one-out Hessian at any intermediate point between $\widehat\zeta_\lambda^{(-t)}$ and $\widehat\zeta_\lambda$ equals $\widehat H_\lambda+\Op(n_{\human}^{-1})$ uniformly.
    Consequently, expanding the leave-one-out first-order condition and using the inverse perturbation formula gives
    \begin{equation}
    \widehat\zeta_\lambda^{(-t)}
    -
    \widehat\zeta_\lambda
    =
    \frac{1}{n_{\human}-1}
    \widehat H_\lambda^{-1}
    (g_{t,\lambda}-\overline g_\lambda)
    +
    \Op(n_{\human}^{-2}),
    \label{eq:gacv-delete-one-expansion}
    \end{equation}
    uniformly over $t$ and $\lambda\in\overline\Lambda$.

    Because the Hessians of the individual human losses are uniformly bounded on these neighborhoods, \eqref{eq:gacv-delete-one-expansion} implies that a first-order Taylor expansion of the deleted human loss has quadratic remainder $\Op(n_{\human}^{-2})$ uniformly in $t$ and $\lambda$.
    Thus
    \begin{align*}
    h_t(\widehat\zeta_\lambda^{(-t)})
    &=
    h_t(\widehat\zeta_\lambda)
    +
    g_{t,\lambda}^\top
    (\widehat\zeta_\lambda^{(-t)}-\widehat\zeta_\lambda)
    +
    \Op(n_{\human}^{-2}) \\
    &=
    h_t(\widehat\zeta_\lambda)
    +
    \frac{1}{n_{\human}-1}
    g_{t,\lambda}^\top
    \widehat H_\lambda^{-1}
    (g_{t,\lambda}-\overline g_\lambda)
    +
    \Op(n_{\human}^{-2}),
    \end{align*}
    uniformly over $t$ and $\lambda$.
    Averaging over $t$ gives
    \begin{align*}
    \operatorname{CV}_{\mathrm{loo}}(\lambda)
    &=
    \ell_{\human}(\widehat s_\lambda)
    +
    \frac{1}{n_{\human}(n_{\human}-1)}
    \sum_{t=1}^{n_{\human}}
    g_{t,\lambda}^\top
    \widehat H_\lambda^{-1}
    (g_{t,\lambda}-\overline g_\lambda)
    +
    \Op(n_{\human}^{-2}) \\
    &=
    \ell_{\human}(\widehat s_\lambda)
    +
    \frac{1}{n_{\human}-1}
    \operatorname{tr}
    \left(
    \widehat H_\lambda^{-1}
    \widehat J_\lambda
    \right)
    +
    \Op(n_{\human}^{-2}),
    \end{align*}
    because $\sum_t(g_{t,\lambda}-\overline g_\lambda)=0$.
    This proves \eqref{eq:gacv-loo-equivalence}.
    
    Finally, for finite $\lambda$, coordinate invariance of \eqref{eq:gacv} follows from \Cref{lem:overparam-invariance}, which also permits evaluation in the over-parameterized coordinates \eqref{eq:overparam}.
    For $\lambda\in\{0,\infty\}$, the same trace identity follows directly under any smooth nonsingular reparameterization of the corresponding score space.
\end{proof}

\subsection{Additional results for adaptive calibration}
\label{app:gacv-additional}
\begin{corollary}[Near-LOO optimality of GACV]
\label[corollary]{cor:gacv-loo-optimality}
    Under the conditions of \Cref{thm:gacv}, conditionally on the LLM comparisons,
    \begin{equation}
    \operatorname{CV}_{\mathrm{loo}}(\widehat\lambda)
    \leq
    \min_{\lambda\in\overline\Lambda}
    \operatorname{CV}_{\mathrm{loo}}(\lambda)
    +
    \Op(n_{\human}^{-2}).
    \label{eq:gacv-loo-optimality}
    \end{equation}
\end{corollary}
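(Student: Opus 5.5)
The corollary follows from \Cref{thm:gacv} by a standard sandwich argument that compares the GACV minimizer with the leave-one-out minimizer through the uniform approximation \eqref{eq:gacv-loo-equivalence}. Throughout, I condition on the LLM comparisons. I also work on the event, of conditional probability tending to one, on which the conclusions of \Cref{thm:gacv} hold simultaneously for all $\lambda\in\overline\Lambda$. This simultaneity is available because $\overline\Lambda$ is fixed and finite, and the maximum in \eqref{eq:gacv-loo-equivalence} is already taken over $\overline\Lambda$.

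Define
\[
E_n:=\max_{\lambda\in\overline\Lambda}\bigl|\operatorname{GACV}(\lambda)-\operatorname{CV}_{\mathrm{loo}}(\lambda)\bigr|,
\]
which is $\Op(n_{\human}^{-2})$ by \Cref{thm:gacv}. Let $\lambda^\circ\in\argmin_{\lambda\in\overline\Lambda}\operatorname{CV}_{\mathrm{loo}}(\lambda)$; the minimizer exists because $\overline\Lambda$ is finite. Since $\widehat\lambda$ minimizes GACV over $\overline\Lambda$, we have $\operatorname{GACV}(\widehat\lambda)\le\operatorname{GACV}(\lambda^\circ)$. Combining this with two applications of the bound $E_n$ gives
\[
\operatorname{CV}_{\mathrm{loo}}(\widehat\lambda)
\le\operatorname{GACV}(\widehat\lambda)+E_n
\le\operatorname{GACV}(\lambda^\circ)+E_n
\le\operatorname{CV}_{\mathrm{loo}}(\lambda^\circ)+2E_n .
\]
Since $2E_n=\Op(n_{\human}^{-2})$, this is \eqref{eq:gacv-loo-optimality}.

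The only point that needs care is that $\widehat\lambda$ is data-dependent, so a pointwise-in-$\lambda$ approximation would not suffice. This is resolved because \Cref{thm:gacv} controls the maximum over $\overline\Lambda$; the random index $\widehat\lambda$ lies in $\overline\Lambda$, so the bound $E_n$ applies to it. Ties in the argmin do not matter, since the displayed chain holds for any measurable selection of $\widehat\lambda$. Finally, the complement of the good event has vanishing conditional probability, so it does not affect the $\Op$ statement.
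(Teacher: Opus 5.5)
Your proof is correct and is the paper's argument: you define the uniform discrepancy $E_n$ over the finite grid $\overline\Lambda$, which is $\Op(n_{\human}^{-2})$ by \Cref{thm:gacv}, and chain the GACV-optimality of $\widehat\lambda$ with two applications of this bound to get $\operatorname{CV}_{\mathrm{loo}}(\widehat\lambda)\le\min_{\lambda\in\overline\Lambda}\operatorname{CV}_{\mathrm{loo}}(\lambda)+2E_n$. Your remark that the maximum over $\overline\Lambda$ is what covers the data-dependent $\widehat\lambda$ is the point the paper uses implicitly.
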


\begin{proof}[Proof of \Cref{cor:gacv-loo-optimality}]
    Let
    \[
    \Delta_n
    :=
    \max_{\lambda\in\overline\Lambda}
    \left|
    \operatorname{GACV}(\lambda)
    -
    \operatorname{CV}_{\mathrm{loo}}(\lambda)
    \right|
    =
    \Op(n_{\human}^{-2})
    \]
    by \Cref{thm:gacv}.
    If $\lambda_{\mathrm{loo}}\in\argmin_{\lambda\in\overline\Lambda}\operatorname{CV}_{\mathrm{loo}}(\lambda)$, then optimality of $\widehat\lambda$ for GACV gives
    \[
    \operatorname{CV}_{\mathrm{loo}}(\widehat\lambda)
    \leq
    \operatorname{CV}_{\mathrm{loo}}(\lambda_{\mathrm{loo}})
    +
    2\Delta_n,
    \]
    which proves \eqref{eq:gacv-loo-optimality}.
\end{proof}

For each fixed candidate $\lambda$, conditionally on the LLM sample,
$\EE\{\operatorname{CV}_{\mathrm{loo}}(\lambda)\mid\mathcal D_{\llm}\}
=
\EE[R_{\human}(\widehat s_{\lambda,n_{\human}-1})\mid\mathcal D_{\llm}]$,
where $\widehat s_{\lambda,m}$ denotes the fit using $m$ independent human comparisons.
Thus GACV is a second-order approximation to an unbiased estimator of candidate predictive human risk, and GACV achieves near-LOO selection over the fixed grid without a human validation split or a rate restriction between $n_{\human}$ and $n_{\llm}$; it can likewise select $(r,\lambda)$ jointly as in \Cref{app:rank-selection}.

\paragraph{Calibration-space test and endpoint comparison.}
The restriction \eqref{eq:human-subspace-model} can be tested using the two endpoint fits already computed for GACV.

\begin{proposition}[Calibration-space likelihood-ratio test]
\label[proposition]{prop:calibration-test}
    Suppose that $N$, $K$, and $r$ are fixed, that $\cG_{\human}^\infty$ is connected, that \Cref{prop:stage-one-normality} applies, and that $n_{\human},n_{\llm}\to\infty$ with $n_{\human}/n_{\llm}\to0$.
    Define
    \begin{equation}
    T_{n_{\human}}
    :=
    2n_{\human}
    \left\{
    \ell_{\human}(\widehat s_\infty)
    -
    \ell_{\human}(\widehat s_0)
    \right\}.
    \label{eq:calibration-lrt}
    \end{equation}
    Under \eqref{eq:human-subspace-model},
    \[
    T_{n_{\human}}
    \dto
    \chi^2_{N-1-d}.
    \]
    Along \eqref{eq:local-misalignment},
    \[
    T_{n_{\human}}
    \dto
    \chi^2_{N-1-d}(2\delta_M),
    \qquad
    2\delta_M
    =
    \operatorname{dist}_L^2\{h,\col(M)\}.
    \]
    The AIC comparison between the anchored and unrestricted endpoints selects anchoring exactly when
    \[
    T_{n_{\human}}<2(N-1-d),
    \]
    whose local first-order expected boundary is $\delta_M<(N-1-d)/2$, matching \Cref{thm:population-risk-tradeoff}.
\end{proposition}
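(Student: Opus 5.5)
The plan is to reduce the statistic to a nested likelihood-ratio test on fixed score spaces, and then to control the error from estimating the calibration space. Let $\widetilde s_\infty$ be the oracle fit on $\col(M)$, with $M$ at its population value, so that
\[
T_{n_{\human}}=\widetilde T_{n_{\human}}+2n_{\human}\{\ell_{\human}(\widehat s_\infty)-\ell_{\human}(\widetilde s_\infty)\},
\qquad
\widetilde T_{n_{\human}}:=2n_{\human}\{\ell_{\human}(\widetilde s_\infty)-\ell_{\human}(\widehat s_0)\}.
\]
The first step is to show that the correction term is $\op(1)$. By \Cref{prop:stage-one-normality}, $\|P_{\col(\widehat M_\infty)}-P_{\col(M)}\|_{\mathrm F}=\Op(n_{\llm}^{-1/2})$. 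Both restricted fits lie within $\Op(n_{\human}^{-1/2})$ of the truth, and the empirical human gradient there is $\Op(n_{\human}^{-1/2})$. Choose $B(\widehat M)$ as a smooth local basis. A Taylor expansion of the profiled restricted minimum value $\min_c\ell_{\human}(Bc)$ as a function of the basis then shows that its change is $\Op(n_{\human}^{-1/2}n_{\llm}^{-1/2}+n_{\llm}^{-1})$. Multiplying by $n_{\human}$ gives $\Op(\sqrt{n_{\human}/n_{\llm}}+n_{\human}/n_{\llm})=\op(1)$. This mirrors the argument of \Cref{cor:staged-oracle-equivalence}. I expect this step to be the main obstacle. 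The risk-level bound in that corollary is not enough on its own: the empirical gradient at the restricted optimum must be handled carefully. Only its component in the direction of basis change matters, and that component is $\Op(n_{\human}^{-1/2})$.

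The second step treats $\widetilde T_{n_{\human}}$, which is a classical nested LRT. The unrestricted model is the centered score space, of dimension $N-1$ and identified by connectivity. The restricted model is $\col(M)$, of dimension $d$. Apply the quadratic expansion of \Cref{lem:fixed-space-risk} to the empirical loss. Write $Z:=\sqrt{n_{\human}}\nabla\ell_{\human}(s_{\human})\dto\cN(0,L)$ on the centered subspace, using the information identity in \Cref{lem:comparison-laplacian}\eqref{eq:laplacian-trace}. The expansion gives
\[
2n_{\human}\{\ell_{\human}(s_{\human})-\ell_{\human}(\widehat s_0)\}=Z^\top L^+Z+\op(1),
\]
and the analogous restricted term equals $Z^\top M(M^\top LM)^{-1}M^\top Z+\op(1)$. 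The difference is
\[
Z^\top\{L^+-L^{+/2}P_{L^{1/2}M}L^{+/2}\}Z=\|(P-P_{L^{1/2}M})L^{+/2}Z\|_2^2,
\]
by \Cref{lem:comparison-laplacian}\eqref{eq:laplacian-projection}. Since $L^{+/2}Z\dto\cN(0,P)$ and $P-P_{L^{1/2}M}$ is a projection of rank $N-1-d$, the limit is $\chi^2_{N-1-d}$.

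The third step handles local alternatives along \eqref{eq:local-misalignment}. Expand the empirical loss at the projected point $Mc_{\human}$ rather than at $s_{\human}$; this point lies in both models. As in the proof of \Cref{prop:local-misalignment}, the scaled gradient there converges to $\cN(-Lh,L)$, and the Hessian still converges to $L$ because the probabilities stay interior. The same quadratic-form algebra then gives a noncentral $\chi^2_{N-1-d}$ limit with noncentrality
\[
\|(P-P_{L^{1/2}M})L^{1/2}h\|_2^2=\operatorname{dist}_L^2\{h,\col(M)\}.
\]
By \Cref{lem:comparison-laplacian}\eqref{eq:laplacian-risk-expansion}, $n_{\human}\Delta_M\to\tfrac12\operatorname{dist}_L^2\{h,\col(M)\}=\delta_M$, so the noncentrality equals $2\delta_M$. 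The step-one bound is unchanged under local alternatives, because the LLM side is unaffected and the human fits stay $\sqrt{n_{\human}}$-local.

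Finally, the AIC statement is algebraic. AIC equals $2n_{\human}\ell_{\human}+2(\text{dimension})$, so it prefers the anchored fit exactly when $T_{n_{\human}}<2(N-1-d)$. The limiting mean of $T_{n_{\human}}$ is $N-1-d+2\delta_M$. Setting the expected statistic below the threshold gives $\delta_M<(N-1-d)/2$, which matches \Cref{thm:population-risk-tradeoff}.
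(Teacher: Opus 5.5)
Your proposal is correct and follows the paper's proof. Both arguments replace $\col(\widehat M_\infty)$ by $\col(M)$ at $\op(1)$ cost using the stage-one subspace rate together with $n_{\human}/n_{\llm}\to0$, apply Wilks to the nested $d$-versus-$(N-1)$ centered BTL models, obtain the noncentrality $\operatorname{dist}_L^2\{h,\col(M)\}=2\delta_M$ from \Cref{lem:comparison-laplacian}, and finish with the AIC algebra; your handling of the empirical gradient at the restricted fit (rather than the population-risk bound in \Cref{cor:staged-oracle-equivalence}) and your explicit projection identity via \Cref{lem:comparison-laplacian}\eqref{eq:laplacian-projection} just fill in details the paper leaves implicit.
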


\begin{proof}[Proof of \Cref{prop:calibration-test}]
    Write $\widehat M_\infty:=M_{\widehat\theta_\infty}$.
    By \Cref{prop:stage-one-normality}, $\|P_{\col(\widehat M_\infty)}-P_{\col(M)}\|_{\mathrm F}=\Op(n_{\llm}^{-1/2})$, so $n_{\human}/n_{\llm}\to0$ and the argument of \Cref{cor:staged-oracle-equivalence} allow $\col(\widehat M_\infty)$ to be replaced by $\col(M)$ with $\op(1)$ change in $T_{n_{\human}}$.
    The resulting statistic is the likelihood-ratio test of the $d$-dimensional restriction $s\in\col(M)$ within the $(N-1)$-dimensional centered human BTL model, so Wilks's theorem gives the null limit.
    Under \eqref{eq:local-misalignment}, \Cref{lem:comparison-laplacian}\eqref{eq:laplacian-risk-expansion} gives noncentrality $\operatorname{dist}_L^2\{h,\col(M)\}=2\delta_M$.
    Finally, the anchored-minus-unrestricted AIC difference is $T_{n_{\human}}-2(N-1-d)$, and the mean of the local limiting statistic is $N-1-d+2\delta_M$.
\end{proof}

\clearpage
\section{Experimental and Implementation Details}
\label{app:experiments}

\subsection{DIAL Estimation Procedure}
\label{app:dial-algorithm}

\Cref{alg:dial-anc,alg:dial-ada} summarize the estimation procedures used throughout the experiments.
The main reported DIAL estimator uses the consensus calibration basis
$M_\theta=\mu$; replacing it by $M_\theta=W_\theta=[\mu,V]$ gives
\methDIALW{} without changing the fitting or tuning logic.
DIAL-Anc first fits the structured LLM model and then calibrates the resulting representation to the human comparisons, producing the staged LLM-anchored endpoint
$(\widehat c_\infty,\widehat\theta_\infty,\widehat s_\infty)$.
DIAL-Ada uses this endpoint to initialize a descending path over finite calibration weights and selects among the admissible candidates by GACV.
Finite candidate fits that are not numerically resolved within the optimization box, or that fail the numerical regularity checks in
\Cref{app:lambda-selection}, are excluded from GACV selection.
The staged endpoint and the human-only endpoint are included whenever they satisfy the corresponding endpoint regularity checks, with the staged endpoint providing the computational fallback whenever admissible.
Replications with no admissible candidate are reported as failures.
For GACV, $\widehat H_\lambda^{-1}$ is evaluated in a nonsingular reduced chart; equivalently, the Moore--Penrose inverse
$\widehat H_{\lambda,\varphi}^{+}$
may be used in the over-parameterized coordinates of
\eqref{eq:overparam}, by \Cref{lem:overparam-invariance}.
Diagnostics for position bias are always reported from the LLM-only component
$\widehat\theta_\infty$, because finite-$\lambda$ joint fitting allows the human likelihood to perturb the LLM parameters under misspecification.
For numerical robustness, optimization is performed over a large box with coordinate bounds $10$ in the centering-reduced chart.

\begin{algorithm}[!ht]
\caption{DIAL-Anc estimation}
\label{alg:dial-anc}
\begin{algorithmic}[1]

\Require
LLM cell counts $\{n_{kij}^{(a)},Y_{kij}^{(a)}\}$;
human comparisons $\{n_{\human ij},Y_{\human ij}\}$;
rank $r$

\Ensure
DIAL-Anc score $\widehat s_{\mathrm{Anc}}=\widehat s_\infty$;

\State Set the calibration basis
\[
M_\theta\gets\mu,
\qquad
\text{or }
M_\theta\gets W_\theta=[\mu,V]
\text{ for full-space calibration.}
\]
\label{line:dial-anc-basis}

\State Fit the structured order-effect LLM model
\[
\widehat\theta_\infty
\in
\argmin_{\theta\in\Theta_r}
\ell_{\llm}(S_\theta,b_\theta).
\]
\label{line:dial-anc-llm}

\State Fit the staged human calibration
\[
\widehat c_\infty
\in
\argmin_c
\ell_{\human}
\!\left(M_{\widehat\theta_\infty}c\right).
\]
\label{line:dial-anc-cal}

\State Set
\[
\widehat s_\infty
=
\widehat s_{\mathrm{Anc}}
=
M_{\widehat\theta_\infty}\widehat c_\infty.
\]
\label{line:dial-anc-score}
\Comment{Consensus calibration when $M_\theta=\mu$}

\end{algorithmic}
\end{algorithm}

\begin{algorithm}[!ht]
\caption{DIAL-Ada estimation}
\label{alg:dial-ada}
\begin{algorithmic}[1]

\Require
LLM cell counts $\{n_{kij}^{(a)},Y_{kij}^{(a)}\}$;
human comparisons $\{n_{\human ij},Y_{\human ij}\}$;
rank $r$;
candidate weights $\overline\Lambda$ containing $0$ and $\infty$

\Ensure
DIAL-Ada score $\widehat s_{\mathrm{Ada}}$
and selected weight $\widehat\lambda$

\State Initialize the admissible set
$\mathcal R\gets\varnothing$.

\If{the staged endpoint
$(\widehat c_\infty,\widehat\theta_\infty,\widehat s_\infty)$
from \Cref{alg:dial-anc},
Lines~\ref{line:dial-anc-llm}--\ref{line:dial-anc-score},
passes the endpoint regularity check}
    \State $\mathcal R\gets\mathcal R\cup\{\infty\}$.
\EndIf

\State Fit the centered human-only BTL estimator $\widehat s_0$.

\If{$\widehat s_0$ passes the endpoint regularity check}
    \State $\mathcal R\gets\mathcal R\cup\{0\}$.
\EndIf

\State Order
$\Lambda=\overline\Lambda\setminus\{0,\infty\}$
from largest to smallest and initialize the path at
$(\widehat c_\infty,\widehat\theta_\infty)$
from \Cref{alg:dial-anc},
Lines~\ref{line:dial-anc-llm}--\ref{line:dial-anc-cal}.

\For{$\lambda\in\Lambda$ in descending order}
    \State Fit
    \[
    (\widehat c_\lambda,\widehat\theta_\lambda)
    \in
    \argmin_{c,\theta\in\Theta_r}
    \left\{
    \ell_{\human}(M_\theta c)
    +
    \lambda\ell_{\llm}(S_\theta,b_\theta)
    \right\},
    \]
    using the most recent retained solution as a warm start.

    \State Set
    \[
    \widehat s_\lambda
    =
    M_{\widehat\theta_\lambda}\widehat c_\lambda.
    \]

    \If{the fit passes the numerical and regularity checks}
        \State $\mathcal R\gets\mathcal R\cup\{\lambda\}$.
        \State Retain the fit to warm-start the next weight.
    \EndIf
\EndFor

\If{$\mathcal R=\varnothing$}
    \State Report the replication as a numerical failure.
\Else
    \State Compute $\operatorname{GACV}(\lambda)$
    for all $\lambda\in\mathcal R$.

    \State Select
    \[
    \widehat\lambda
    \in
    \argmin_{\lambda\in\mathcal R}
    \operatorname{GACV}(\lambda),
    \qquad
    \widehat s_{\mathrm{Ada}}
    =
    \widehat s_{\widehat\lambda}.
    \]
\EndIf

\end{algorithmic}
\end{algorithm}

\subsection{Variance Estimation for the DIAL Intervals}
\label{app:hac}
All DIAL intervals in this paper are Wald intervals from the sandwich \eqref{eq:weighted-empirical-sandwich}, whose bread $\widehat{\mathcal H}_\lambda$ is the Hessian of the weighted criterion and whose meat $\widehat{\mathcal J}_\lambda$ estimates the variance of its score.
Dependence between comparisons changes only the meat, which we estimate in the heteroskedasticity- and autocorrelation-consistent (HAC) form of \citet{newey1987simple} and \citet{andrews1991heteroskedasticity}.
Write $\mathcal O$ for the set of all $n_{\human}+n_{\llm}$ comparisons and let $\widetilde\psi_v:=n_{\human}^{-1}(g_{\human t}-\overline g_{\human})$ if $v$ is the human comparison $t$ and $\widetilde\psi_v:=\lambda n_{\llm}^{-1}(g_{\llm u}-\overline g_{\llm})$ if $v$ is the LLM comparison $u$, the centered contribution of $v$ to the score of $q_\lambda$ at $\widehat\zeta_\lambda$.
For a symmetric matrix $\kappa=(\kappa_{vw})_{v,w\in\mathcal O}$ of kernel weights with unit diagonal, set
\begin{equation}
\widehat{\mathcal J}_\lambda^{\kappa}
:=
n_{\human}\sum_{v,w\in\mathcal O}\kappa_{vw}\,\widetilde\psi_v\widetilde\psi_w^\top,
\label{eq:hac-meat}
\end{equation}
and use it in place of $\widehat{\mathcal J}_\lambda$ in \eqref{eq:weighted-empirical-sandwich}; the result is positive semidefinite whenever $\kappa$ is.
The identity kernel returns \eqref{eq:weighted-empirical-sandwich}, which treats every comparison as independent.
\paragraph{The kernel we use.}
\label{app:clustered}
The binomial model \eqref{eq:binomial-model-llm} treats the comparisons in a cell $(k,i,j)$ as independent draws with a common probability, whereas the probability may vary with the pair beyond what the scores predict, so that repeated queries of one pair by one judge are correlated through a shared pair-level effect.
Our implementation therefore sets $\kappa_{vw}=1$ when $v$ and $w$ are LLM comparisons of the same judge on the same pair, in either display order, and $\kappa_{vw}=\ind\{v=w\}$ otherwise, the cluster-robust form of \citet{liang1986longitudinal} with judge-pair cells as clusters.
This gives
\begin{equation}
\widehat{\mathcal J}_\lambda^{\mathrm{cell}}
=
\widehat{\mathcal V}_{\human}+\lambda^2\frac{n_{\human}}{n_{\llm}}\widehat{\mathcal V}_{\llm}^{\mathrm{cell}},
\qquad
\widehat{\mathcal V}_{\llm}^{\mathrm{cell}}
:=
\frac{1}{n_{\llm}}\sum_{k=1}^{K}\sum_{i<j}G_{kij}G_{kij}^\top,
\label{eq:cell-clustered-meat}
\end{equation}
where $G_{kij}:=\sum_{a\in\{-1,1\}}\{n^{(a)}_{kij}\widehat p^{(a)}_{kij}-Y^{(a)}_{kij}\}\nabla_\zeta\eta^{(a)}_{kij}(\widehat\zeta_\lambda)-(n^{(1)}_{kij}+n^{(-1)}_{kij})\overline g_{\llm}$ is the summed centered score of the cell and $\widehat p^{(a)}_{kij}:=\sigma\{\eta^{(a)}_{kij}(\widehat\zeta_\lambda)\}$, so \eqref{eq:cell-clustered-meat} is computed from the cell counts alone.
It matches the dependence generated in the simulations of \Cref{subsec:synthetic}, where the human comparisons are independent, so only the LLM block is clustered, and it is used for every DIAL interval in the paper except the binomial comparison in \Cref{app:simulation}.
The judge diagnostic of \Cref{fig:real-diagnostics}(a) reports an LLM-side parameter and uses the same meat in its LLM-only specialization, $\widehat{\mathcal V}_{\llm}^{\mathrm{cell}}$ with the Hessian of $\ell_{\llm}/n_{\llm}$ as bread, which is the $\lambda\to\infty$ limit of \eqref{eq:cell-clustered-meat}.
On the real data a record additionally supplies the judgments of all judges and at most one human label on the same two responses; the kernel that joins all comparisons of a record adds a cross term between the human and LLM scores of each record, requires record identities that the aggregated cell counts discard, and is not used here.
Consistency of \eqref{eq:hac-meat} requires the dependence to lie within the support of $\kappa$ and the number of independent blocks to grow \citep{hansen2019asymptotic}; with $N$ and $K$ fixed the cell kernel averages over $K\binom N2$ cells, and under the pair-level effect it estimates the variance over both the binomial sampling and the draw of the cell effects, which is why it stays near nominal in \Cref{app:simulation} while the binomial version under-covers as $n_{\llm}$ grows.

\subsection{Hyperparameter Selection}
\label{app:hyperparameters}

\subsubsection[Rank r]{Rank $r$}
\label{app:rank-selection}

\Cref{thm:fixed-space-tradeoff} applies to any fixed centered subspace, in particular to the nested family $\col(W_{r'})$, $r'=0,1,\ldots$, obtained by retaining the leading $r'$ disagreement directions of the fitted score matrix.
Along this family the approximation error $\Delta_{W_{r'}}$ is nonincreasing in $r'$, while the first-order estimation term grows as $(r'+1)/(2n_{\human})$, so the population rule is to choose the $r'$ minimizing $\delta_{W_{r'}}+(r'+1)/2$.
We implement its empirical counterpart by evaluating \eqref{eq:gacv} over a grid of $(r,\lambda)$ pairs and selecting the pair with the smallest criterion value, which requires no separate rank-selection procedure.

\subsubsection[Weight lambda]{Weight $\lambda$}
\label{app:lambda-selection}

For each experiment, $\overline\Lambda$ is a finite logarithmic grid augmented with the endpoints $0$ and $\infty$, and $\widehat\lambda$ is chosen by \eqref{eq:gacv}.
\Cref{thm:gacv} treats this candidate set as fixed asymptotically; in the implementation we center the finite grid at the joint-likelihood weight $n_{\llm}/n_{\human}$, using seven log-spaced multiples between $0.01$ and $10$ (between $0.1$ and $10$ in the simulations), so that the ordinary joint MLE is always a candidate; the lower end matters on real data, where $n_{\llm}/n_{\human}$ is in the hundreds and the path leaves the staged fit only below $0.1\,n_{\llm}/n_{\human}$.
The regularity in \Cref{ass:weighted-local-regularity} is verified asymptotically by \Cref{lem:gacv-regularity}; in finite samples, a candidate fit can be nonfinite or have a singular criterion Hessian.
In particular, the human-only endpoint has no finite MLE when the human comparison graph is disconnected or separated, and the trace term in \eqref{eq:gacv} is then undefined.
We therefore drop a finite positive-$\lambda$ candidate from the GACV argmin when its fit does not converge, is not numerically resolved within the optimization box, has more zero Hessian eigenvalues than the $r(r+1)$ exact invariances of the computational parameterization in \Cref{app:gacv-reduced}, or has $\|\widehat s_\lambda\|_\infty>10$.
The human-only endpoint is retained only when its directed win graph is strongly connected \citep{ford1957solution}; at $\lambda=\infty$, the LLM-only fit is conditioned on, so only its calibration coordinates must be numerically resolved.
The implementation uses $\lambda=\infty$ as the computational fallback; if no candidate passes the guards, that replication is reported as a failure rather than averaged.
GACV at the endpoints uses the same formula with the chart of the corresponding estimator, the centered score space at $\lambda=0$ and the coordinate $c$ with $W$ fixed at the LLM-only fit at $\lambda=\infty$.
The path is fitted from the largest $\lambda$ downward, initializing the first fit at the staged estimator and each subsequent fit at the previous one, and every fit is finished by a quasi-Newton step on the reduced chart.

\clearpage
\section{Simulation Studies}\label{app:simulation}

This section gives the design and implementation details for \Cref{fig:simu-main} and three additional synthetic configurations.

\subsection{Design and implementation}
\label{app:simulation-design}

\paragraph{Data-generating process.}
We draw the consensus direction $\mu$ from a Gaussian distribution, center it, and scale it to $\|\mu\|_2^2=N$.
The columns of $V\in\RR^{N\times r}$ are Gaussian, projected orthogonally to $\one_N$ and $\mu$, and normalized so that $N^{-1}V^\top V=I_r$.
We take $\gamma_k=1+0.3z_k$ with Gaussian $z_k$ and rescale $\gamma$ to $\one_K^\top\gamma=K$, while $U$ is Gaussian with zero column means, giving $S=\gamma\mu^\top+UV^\top$ satisfying \Cref{cond:llm-structure}.
Position effects are $b_k=\beta_k\xi_k$, where $\beta_k\sim\mathrm{Uniform}(0.3,1.2)$ and $\Pr(\xi_k=1)=0.8$, with one judge fixed at $b_k=0.05$.
The main configuration uses $s_{\human}=\mu$, while the misaligned configuration in \Cref{fig:simu-misaligned} uses $s_{\human}=Wc_{\human}$ with $c_{\human}=(1,c_V)$ and $c_V\sim\cN(0,0.5^2I_r)$.
Each comparison samples its item pair uniformly, LLM comparisons additionally sample the judge uniformly, and the canonical item is displayed first with probability $0.75$.
For the bottom-row experiments, we replace the LLM linear predictor by $\eta_{kij}^{(a)}+\varepsilon_{kij}$ with $\varepsilon_{kij}\sim\cN(0,1)$ drawn once per judge-pair and shared across repeated queries and both display orders.
For \Cref{fig:simu-pos-heterogeneity}, we instead generate pair-varying position effects $b_{kij}=b_k+\delta_{kij}$ and use the linear predictor $s_{ki}-s_{kj}+ab_{kij}$, where $\delta_{kij}$ is drawn once per judge-pair, centered within judge over pairs, and has pairwise standard deviation $\sigma_{\mathrm{pos}}$; estimation retains the working model with a single $b_k$ per judge.

\paragraph{Configurations.}
\Cref{fig:simu-main} uses $(N,K,r)=(10,4,1)$, with $n_{\llm}=20{,}000$ and $n_{\human}\in\{100,200,400,800,1600\}$ in the top row, and $n_{\human}=800$ with $n_{\llm}\in\{400,800,\ldots,12{,}800\}$ in the bottom row.
\Cref{fig:simu-items20} uses $(N,K,r)=(20,6,2)$, with $n_{\llm}=60{,}000$ and $n_{\human}\in\{300,\ldots,4800\}$ in the top row, and $n_{\human}=2400$ with $n_{\llm}\in\{2400,\ldots,76{,}800\}$ in the bottom row.
\Cref{fig:simu-misaligned} uses the same budgets as \Cref{fig:simu-main} with the misaligned human target above.
\Cref{fig:simu-pos-heterogeneity} uses the main $(N,K,r)=(10,4,1)$ configuration with $s_{\human}=\mu$, $n_{\human}=800$, $n_{\llm}=20{,}000$, and $\sigma_{\mathrm{pos}}\in\{0,0.25,0.5,0.75,1\}$.
Every configuration uses $50$ independent replications.

\paragraph{Estimation and metrics.}
All structured estimators use the true working rank $r$ and are defined as in \Cref{sec:experiments}; finite-$\lambda$ paths are warm-started from the staged endpoint and use the GACV grid and regularity checks of \Cref{app:lambda-selection}.
We additionally evaluate the oracle candidate minimizing population human risk.
The reported metrics are excess human risk $R_{\human}(\widehat s)-R_{\human}(s_{\human})$, Kendall's $\tau$ between $\widehat s$ and $s_{\human}$, $\operatorname{RMSE}(\widehat b,b)$, and coverage of pairwise human-score contrasts.
Coverage uses Fisher information for \methHuman{} and the judge-pair clustered sandwich of \Cref{app:clustered} for DIAL estimators; intervals at the GACV-selected weight are post-selection.
Since \methPooled{} and \methDIALnoPos{} impose $\widehat b=0$, their order-effect errors coincide, with \methPooled{} drawn as a wider band beneath \methDIALnoPos{} in the corresponding panels.

\paragraph{Numerical summary of \Cref{fig:simu-main}.}
Under exact specification (top row), \methHuman{} tracks the $(N-1)/(2n_{\human})$ first-order risk benchmark of \Cref{thm:population-risk-tradeoff} and \methConsensus{} the $1/(2n_{\human})$ benchmark of its one-dimensional calibration space; at $n_{\human}\le200$, \methConsensus{} reduces excess risk by $8$--$14\times$ and raises Kendall's $\tau$ from $0.75$ to $0.98$ relative to \methHuman{}.
The median excess risk of \methDIALmu{} stays within $20\%$ of \methConsensus{}, whereas \methDIALW{} is $1.3$--$1.6\times$ higher because it estimates an unused disagreement direction.
Under pair-level LLM misspecification (bottom row), \methConsensus{} plateaus and is worse than \methHuman{} for $n_{\llm}\le6400$, while \methDIALmu{} is below both endpoints at every $n_{\llm}$ and within $0.001$ of the oracle; its Kendall's $\tau$ is $0.91$--$0.93$, against $0.89$ for \methHuman{} and $0.78$--$0.90$ for \methConsensus{}.
For order-aware fits, $\operatorname{RMSE}(\widehat b,b)$ falls from $0.37$ to $0.11$ across the bottom row and reaches $0.039$ at $n_{\llm}=20{,}000$, while methods imposing $\widehat b=0$ remain at $0.68$, and cell-clustered coverage is $0.93$--$0.96$.

\subsection{Additional configurations}
\label{app:simulation-additional}

\paragraph{Larger problem.}
\Cref{fig:simu-items20} repeats the experiment at $(N,K,r)=(20,6,2)$.
The main patterns persist: under exact specification \methDIALmu{} approaches \methConsensus{} as human supervision becomes scarce, while under LLM overdispersion it remains below both \methHuman{} and \methConsensus{} across the LLM-budget sweep.
Judge-pair clustered coverage remains between $0.92$ and $0.96$, and \methDIALW{} pays additional variance under the consensus-aligned target because its extra disagreement directions are unnecessary.

\begin{figure}[!ht]
    \centering
    \includegraphics[width=\linewidth]{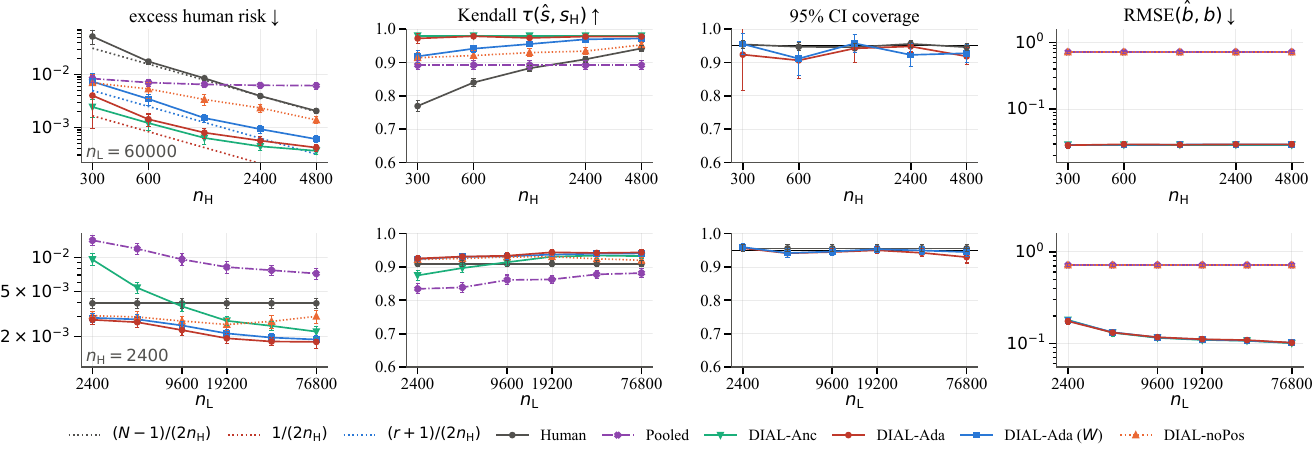}
    \caption{Larger-scale replicate of \Cref{fig:simu-main} with $N=20$, $K=6$, $r=2$, and $50$ replications; rows, columns, and estimators follow \Cref{fig:simu-main}.}
    \label{fig:simu-items20}
\end{figure}

\paragraph{Human target outside the consensus direction.}
\Cref{fig:simu-misaligned} sets $s_{\human}=Wc_{\human}$ with $c_V\sim\cN(0,0.5^2)$, so calibration within $\operatorname{col}(W)$ is correctly specified but consensus-only calibration has nonzero approximation error $\Delta_\mu$.
\methConsensus{} and \methPooled{} plateau at excess risks of about $0.040$ and $0.045$, while \methDIALmu{} shifts toward the human comparisons as $n_{\human}$ grows but cannot represent the missing disagreement direction.
In contrast, \methDIALW{} continues to decrease with the human budget, attaining Kendall's $\tau$ of $0.93$--$0.97$ versus $0.83$--$0.94$ for \methDIALmu{} and coverage of $0.94$--$0.96$ under the correctly specified top-row model.
Under bottom-row LLM overdispersion, \methDIALmu{}, \methDIALW{}, and \methHuman{} are within $0.001$ in excess risk, while \methConsensus{} remains biased.

\begin{figure}[!ht]
    \centering
    \includegraphics[width=\linewidth]{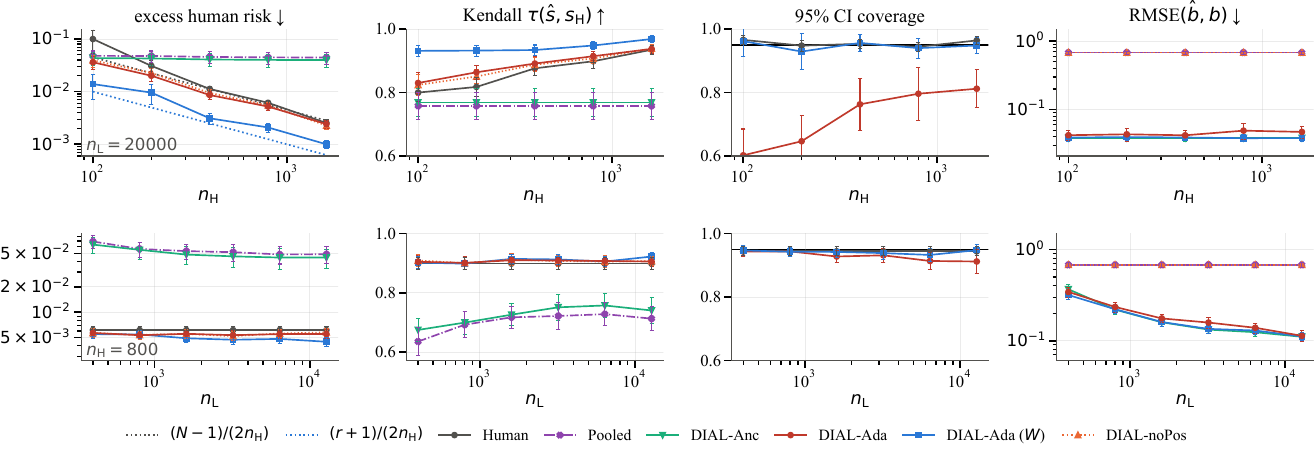}
    \caption{Misaligned-target simulation under the main $(N,K,r)=(10,4,1)$ configuration, with $s_{\human}=Wc_{\human}$ and $c_V\sim\cN(0,0.5^2)$.
    Rows and columns follow \Cref{fig:simu-main}; dotted lines show the first-order excess-risk benchmarks for \methHuman{} and calibration within $\operatorname{col}(W)$.}
    \label{fig:simu-misaligned}
\end{figure}

\paragraph{Pair-varying position effects.}
\Cref{fig:simu-pos-heterogeneity} violates the constant-position-effect working model by replacing $b_k$ with $b_k+\delta_{kij}$, while keeping the latent LLM score structure and human calibration model correctly specified.
As $\sigma_{\mathrm{pos}}$ increases from $0$ to $1$, the excess human risk of \methConsensus{} and \methDIALmu{} rises from $0.0011$ and $0.0013$ to $0.0030$ and $0.0027$ and their Kendall's $\tau$ falls from $0.978$ and $0.972$ to $0.932$ and $0.928$, yet both stay better than \methHuman{} ($0.0059$, $\tau=0.894$) and \methDIALnoPos{} at every $\sigma_{\mathrm{pos}}$.
The fitted $\widehat b_k$ summarizes the misspecified constant-effect component and is evaluated against the judge-level mean $b_k$; so its RMSE grows from $0.039$ to $0.114$ while remaining about six times smaller than the $0.679$ of the order-agnostic \methPooled{} and \methDIALnoPos{}, which impose $\widehat b=0$.

\begin{figure}[!t]
    \centering
    \includegraphics[width=\linewidth]{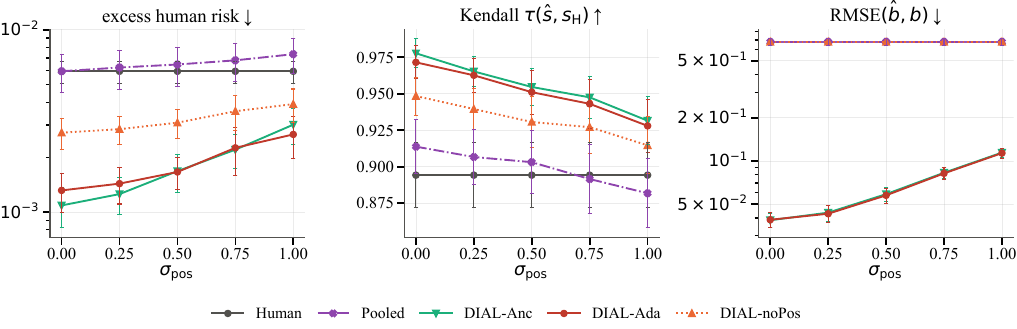}
    \caption{Robustness to heterogeneous position effects under the main $(N,K,r)=(10,4,1)$ configuration with $n_{\human}=800$ and $n_{\llm}=20{,}000$.
    Data are generated with $b_{kij}=b_k+\delta_{kij}$, where the within-judge pair effects have standard deviation $\sigma_{\mathrm{pos}}$, while all methods retain a constant $b_k$ working model.
    Panels report excess human risk, Kendall's $\tau$, and $\operatorname{RMSE}(\widehat b,b)$ over $50$ replications.}
    \label{fig:simu-pos-heterogeneity}
\end{figure}

\clearpage
\section{Real Data}\label{app:sec:real}

\subsection{Datasets and collection}\label{app:subsec:real-evaluation}

We use Chatbot Arena \citep{chiang2024chatbot}, MT-Bench \citep{zheng2023mtbench}, and PandaLM \citep{wang2024pandalm}, whose records contain a prompt, two responses, and a human preference label.
For every record we queried $22$ LLM judges in both display orders: $19$ open-weight models via Ollama and $3$ commercial APIs (\texttt{Claude Haiku 4.5}, \texttt{Gemini 3.1 Flash Lite}, and \texttt{GPT-4.1 nano}); decoding configurations are provided with the released collection kit.
We exclude \texttt{StableLM~2, 12B} because its tie rate exceeds $50\%$ on every dataset, leaving $K=21$ judges.

Because \eqref{eq:binomial-model-llm} and \eqref{eq:binomial-model-human} are binary, we exclude LLM ties and human ties or ``both bad'' labels; treating ties as half wins produced the same qualitative conclusions.
Because asymmetric tie rates can also carry position information, explicit tie modeling remains an extension.
\Cref{tab:real-data} reports the retained and discarded counts.

\begin{table}[h]
\centering
\footnotesize
\setlength{\tabcolsep}{4pt}
\caption{Dataset statistics after preprocessing.
``Kept'' refers to the $21$ retained judges; a cell is a (judge, canonical pair) combination.}
\label{tab:real-data}
\begin{tabular}{lrrr}
\toprule
 & Chatbot Arena & MT-Bench & PandaLM \\
\midrule
Records (prompts with a human label) & 7{,}738 & 1{,}199 & 945 \\
Candidate models $N$ & 20 & 6 & 5 \\
Observed pairs / possible & 189 / 190 & 15 / 15 & 10 / 10 \\
Human labels a / b / tie / both bad & 2{,}771 / 2{,}659 / 814 / 1{,}494 & 432 / 410 / 357 / 0 & 398 / 444 / 103 / 0 \\
Decisive human labels & 5{,}430 & 842 & 842 \\
LLM judgments, kept judges & 321{,}668 & 49{,}172 & 39{,}324 \\
\quad of which ties & 21{,}199 (6.6\%) & 3{,}289 (6.7\%) & 4{,}374 (11.1\%) \\
\quad tie-excluded $n_{\llm}$ & 300{,}469 & 45{,}883 & 34{,}950 \\
Display order $+1$ / $-1$ & 150{,}299 / 150{,}170 & 22{,}909 / 22{,}974 & 17{,}443 / 17{,}507 \\
Cells observed / possible & 3{,}965 / 3{,}990 & 315 / 315 & 210 / 210 \\
$n_{kij}$ per cell (min / mean / max) & 1 / 75.8 / 381 & 64 / 145.7 / 215 & 114 / 166.4 / 208 \\
Judge tie rate (min-max) & 0.002-0.175 & 0.004-0.145 & 0.004-0.247 \\
\bottomrule
\end{tabular}
\end{table}

\begin{table}[h]
\centering
\footnotesize
\setlength{\tabcolsep}{4pt}
\caption{Candidate models treated as items after preprocessing.}
\label{tab:real-items}
\begin{tabularx}{0.97\linewidth}{lP}
\toprule
Dataset & Candidate models \\
\midrule
Chatbot Arena ($N=20$) &
\texttt{RWKV-4-Raven-14B}, \texttt{alpaca-13b}, \texttt{chatglm-6b}, \texttt{claude-instant-v1}, \texttt{claude-v1}, \texttt{dolly-v2-12b}, \texttt{fastchat-t5-3b}, \texttt{gpt-3.5-turbo}, \texttt{gpt-4}, \texttt{gpt4all-13b-snoozy}, \texttt{guanaco-33b}, \texttt{koala-13b}, \texttt{llama-13b}, \texttt{mpt-7b-chat}, \texttt{oasst-pythia-12b}, \texttt{palm-2}, \texttt{stablelm-tuned-alpha-7b}, \texttt{vicuna-13b}, \texttt{vicuna-7b}, \texttt{wizardlm-13b} \\
\addlinespace[0.4ex]
MT-Bench ($N=6$) &
\texttt{alpaca-13b}, \texttt{claude-v1}, \texttt{gpt-3.5-turbo}, \texttt{gpt-4}, \texttt{llama-13b}, \texttt{vicuna-13b-v1.2} \\
\addlinespace[0.4ex]
PandaLM ($N=5$) &
\texttt{bloom-7b}, \texttt{cerebras-gpt-6.7B}, \texttt{llama-7b}, \texttt{opt-7b}, \texttt{pythia-6.9b} \\
\bottomrule
\end{tabularx}
\end{table}

Chatbot Arena is the largest and sparsest benchmark, whereas MT-Bench and PandaLM have complete pair coverage but only $6$ and $5$ candidate models (\Cref{tab:real-items}); PandaLM's original display order is one-sided.
Each record contributes up to $42$ LLM judgments but at most one human label, motivating adaptive weighting when within-record judgments are dependent.

\subsection{Protocol and estimators}\label{app:subsec:real-protocol}

We split records into train/test pools ($70/30\%$ for Chatbot Arena and $50/50\%$ otherwise), keeping both display orders of each record together.
Human calibration samples are drawn from decisive train labels, while LLM samples use tie-excluded train judgments subject to the specified judge panel and display design.
We report excess held-out human log loss relative to a test-pool BTL fit and Kendall's $\tau$ against its ranking.
All curves show means over up to $50$ record splits with $\pm1.96$ Monte Carlo standard errors.

We use the estimators in \Cref{sec:experiments} with $r=1$; \methDIALW{} is retained in \Cref{fig:real-main} but omitted from the remaining curve figures for readability.
Additional fixed-weight, oracle-weight, and rank-selection diagnostics use the same GACV grid and finite-fit rules as \Cref{app:lambda-selection}.
Judges with no retained judgments after subsampling are removed for that replication.
A bound-doubling sensitivity check gives essentially unchanged results except in the sparsest LLM regimes, where the LLM-only fit is numerically unresolved at either bound; at the smallest LLM budgets and judge panels we therefore read the curves as trends rather than as exact magnitudes or orderings.
Replications in which no candidate passes the guards of \Cref{app:lambda-selection} have no admissible fit and are reported as failures rather than averaged, which affects at most three of the $50$ splits in any reported cell.

\subsection{Position robustness evaluations and diagnostics}\label{app:subsec:real-position}

\begin{figure}[!t]
    \centering
    \includegraphics[width=\linewidth]{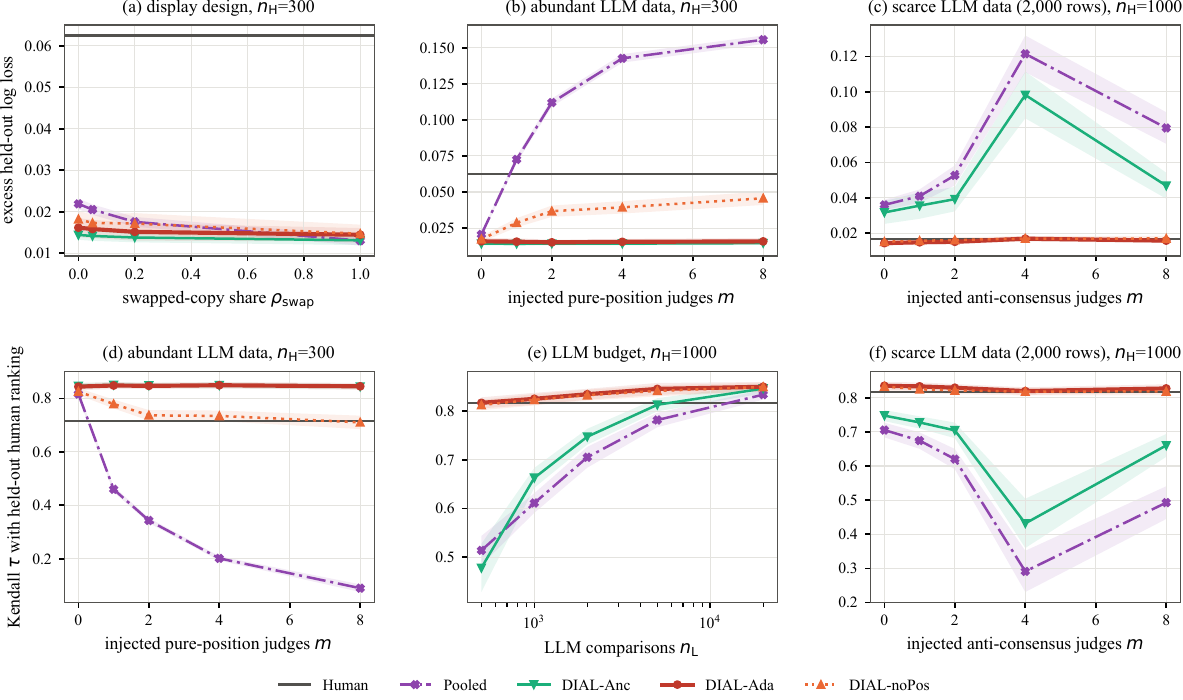}
    \caption{Real-data robustness sweeps on Chatbot Arena using five order-sensitive judges.
    Top: excess held-out log loss under (a) display imbalance, (b) injected pure-position judges, and (c) injected anti-consensus judges with scarce LLM data.
    Bottom: Kendall's $\tau$ under (d) the pure-position perturbation, (e) a balanced LLM-budget sweep, and (f) the anti-consensus perturbation.
    Means over up to $50$ record splits with $\pm1.96$ Monte Carlo standard errors.}
    \label{fig:real-curves}
\end{figure}

\begin{figure}[!t]
    \centering
    \includegraphics[width=0.8\linewidth]{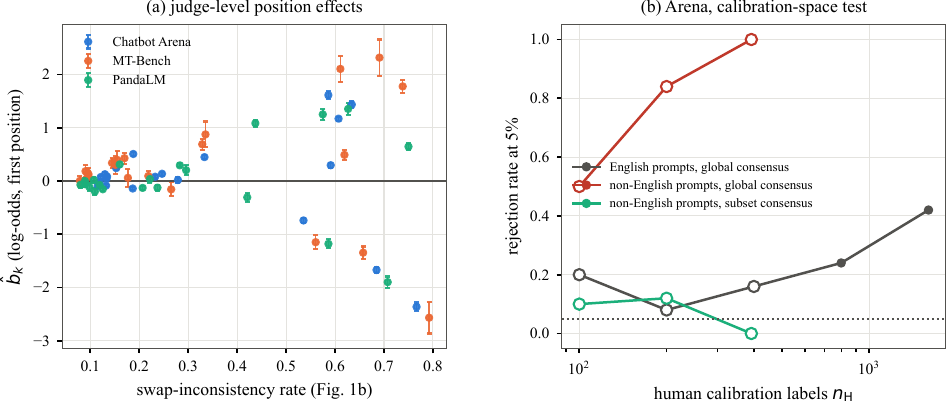}
    \caption{Additional real-data diagnostics.
    (a) LLM-only fitted order effects of the $21$ judges, with cell-clustered $95\%$ intervals, against swap-inconsistency rate.
    (b) Rejection rate of the calibration-space likelihood-ratio test of $s_{\human}=\alpha\mu$ on Chatbot Arena; hollow markers denote budgets at which the unrestricted human-only MLE failed in some replications, and at the two smallest budgets it failed in most, so those points are not interpreted.}
    \label{fig:real-diagnostics}
\end{figure}

\paragraph{Display imbalance and pure-position judges.}
The five-judge panel contains the judges with the largest swap-inconsistency on Chatbot Arena and is fixed across datasets, with $n_{\human}=300,80,60$.
We retain every canonical-first judgment and a fraction $\rho_{\mathrm{swap}}$ of swapped copies, with $\rho_{\mathrm{swap}}=1$ balanced and $\rho_{\mathrm{swap}}=0.05$ used for the one-sided design in \Cref{fig:real-main}; \Cref{fig:real-curves}(a) sweeps $\rho_{\mathrm{swap}}$ to zero.
For the injected perturbation, each pure-position judge selects the first-displayed response with probability $0.9$, independent of item identity.
Order-aware DIAL remains stable as display imbalance increases and pure-position judges are added, whereas the order-agnostic baselines deteriorate (\Cref{fig:real-main,fig:real-curves}(a,b,d)).

\paragraph{Robustness to imperfect LLM consensus.}
For the LLM-budget experiments, the balanced five-judge panel is subsampled to $n_{\llm}$ rows at fixed $n_{\human}=1000,80,60$ on Chatbot Arena, MT-Bench, and PandaLM, respectively.
Injected anti-consensus judges reproduce the flipped verdicts of a randomly selected panel judge; \Cref{fig:real-curves}(c,f) uses the scarce-LLM setting.
Under this corruption, \methDIALmu{} shifts toward the human evidence and is substantially more stable than fixed anchoring, although behavior at the strongest corruption is non-monotone.

\paragraph{Judge-specific position effects.}
\Cref{fig:real-diagnostics}(a) compares the LLM-only $\widehat b_k$ from the full balanced data with each judge's swap-inconsistency rate, using cell-clustered intervals from \Cref{app:clustered}.
The fitted effects vary substantially across judges in both magnitude and sign and broadly track empirical order sensitivity.

\subsection{Human-label efficiency and adaptive calibration}\label{app:subsec:real-efficiency}

\paragraph{Calibration-space diagnostic.}
Chatbot Arena records are split by recorded prompt language ($6{,}910$ English, $828$ other), with $30\%$ of each subset held out for testing.
We apply \Cref{prop:calibration-test} at the $5\%$ level using either the global rank-one consensus or a consensus refitted within the subset.
The statistic is defined only where the unrestricted human BTL fit is finite, so rejection rates are read at the budgets where that fit exists in all but a few splits: at the full non-English calibration pool it is finite in $48$ of the $50$ splits, and restricting to those splits leaves the rejection rates unchanged at $1.00$ for the global consensus and $0.00$ for the refitted one, whereas at $100$ and $200$ labels it is finite in only $4\%$ and $38\%$ of splits and those budgets are not interpreted.
\Cref{fig:real-diagnostics}(b) shows a pronounced mismatch of the global consensus for non-English prompts that largely disappears after subset-specific refitting, which also improves held-out ranking and log loss.

\paragraph{Human-label efficiency with abundant LLM data.}
With all $21$ judges, \methConsensus{} and \methDIALmu{} remain strong at small human-label budgets, where \methHuman{} often has no finite MLE (\Cref{fig:real-efficiency}, top).
When LLM evidence is abundant and balanced, adaptive updating offers little gain over anchoring; across ranks, the oracle calibration gains are negligible, providing little evidence that disagreement directions improve the pooled human target.

\paragraph{Adaptive calibration with scarce LLM data.}
When the five-judge panel is subsampled to small $n_{\llm}$, the LLM-only consensus becomes noisy and \methConsensus{} degrades, whereas \methDIALmu{} uses the human comparisons to update the consensus and remains competitive with the better endpoint across datasets (\Cref{fig:real-efficiency}, bottom).
As $n_{\llm}$ grows, the selected fit moves toward anchoring and the estimators converge.

\begin{figure}[!ht]
    \centering
    \includegraphics[width=\linewidth]{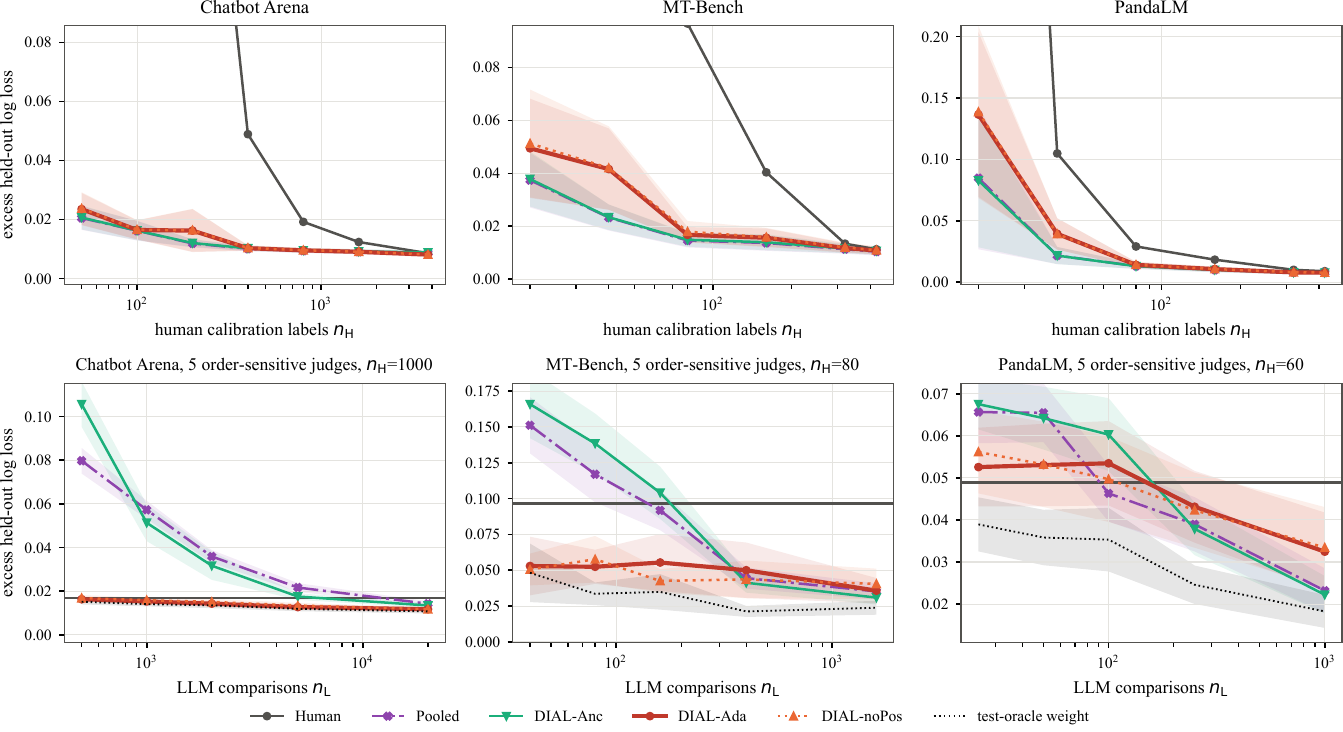}
    \caption{Human-label efficiency and LLM-budget sensitivity.
    Top: excess held-out human log loss against the human budget $n_{\human}$ with the as-collected balanced judgments of all $21$ judges; \methHuman{} is off the chart at the smallest budgets, where its MLE does not exist.
    Bottom: the five order-sensitive judges' balanced judgments subsampled to $n_{\llm}$ rows at a fixed calibration sample ($n_{\human}=1{,}000$, $80$, $60$); the dotted line is the candidate weight minimizing the test log loss.
    Means over up to $50$ record splits with $\pm1.96$ Monte Carlo standard errors.}
    \label{fig:real-efficiency}
\end{figure}

\paragraph{Judge-panel and budget sensitivity.}

\Cref{fig:panel-human-budget} compares six-judge open-weight panels with the full $21$-judge panel across human-label budgets.
The small panel remains close to the larger panels on Chatbot Arena and PandaLM, with MT-Bench the main exception; at the smallest human budgets, \methDIALmu{} is less stable than \methConsensus{} on the small panel.
Thus the human-label gains of DIAL do not require the full judge panel, although weaker panels leave less margin for adaptive updating.

\begin{figure}[!ht]
    \centering
    \includegraphics[width=\linewidth]{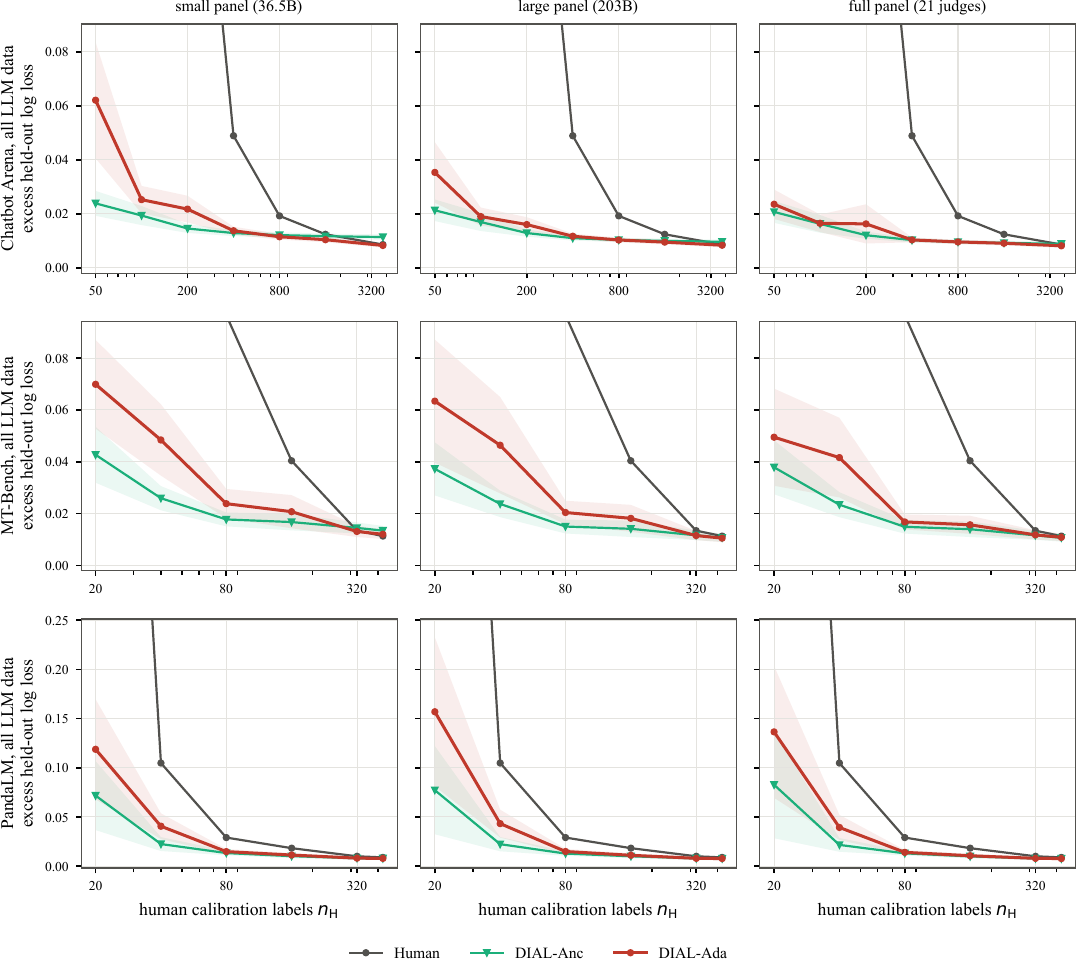}
    \caption{Excess held-out human log loss against the human budget $n_{\human}$, as in \Cref{fig:real-efficiency} (top), on the small, large, and full judge panels (columns) across Chatbot Arena, MT-Bench, and PandaLM (rows).
    The small and large panels each contain six open-weight judges with disclosed parameter counts $\le9$B and $\ge20$B per judge, respectively ($36.5$B versus $203$B total); full is the $21$-judge panel.
    All curves use up to the same $50$ record splits, with $\pm1.96$ Monte Carlo standard errors; the $y$-axis is shared within each row.}
    \label{fig:panel-human-budget}
\end{figure}

\begin{figure}[p]
    \centering
    \includegraphics[width=\linewidth]{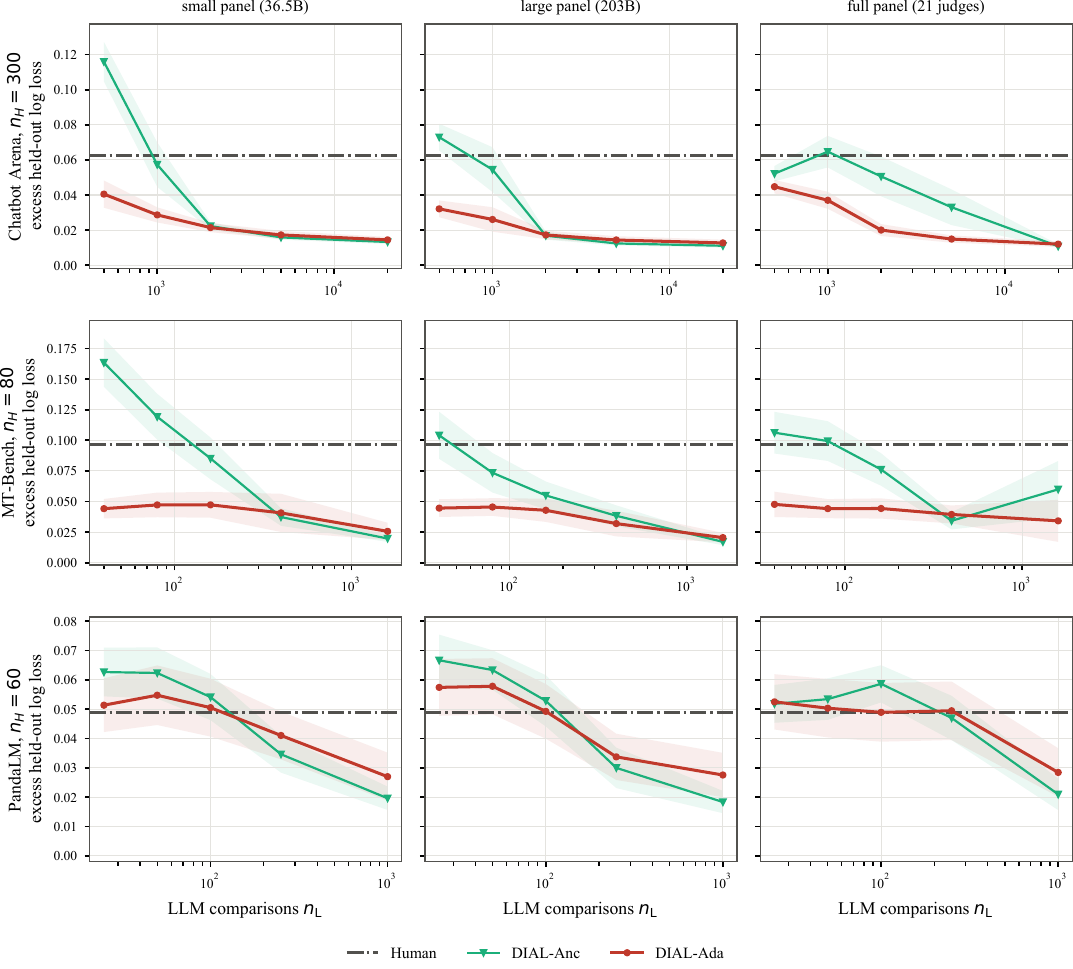}
    \caption{Complementary LLM-budget sweep for the small, large, and full judge panels of \Cref{fig:panel-human-budget}.
    The human calibration budget is fixed at $n_{\human}=300$, $80$, and $60$ on Chatbot Arena, MT-Bench, and PandaLM, respectively, while balanced LLM training judgments are subsampled uniformly to $n_{\llm}$ rows.
    Curves compare \methDIALmu{}, at the GACV-selected weight, with \methConsensus{} and \methHuman{}.
    All curves use up to the same $50$ record splits, with $\pm1.96$ Monte Carlo standard errors; the $y$-axis is shared within each row.}
    \label{fig:panel-llm-budget}
\end{figure}

\paragraph{Adaptive calibration across LLM budgets.}
With abundant LLM judgments and limited human supervision, \methConsensus{} is competitive because the LLM consensus is well estimated.
As the LLM budget decreases, \methDIALmu{} increasingly improves by using the human comparisons to update the noisier consensus estimate (\Cref{fig:panel-llm-budget}).
This pattern holds across judge panels and complements the human-budget sweep in \Cref{fig:panel-human-budget}.

\end{document}